\newcolumntype{L}[1]{>{\raggedright\let\newline\\\arraybackslash\hspace{0pt}}m{#1}}
\newcolumntype{C}[1]{>{\centering\let\newline\\\arraybackslash\hspace{0pt}}m{#1}}
\newcolumntype{R}[1]{>{\raggedleft\let\newline\\\arraybackslash\hspace{0pt}}m{#1}}
\let\MYcaption\@makecaption
\let\@makecaption\MYcaption
\let\oldgls\gls
\let\oldglspl\glspl
\newcommand\fussy@ifnextchar[3]{%
	\let\reserved@d=#1%
	\def\reserved@a{#2}%
	\def\reserved@b{#3}%
	\futurelet\@let@token\fussy@ifnch}
\def\fussy@ifnch{%
	\ifx\@let@token\reserved@d
		\let\reserved@c\reserved@a
	\else
		\let\reserved@c\reserved@b
	\fi
	\reserved@c}
\renewcommand{\gls}[1]{%
\oldgls{#1}\fussy@ifnextchar.{\@checkperiod}{\@}}
\renewcommand{\glspl}[1]{%
\oldglspl{#1}\fussy@ifnextchar.{\@checkperiod}{\@}}
\newcommand{\@checkperiod}[1]{%
	\ifnum\sfcode`\.=\spacefactor\else#1\fi
}
\newacronym{wrt}{w.r.t.}{with respect to}
\newacronym{RHS}{R.H.S.}{right-hand side}
\newacronym{LHS}{L.H.S.}{left-hand side}
\newacronym{iid}{i.i.d.}{independent and identically distributed}
\let\saved@bibitem\@bibitem\makeatother
\let\@bibitem\saved@bibitem\makeatother
\crefname{equation}{}{}
\Crefname{equation}{}{}
\crefname{claim}{claim}{claims}
\crefname{step}{step}{steps}
\crefname{line}{line}{lines}
\crefname{condition}{condition}{conditions}
\crefname{dmath}{}{}
\crefname{dseries}{}{}
\crefname{dgroup}{}{}
\crefname{Problem}{Problem}{Problems}
\crefname{Theorem}{Theorem}{Theorems}
\crefname{Corollary}{Corollary}{Corollaries}
\crefname{Proposition}{Proposition}{Propositions}
\crefname{Lemma}{Lemma}{Lemmas}
\crefname{Definition}{Definition}{Definitions}
\crefname{Example}{Example}{Examples}
\crefname{Assumption}{Assumption}{Assumptions}
\crefname{Remark}{Remark}{Remarks}
\crefname{Rem}{Remark}{Remarks}
\crefname{remarks}{Remarks}{Remarks}
\crefname{Supplement}{Supplement}{Supplements}
\crefname{Exercise}{Exercise}{Exercises}
\crefname{Theorem_A}{Theorem}{Theorems}
\crefname{Corollary_A}{Corollary}{Corollaries}
\crefname{Proposition_A}{Proposition}{Propositions}
\crefname{Lemma_A}{Lemma}{Lemmas}
\crefname{Definition_A}{Definition}{Definitions}
		\let\Cref\crtCref
		\let\cref\crtcref
\def\cleartheorem#1{%
    \expandafter\let\csname#1\endcsname\relax
    \expandafter\let\csname c@#1\endcsname\relax
}
\def\clearthms#1{ \@for\tname:=#1\do{\cleartheorem\tname} }
		\newtheorem{Theorem}{Theorem}
		\newtheorem{Corollary}{Corollary}
		\newtheorem{Proposition}{Proposition}
		\newtheorem{Theorem}{Theorem}
		\newtheorem{Proposition}[Theorem]{Proposition}
	\newtheorem{Remark}{Remark}
	\newtheorem{Assumption}{Assumption}
\theoremstyle{remark}
\theoremstyle{plain}
\newcommand{\qednew}{\nobreak \ifvmode \relax \else
		\ifdim\lastskip<1.5em \hskip-\lastskip
			\hskip1.5em plus0em minus0.5em \fi \nobreak
		\vrule height0.75em width0.5em depth0.25em\fi}
\newcommand{\nn}{\nonumber\\ }
\NewDocumentCommand{\movedownsub}{e{^_}}{%
	\IfNoValueTF{#1}{%
		\IfNoValueF{#2}{^{}}% neither ^ nor _, do nothing; if no ^ but _, add ^{}
	}{%
		^{#1}% add superscript if present
	}%
	\IfNoValueF{#2}{_{#2}}% add subscript if present
}
\let\latexchi\chi
\RenewDocumentCommand{\chi}{}{\latexchi\movedownsub}
\newcommand{\Real}{\mathbb{R}}
\newcommand{\calF}{\mathcal{F}}
\newcommand{\calG}{\mathcal{G}}
\newcommand{\calM}{\mathcal{M}}
\newcommand{\bbA}{\mathbb{A}}
\newcommand{\bbB}{\mathbb{B}}
\newcommand{\bbR}{\mathbb{R}}
\DeclareSymbolFont{bsfletters}{OT1}{cmss}{bx}{n}
\DeclareSymbolFont{ssfletters}{OT1}{cmss}{m}{n}
\DeclareMathSymbol{\bsfGamma}{0}{bsfletters}{'000}
\DeclareMathSymbol{\ssfGamma}{0}{ssfletters}{'000}
\DeclareMathSymbol{\bsfDelta}{0}{bsfletters}{'001}
\DeclareMathSymbol{\ssfDelta}{0}{ssfletters}{'001}
\DeclareMathSymbol{\bsfTheta}{0}{bsfletters}{'002}
\DeclareMathSymbol{\ssfTheta}{0}{ssfletters}{'002}
\DeclareMathSymbol{\bsfLambda}{0}{bsfletters}{'003}
\DeclareMathSymbol{\ssfLambda}{0}{ssfletters}{'003}
\DeclareMathSymbol{\bsfXi}{0}{bsfletters}{'004}
\DeclareMathSymbol{\ssfXi}{0}{ssfletters}{'004}
\DeclareMathSymbol{\bsfPi}{0}{bsfletters}{'005}
\DeclareMathSymbol{\ssfPi}{0}{ssfletters}{'005}
\DeclareMathSymbol{\bsfSigma}{0}{bsfletters}{'006}
\DeclareMathSymbol{\ssfSigma}{0}{ssfletters}{'006}
\DeclareMathSymbol{\bsfUpsilon}{0}{bsfletters}{'007}
\DeclareMathSymbol{\ssfUpsilon}{0}{ssfletters}{'007}
\DeclareMathSymbol{\bsfPhi}{0}{bsfletters}{'010}
\DeclareMathSymbol{\ssfPhi}{0}{ssfletters}{'010}
\DeclareMathSymbol{\bsfPsi}{0}{bsfletters}{'011}
\DeclareMathSymbol{\ssfPsi}{0}{ssfletters}{'011}
\DeclareMathSymbol{\bsfOmega}{0}{bsfletters}{'012}
\DeclareMathSymbol{\ssfOmega}{0}{ssfletters}{'012}
\newcommand*\rel@kern[1]{\kern#1\dimexpr\macc@kerna}
\newcommand*\widebar[1]{%
  \begingroup
  \def\mathaccent##1##2{%
    \rel@kern{0.8}%
    \overline{\rel@kern{-0.8}\macc@nucleus\rel@kern{0.2}}%
    \rel@kern{-0.2}%
  }%
  \macc@depth\@ne
  \let\math@bgroup\@empty \let\math@egroup\macc@set@skewchar
  \mathsurround\z@ \frozen@everymath{\mathgroup\macc@group\relax}%
  \macc@set@skewchar\relax
  \let\mathaccentV\macc@nested@a
  \macc@nested@a\relax111{#1}%
  \endgroup
}
\newcommand{\ifbcdot}[1]{\ifblank{#1}{\cdot}{#1}}
\DeclarePairedDelimiterX\abs[1]{\lvert}{\rvert}{\ifbcdot{#1}}
\DeclarePairedDelimiterX\parens[1]{(}{)}{\ifbcdot{#1}}
\DeclarePairedDelimiterX\brk[1]{[}{]}{\ifbcdot{#1}}
\DeclarePairedDelimiterX\braces[1]{\{}{\}}{\ifbcdot{#1}}
\DeclarePairedDelimiterX\angles[1]{\langle}{\rangle}{\ifblank{#1}{\cdot,\cdot}{#1}}
\DeclarePairedDelimiterX\ip[2]{\langle}{\rangle}{\ifbcdot{#1},\ifbcdot{#2}}
\DeclarePairedDelimiterX\norm[1]{\lVert}{\rVert}{\ifbcdot{#1}}
\DeclarePairedDelimiterX\ceil[1]{\lceil}{\rceil}{\ifbcdot{#1}}
\DeclarePairedDelimiterX\floor[1]{\lfloor}{\rfloor}{\ifbcdot{#1}}
\DeclareFontFamily{U}{matha}{\hyphenchar\font45}
\DeclareFontShape{U}{matha}{m}{n}{
      <5> <6> <7> <8> <9> <10> gen * matha
      <10.95> matha10 <12> <14.4> <17.28> <20.74> <24.88> matha12
      }{}
\DeclareSymbolFont{matha}{U}{matha}{m}{n}
\DeclareFontFamily{U}{mathx}{\hyphenchar\font45}
\DeclareFontShape{U}{mathx}{m}{n}{
      <5> <6> <7> <8> <9> <10>
      <10.95> <12> <14.4> <17.28> <20.74> <24.88>
      mathx10
      }{}
\DeclareSymbolFont{mathx}{U}{mathx}{m}{n}
\DeclareMathDelimiter{\vvvert}{0}{matha}{"7E}{mathx}{"17}
\DeclarePairedDelimiterX\vertiii[1]{\vvvert}{\vvvert}{\ifbcdot{#1}}
\DeclarePairedDelimiterXPP\trace[1]{\operatorname{Tr}}{(}{)}{}{\ifbcdot{#1}} % column vector
\DeclarePairedDelimiterXPP\col[1]{\operatorname{col}}{\{}{\}}{}{\ifbcdot{#1}} % column vector
\DeclarePairedDelimiterXPP\row[1]{\operatorname{row}}{\{}{\}}{}{\ifbcdot{#1}} % row vector
\DeclarePairedDelimiterXPP\erf[1]{\operatorname{erf}}{(}{)}{}{\ifbcdot{#1}}
\DeclarePairedDelimiterXPP\erfc[1]{\operatorname{erfc}}{(}{)}{}{\ifbcdot{#1}}
\DeclarePairedDelimiterXPP\KLD[2]{D}{(}{)}{}{\ifbcdot{#1}\, \delimsize\|\, \ifbcdot{#2}} % KL divergence
\DeclarePairedDelimiterXPP\op[2]{\operatorname{#1}}{(}{)}{}{#2} % general operator
\DeclarePairedDelimiterXPP\indicate[1]{{\bf 1}}{\{}{\}}{}{\ifbcdot{#1}}
\newcommand{\indicator}[1]{{\bf 1}_{\braces*{\ifbcdot{#1}}}}
\NewDocumentCommand\ofrac{s m}{%
	\IfBooleanTF#1%
	{\dfrac{1}{#2}}%
	{\frac{1}{#2}}%
}
\NewDocumentCommand\ddfrac{s m m}{%
	\IfBooleanTF#1%
	{\dfrac{\mathrm{d} {#2}}{\mathrm{d} {#3}}}%
	{\frac{\mathrm{d} {#2}}{\mathrm{d} {#3}}}%
}
\NewDocumentCommand\ppfrac{s m m}{%
	\IfBooleanTF#1%
	{\dfrac{\partial {#2}}{\partial {#3}}}%
	{\frac{\partial {#2}}{\partial {#3}}}%
}
\providecommand\given{}
\DeclarePairedDelimiterX\Set[2]\{\}{%
\renewcommand\given{\SetSymbol[\delimsize]{#1}}
#2
}
\DeclarePairedDelimiterX\Setc[1]\{\}{%
\renewcommand\given{\SetSymbol{:}}
#1
}
\NewDocumentCommand\set{s o m}{%
	\IfBooleanTF#1%
	{\IfValueTF{#2}{\Set*{#2}{#3}}{\Setc*{#3}}}%
	{\IfValueTF{#2}{\Set{#2}{#3}}{\Setc{#3}}}%
}
\NewDocumentCommand{\evalat}{ s O{\big} m e{_^} }{%
\IfBooleanTF{#1}%
{\left. #3 \right|}{#3#2|}%
\IfValueT{#4}{_{#4}}%
\IfValueT{#5}{^{#5}}%
}
\providecommand\given{}
\DeclarePairedDelimiterXPP\cprob[1]{}(){}{
\renewcommand\given{\nonscript\,\delimsize\vert\allowbreak\nonscript\,\mathopen{}}%
\DeclarePairedDelimiterXPP\cexp[1]{}[]{}{
\renewcommand\given{\nonscript\,\delimsize\vert\allowbreak\nonscript\,\mathopen{}}%
#1%
}
\DeclareDocumentCommand \P { s e{_^} d() g } {%
	\mathbb{P}%
	\IfBooleanTF{#1}%
		{
			\IfValueT{#2}{_{#2}}%
			\IfValueT{#3}{^{#3}}%
			\IfValueTF{#5}{\cprob{#4 \given #5}}{\IfValueT{#4}{\cprob{#4}}}%
		}%
		{
			\IfValueT{#2}{_{#2}}%
			\IfValueT{#3}{^{#3}}%
			\IfValueTF{#5}{\cprob*{#4 \given #5}}{\IfValueT{#4}{\cprob*{#4}}}%
		}%
}
\DeclareDocumentCommand \E { s e{_^} o g } {%
	\mathbb{E}%
	\IfBooleanTF{#1}%
		{
			\IfValueT{#2}{_{#2}}%
			\IfValueT{#3}{^{#3}}%
			\IfValueTF{#5}{\cexp{#4 \given #5}}{\IfValueT{#4}{\cexp{#4}}}%
		}%
		{
			\IfValueT{#2}{_{#2}}%
			\IfValueT{#3}{^{#3}}%	
			\IfValueTF{#5}{\cexp*{#4 \given #5}}{\IfValueT{#4}{\cexp*{#4}}}%		
			%\IfValueT{#4}{\cexp*{#4}}%
		}%
}
\NewDocumentCommand \dist {m o o} {%
\mathrm{#1}\left(%
	\IfValueT{#3}{%
		\tl_if_blank:nTF{ #3 }{\cdot\, \middle|\, }{#3\, \middle|\, }%
	}
	\IfValueT{#2}{#2}%
\right)%
}
\NewDocumentCommand {\cbrace} {t+ D[]{black} D(){\widthof{#5}} m m } {%
	\begingroup%
		\color{#2}
		\IfBooleanTF{#1}{%
			\overbrace{#4}^%
		}{
			\underbrace{#4}_%
		}%
		{\parbox[c]{#3}{\centering\footnotesize{#5}}}%
	\endgroup% 
}
\let\oldforall\forall
\renewcommand{\forall}{\oldforall \, }
\let\oldexist\exists
\renewcommand{\exists}{\oldexist \, }
\DeclareDocumentCommand{\includeCroppedPdf}{ o O{./Figures/} m }{
	\IfFileExists{#2#3-crop.pdf}{}{%
		\immediate\write18{pdfcrop #2#3.pdf #2#3-crop.pdf}}%
	\includegraphics[#1]{#2#3-crop.pdf}
}
\newcommand*{\addFileDependency}[1]{% argument=file name and extension
  \typeout{(#1)}
  \@addtofilelist{#1}
  \IfFileExists{#1}{}{\typeout{No file #1.}}
}
\definecolor{gray90}{gray}{0.9}
	\newcommand{\red}[1]{{\color{red} #1}}
	\newcommand{\blue}[1]{{{\color{blue} #1}}}
	\newcommand{\msout}[1]{\text{\color{green} \sout{\ensuremath{#1}}}}
	\newcommand{\del}[1]{{\color{green}\ifmmode \msout{#1}\else\sout{#1}\fi}}
	\newcommand{\red}[1]{#1}
	\newcommand{\blue}[1]{#1}
	\newcommand{\msout}[1]{#1}
	\newcommand{\del}[1]{#1}
\newcommand{\hhide}[1]{}
	\def\@testdef #1#2#3{%
		\def\reserved@a{#3}\expandafter \ifx \csname #1@#2\endcsname
			\reserved@a  \else
			\typeout{^^Jlabel #2 changed:^^J%
				\meaning\reserved@a^^J%
				\expandafter\meaning\csname #1@#2\endcsname^^J}%
			\@tempswatrue \fi}
\newcommand{\pfgm}{p(\varphi(x), \psi(\calG^{y}) \mid x \leftrightarrow y)}
\newcommand{\pfgu}{p(\varphi(x), \psi(\calG^{y}) \mid x \nleftrightarrow y)}
\newcommand{\pfgGm}[1]{p(\varphi(x), \psi(\calG^{y}) \mid #1, x \leftrightarrow y)}
\newcommand{\pfgGu}[1]{p(\varphi(x), \psi(\calG^{y}) \mid #1, x \nleftrightarrow y)}
\newcommand{\Gm}{\calG^x \leftrightarrow \calG^y}
\newcommand{\Gu}{\calG^x \nleftrightarrow \calG^y}
\newacronym{GNN}{GNN}{graph neural network}
\begin{document}

%%%=================Key comments=======================================================
%%%%\red{[General: Please minimize use of ``Besides, ...'' Write in a more active or direct voice.]}
%%%%\red{[Take note of when ``the'' is required and when it is not. Use Grammarly to help you.]}
% \red{[``Includes'' means non-exhaustive, do not add ``etc.'' and ``so on'' at the end.]}
% \red{[''The rest procedures'' is incorrect. Should be ``the rest of the procedure'' or ``the remaining steps''.]}

% \red{[You have a tendency to write ``As for xxx, ...'' Try not to start a sentence like that. Be specific in what you want to say. ``To see why yyyy is true for xxx, ...'', ``In xxx, we do yyy because ...'']}

% \red{[Some sentences can be more concise. E.g., ``from the results in Table xxx'' can be written simply as ``from Table xxx''.]}
%%%========================================================================

\title{Image Patch-Matching with Graph-Based Learning in Street Scenes}

\author{Rui~She,~Qiyu~Kang,~Sijie~Wang,~Wee~Peng~Tay,~\IEEEmembership{Senior~Member,~IEEE,}
~Yong~Liang~Guan,~\IEEEmembership{Senior~Member,~IEEE,}~Diego~Navarro~Navarro,~and~Andreas~Hartmannsgruber
% <-this % stops a space
\thanks{This work is supported by the Singapore Ministry of Education Academic Research Fund Tier 2 grant MOE-T2EP20220-0002 and the RIE2020 Industry Alignment Fund-Industry Collaboration Projects (IAF-ICP) Funding Initiative, as well as cash and in-kind contribution from the industry partner(s).}
\thanks{The first two authors, R. She and Q. Kang, contributed equally to this work.}
\thanks{R. She, Q. Kang, S. Wang, W. P. Tay and Y. L. Guan are with the Continental-NTU Corporate Lab, 
Nanyang Technological University, Singapore 639798 (emails: {\{rui.she@;wang1679@e.;qiyu.kang@; wptay@;eylguan@\}ntu.edu.sg}).}
% <-this % stops a space
%\thanks{Manuscript received April 19, 2021; revised August 16, 2021.}
\thanks{D. N. Navarro and A. Hartmannsgruber are with Continental Automotive Singapore Pte. Ltd., 80 Boon Keng Road, 339780, Singapore. (emails: {\{diego.navarro.navarro; andreas.hartmannsgruber\}@continental.com}).}
}

% The paper headers
%\markboth{Journal of \LaTeX\ Class Files,~Vol.~14, No.~8, August~2021}%
%{Shell \MakeLowercase{\textit{et al.}}: A Sample Article Using IEEEtran.cls for IEEE Journals}

%\IEEEpubid{0000--0000/00\$00.00~\copyright~2021 IEEE}
% Remember, if you use this you must call \IEEEpubidadjcol in the second
% column for its text to clear the IEEEpubid mark.

\maketitle

\begin{abstract}
Matching landmark patches from a real-time image captured by an on-vehicle camera with landmark patches in an image database plays an important role in various computer perception tasks for autonomous driving. Current methods focus on local matching for regions of interest and do not take into account spatial neighborhood relationships among the image patches, which typically correspond to objects in the environment. In this paper, we construct a spatial graph with the graph vertices corresponding to patches and edges capturing the spatial neighborhood information. We propose a joint feature and metric learning model with graph-based learning. We provide a theoretical basis for the graph-based loss by showing that the information distance between the distributions conditioned on matched and unmatched pairs is maximized under our framework. We evaluate our model using several street-scene datasets and demonstrate that our approach achieves state-of-the-art matching results.
\end{abstract}

\begin{IEEEkeywords}
Image patch-matching, graph neural network, Kullback-Leibler divergence, information distance maximization, visual place recognition
\end{IEEEkeywords}

\section{Introduction}\label{sect:introduction}

\IEEEPARstart{A}{s} a critical and fundamental technique in visual perception, image matching is widely used in many applications, such as image retrieval \cite{zhang2017landmark} and vehicle re-identification \cite{zhu2019vehicle}.
Conceptually, the target of a matching task is to solve the similarity correspondence problem for contents from an image pair \cite{wang2020deep,quan2021multi,liao2011nonrigid}. 
In landmark-based street-scene applications, semantic objects such as traffic signs, traffic lights and road-side poles \cite{engel2019deeplocalization,wang2020robust,zhu2007ten} often serve as landmarks. The correspondence between the landmark patches captured at different locations may be further utilized as cornerstones to solve other problems, including loop-closure detection in simultaneous localization and mapping (SLAM) \cite{wang2020robust,vincent2020dynamic}, place recognition \cite{sunderhauf2015place,hausler2021patch}, multi-view camera relocalization \cite{xue2020learning}, landmark-LiDAR vehicle relocalization \cite{zhang2015visual,engel2019deeplocalization}, and landmark-based odometry estimation \cite{zhu2007ten}. 

\begin{figure}[!htb]
\centering
\includegraphics[width=\linewidth]{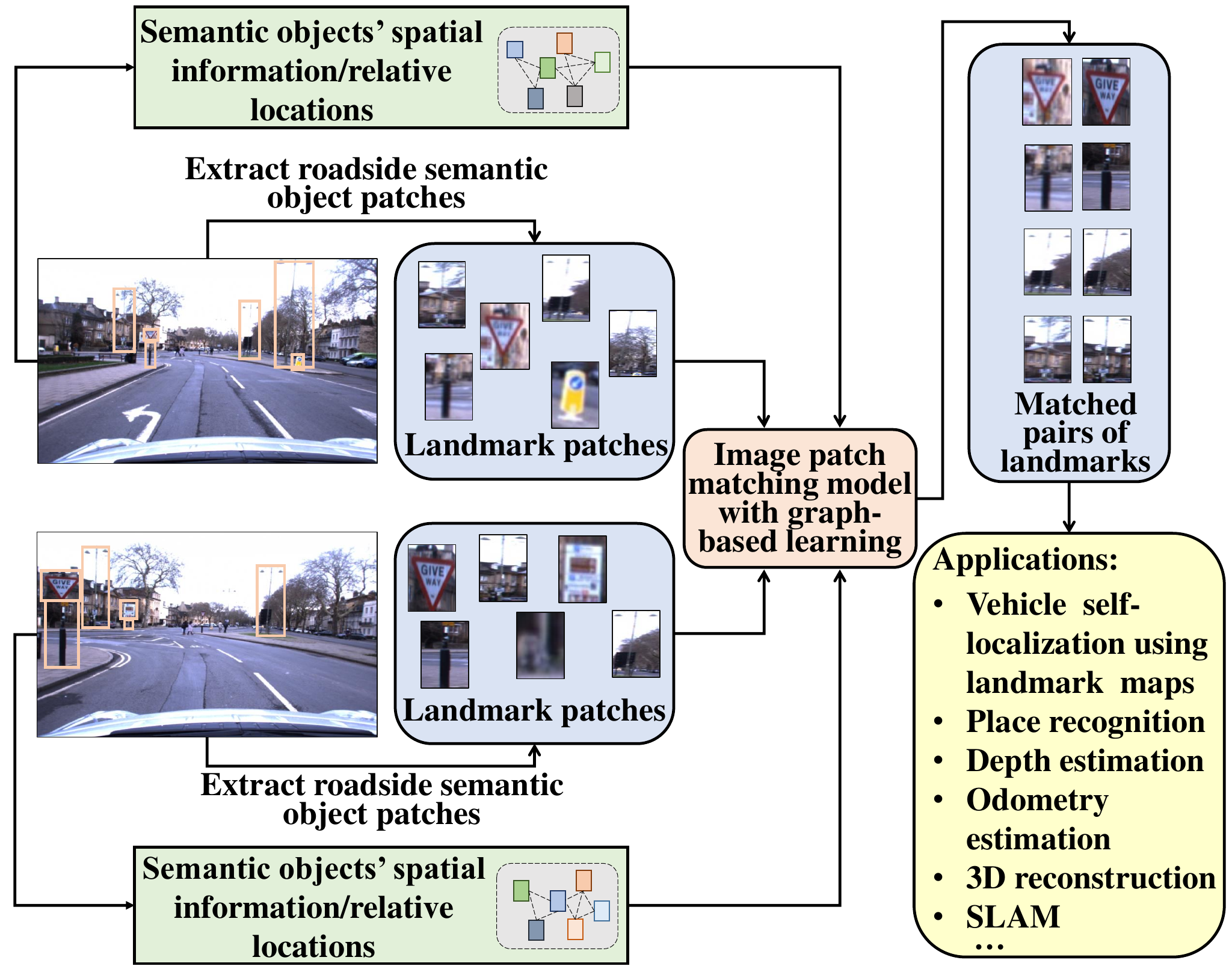}
\vspace{-0.6cm}
\caption{Landmark patch-matching using spatial graphs in street scenes and its potential applications.}
\label{fig:figure_overview}
\vspace{-0.5cm}
\end{figure}

In traditional image patch-matching methods, handcrafted local features using pixel statistics or gradient information, such as SIFT \cite{lowe2004distinctive}, SURF  \cite{bay2008speeded}, HOG \cite{felzenszwalb2008discriminatively} and ORB \cite{rublee2011orb}, are used. 
The similarity of a feature pair is commonly computed using different predefined metrics, like the $\mathcal{L}_2$ distance and cosine distance.
Moreover, a circular pattern with an adjustable radius is exploited in BRISK \cite{leutenegger2011brisk} and FREAK \cite{alahi2012freak}, which provides more efficient neighborhood information for computing relevant pixel statistics.
However, these handcrafted features are not robust to viewpoint changes, varying illuminations and transformations.
Consequently, the matching performance for methods based on such handcrafted local features is often unstable \cite{tian2017l2}.

With the rapid development of artificial intelligence techniques, deep learning methods, such as convolutional neural networks (CNNs), are widely used in image matching \cite{zagoruyko2015learning,kumar2016learning,balntas2016learning}.
In this case, high-dimensional features are exploited to replace handcrafted features in image representations.
In the joint feature and metric learning method \cite{han2015matchnet,subramaniam2018ncc,quan2019afd}, the representations and similarity metrics are combined in an end-to-end learning framework, in which high-level features of the images are extracted, and their similarities are learned simultaneously. 
The feature descriptor learning method \cite{tian2017l2,mishchuk2017working,wang2019better,ng2020solar,miao2021learning} focuses on high-level feature learning and tries to keep matched samples close and unmatched samples far from each other in the corresponding feature space. The similarity is computed using a predefined similarity metric. In these approaches, matching is based on learning feature representations of each image separately and does not exploit the relationships between objects in the images. 
Recent keypoint-based learning methods such as D2-Net \cite{dusmanu2019d2}, ASLFeat \cite{luo2020aslfeat} and SuperGlue \cite{sarlin2020superglue} perform the point-level correspondence based on the detected keypoints and their descriptors for the input images, which can also be used for the image matching task \cite{amiri2011rasim}.

\begin{figure}[!htb]
\centering
\includegraphics[width=0.5\linewidth, height=0.13\textheight]{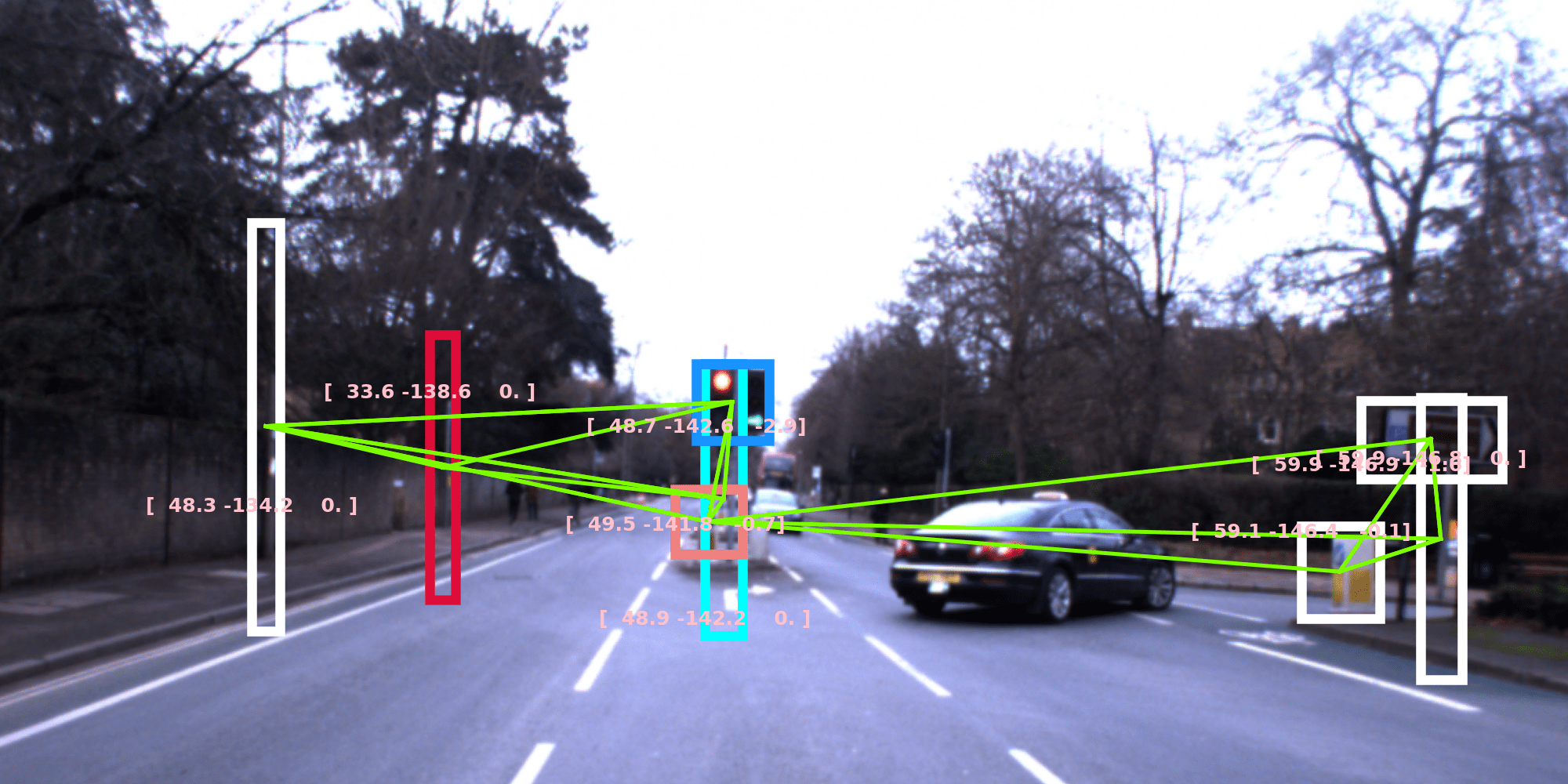}\hfill
\includegraphics[width=0.5\linewidth, height=0.13\textheight]{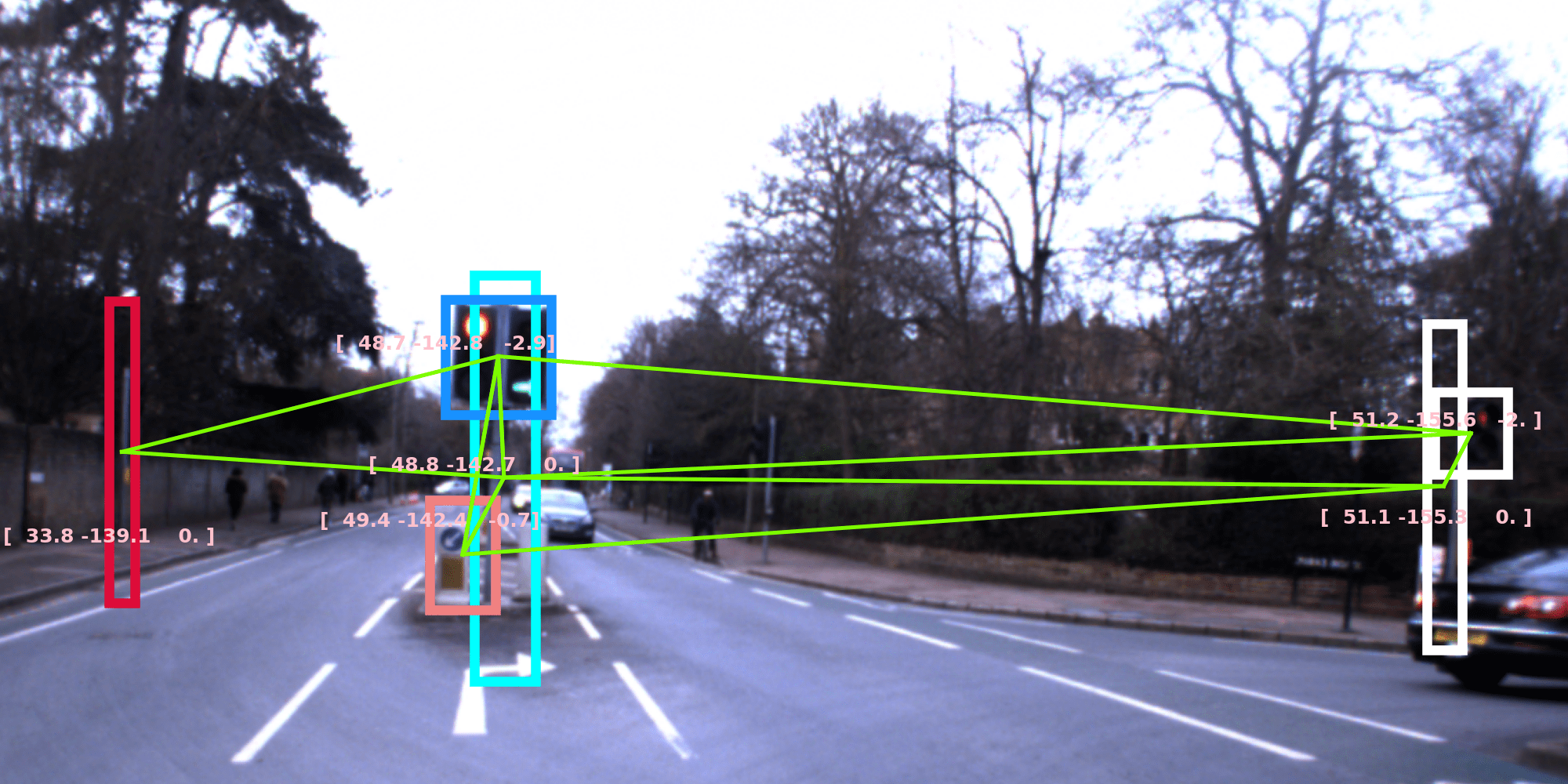}\hfill
\vspace{-0.1cm}
\caption{Landmark patches matching in two full-sized images sampled from the Oxford Radar RobotCar dataset. The matched landmark patches are labeled with the same colored bounding boxes, while the white bounding box indicates that the landmark patch in one image has no matched pair in the other image. Green lines indicate the constructed graph edges in our model.}
\label{fig:figure_landmark_patch1}
\vspace{-0.2cm}
\end{figure}

Unlike other image patch-matching tasks, rich spatial information for landmark patches is often available. For example, lamp posts along a road are usually spaced at equal intervals, and their relative locations \gls{wrt} each other in the environment provide additional information for the matching task.
In special street scenes like the downtown or central business district (CBD), landmark patch-matching has advantages over conventional pixel-/point-level matching due to the presence of dynamic objects, such as vehicles and pedestrians. 
These dynamic objects captured by the vehicular cameras may have more matched pixels across different frames than static landmarks. However, matching these objects is useless or even harmful for tasks such as place recognition. To mitigate this issue, in this work, we perform the patch-matching task based on static landmarks such as traffic lights, traffic signs, poles, and windows.

Inspired by graph-level representation learning \cite{velivckovic2018deep,sun2020infograph}, we propose to construct a graph for the neighborhood of an image patch and use graph-level representations to enrich the landmark patch embedding. 
We identify each landmark patch as a vertex of a graph and find the $K$-nearest neighbors based on estimated spatial information. In the literature, there exist various spatial information estimation techniques like structure-from-motion (SfM)\cite{schonberger2016structure}, monocular or stereo depth estimation \cite{yin2021learning} and optical attenuation masks \cite{farid1996differential}. In this paper, for the sake of illustration, we choose an off-the-shelf monocular depth estimation method from \cite{yin2021learning} to estimate the landmark spatial relations. However, any other spatial estimation or augmented ranging sensors like LiDAR or depth camera can also be utilized in our framework. We form a clique whose vertex embeddings are learnable via a \gls{GNN}\cite{velivckovic2017graph,wu2020comprehensive}. This graph is utilized in our proposed patch-matching framework for object information characterization. The final matching score is an average of the graph and vertex embedding similarity.

We also introduce two landmark patch-matching datasets derived from the street-scene KITTI dataset \cite{geiger2012we} and the Oxford Radar RobotCar dataset \cite{barnes2020oxford}. Our paper focuses on matching image patches of specific \emph{static} roadside objects from two full-sized images taken by cameras onboard vehicles. See \cref{fig:figure_overview} for an illustration. 
More specifically, we focus on static roadside objects including traffic lights, signs, lamp posts, and even windows on a building facade. This is because in most landmark-based applications,  other transient static objects like parked cars, are inappropriate landmarks or do not have sufficient distinctive features. Due to complex environmental conditions like dynamic element occlusion, e.g., due to pedestrians, vehicles, or the scene viewpoint changing (especially when turning at sharp corners or traversing a stretch in opposite directions),  the landmark patches may have dramatic differences in appearance. We refer the readers to the supplementary material for more details on the landmark patch-matching datasets' preparation. For a concrete illustration, some examples of matched or unmatched landmark patches are presented in \cref{fig:figure_landmark_patch1}. 

Our contributions are summarized as follows:
\begin{itemize}
\item We propose a landmark patch-matching method with graph-based learning for vehicles in street scenes, which extends the feature representation approach used in traditional image patch-matching tasks and incorporates spatial relationship information.
\item We analyze the fundamental principle and properties of the proposed graph-based loss function from an information-theoretic perspective.
\item We introduce two landmark patch-matching datasets, which contain challenging street-view landmark patches captured in an autonomous driving environment.
\item We empirically demonstrate that our method achieves state-of-the-art performance on the landmark patch-matching task when compared to various other benchmarks.
\end{itemize}

The rest of this paper is organized as follows. 
In \cref{sect:relatedworks}, related works are discussed.
Our model and framework are introduced in \cref{sect:model}, where we also provide a theoretical analysis of our graph-based loss. 
We present experimental results in \cref{sect:exper} and conclude the paper in \cref{sect:conc}. 
The proofs for all lemmas or propositions proposed in this paper are provided in the appendices.

\section{Related Works}\label{sect:relatedworks}

Since deep learning-based methods play dominant roles in the image-matching problem, we only discuss deep learning-based works here.
Deep learning-based methods include feature descriptor learning, joint feature and metric learning, as well as keypoint-based correspondence learning. 

\textbf{Feature descriptor learning.}
High-level features of an image are first extracted using a neural network like a CNN so that matched samples are close while unmatched samples are distant under a similarity metric, which is chosen to be a feature distance function.
In many models \cite{kumar2016learning,balntas2016learning}, pairwise or triplet loss is used to train the neural networks.
To improve performance, in \cite{zhang2017learning}, a regularization is designed by maximizing the spread of local feature descriptors over the descriptor space, from which a better embedding for image-level features is obtained.
To ensure many samples are accessible to the descriptor network within a few epochs, L2-Net \cite{tian2017l2} uses a progressive sampling strategy.
Furthermore, HardNet \cite{mishchuk2017working} is designed to fully utilize the hard negative samples by making the closest positive sample far away from the closest negative sample in a batch. The reference \cite{wang2019better} overcomes the hard sample learning issue by use of exponential Siamese and triplet losses, which naturally pay more attention to hard samples and less attention to easy ones. 
SOSNet is studied in \cite{tian2019sosnet} to learn better local descriptors, where the second-order similarity (SOS) is introduced into the loss function as a regularization.
Moreover, \cite{ng2020solar} designs two second-order components, i.e., the second-order spatial information and the second-order descriptor space similarity, to achieve feature map re-weighting and global descriptors learning, respectively.
The paper \cite{pan2021tcdesc} proposes topology consistent descriptors (TCDesc) based on neighborhood information of descriptors, which can be combined with other methods via the triplet loss.

\textbf{Joint feature and metric learning.}
In joint feature and metric learning, the similarity metric is not predefined and is instead set as a trainable network together with the feature extraction network.
In this case, the matching task is regarded as a binary classification task by resorting to the similarity metric network with a classification loss function.
As a classical method, MatchNet proposed by \cite{han2015matchnet} extracts high-level features by using deep CNNs and measures the feature similarity using fully connected (FC) layers.
To compare the different network architectures for the matching task, several networks, including SiameseNet, Pseudo-SiameseNet and 2-channel network, are investigated in \cite{zagoruyko2015learning,melekhov2016siamese}.
The 2-channel network merges the two images into a 2-channel image to achieve faster convergence.
The SiameseNet and Pseudo-SiameseNet both use two branches based on the same structure to extract high-dimensional features, with and without the shared weights respectively.
Using the normalized cross-correlation (NCC) as a metric, \cite{subramaniam2018ncc} proposes NCC-Net, which utilizes robust matching layers to measure the similarity of feature pairs.
To tackle cross-spectral image matching, AFD-Net is proposed by \cite{quan2019afd} to aggregate multi-level feature differences, which can strengthen the discrimination of the network.

\textbf{Keypoint-based correspondence learning.} 
In keypoint-based correspondence learning, the main procedure is to construct neural networks to perform keypoint detection and description and to measure or learn the keypoints' similarity for matching inference.
For instance, LIFT \cite{yi2016lift} is designed based on a united deep network architecture where keypoints are detected in the first network, the orientation for cropped regions is estimated in the second network, and the feature description is performed in the third network. Here, the Euclidean distance is used to measure the similarity of features.
The SuperPoint approach \cite{detone2018superpoint} introduces a self-supervised domain adaptation framework named Homographic Adaptation into interest point detection and description. 
The D2-Net \cite{dusmanu2019d2} makes use of a single CNN to perform dense feature description and detection simultaneously, where the detection, instead of being based on low-level image structures, is postponed to the high-level structures, which are also used for image descriptions.
Based on the D2-Net backbone architecture, ASLFeat \cite{luo2020aslfeat} is equipped with three lightweight effective modifications, which have better local shape estimation and more accurate keypoint localization.
The above methods all measure the point-level correspondence based on Euclidean distances.
On the other hand, SuperGlue \cite{sarlin2020superglue} is designed using attention GNNs and the Sinkhorn algorithm for keypoint-based feature matching.
LoFTR \cite{sun2021loftr} achieves accurate semi-dense matches with Transformers including self and cross-attention layers. 
Generally speaking, all the above keypoint-based correspondence learning methods can be used to perform the image matching task with further operations on the keypoint matching scores \cite{sarlin2020superglue}.

To improve image matching performance, spatial information is used in \cite{brogan2021fast,sarlin2020superglue,sun2021loftr} through spatial verification, graph learning, and cross attention. 
In spatial verification, spatial information is usually used for the transformation calibration \gls{wrt} the key points or objects, as well as a correspondence auxiliary for direction or location \gls{wrt} the objects of interest \cite{brogan2021fast}. 
This can introduce global information to improve local correspondence. 
In particular, transformation optimization methods like RANSAC \cite{fischler1981random}, fast spatial measure (FSM) \cite{philbin2007object}, hough pyramid matching (HPM) \cite{avrithis2014hough} and pairwise geometric matching (PGM) \cite{li2015pairwise}, can filter out weak correspondences for keypoints or local features obtained by key feature detection and descriptors such as SIFT \cite{lowe2004distinctive}, SURF  \cite{bay2008speeded}, and ORB \cite{rublee2011orb}. 
The region-based or object-based verification methods such as Objects in Scene to Objects in Scene (OS2OS) \cite{brogan2021fast} and block-based image matching \cite{wang2021block}, make use of the relative positions of local patches to refine the whole image matching.
Different from the above approaches, our method uses distance-based spatial information for the neighborhood graph construction, rather than for transformation correction or weak correspondence filtering. 

Graph learning methods such as SuperGlue \cite{sarlin2020superglue}, GLMNet \cite{jiang2022glmnet}, and joint graph learning and matching network (GLAM) \cite{liu2023joint}, are exploited to represent local features based on the neighborhood graphs for keypoints. 
The graphs are constructed based on the detected keypoints or the corresponding features, and GNNs are used to learn graph representations.  
These methods achieve more robust and stable representations for the corresponding features based on spatial information.

Different from the above methods, our approach focuses on the \emph{neighborhood information based on landmark distances}, which is used for patch-level, rather than point-level, representation and not used to filter weak or invalid correspondences. 
Moreover, we also adopt GNNs to represent the patch-level neighborhood graphs, which is demonstrated to be beneficial for the landmark patch-matching task.  

\section{Landmark patch-matching with Graph-Based Learning}\label{sect:model}

In this section, we first introduce our graph-based learning framework to find matched landmark patch pairs that are extracted from two images taken from on-vehicle cameras. 
The images may be taken from different perspectives and our framework can also identify those patches that are unmatched. \cref{fig:figure_landmark_patch1} shows examples of matched and unmatched landmark patch pairs. 
We then discuss the theoretical basis for our graph-based learning approach.

\subsection{Framework Overview}\label{subsection.Framework}
Similar to other patch-matching datasets like the multi-view stereo (MVS) dataset \cite{winder2009picking} and the DTU dataset\cite{aanaes2012interesting}, in our work, the landmark patches are extracted from the full-sized images and the matching ground truths are established using $3$D points. More specifically, the landmark patches are extracted using well-known object detection techniques like Faster R-CNN \cite{ren2016faster}. To distinguish the full-sized images from the landmark patches, we use the term \textit{frame} to denote the full-sized image from which the patches are extracted. We refer the readers to \cref{sect:Datasets} for more details on the preparation of the landmark patch-matching datasets.

We assume that the spatial information (i.e., approximate relative distances between landmark objects) of landmark patches is available.
The spatial information can be obtained from range estimation methods like the monocular depth estimation networks \cite{zhou2021diffnet,lyu2020hrdepth,yan2021cadepth} in both the training and the testing phases.
To construct a graph, we let the landmark patches of a frame be vertices of the graph. For each patch or vertex $x$, we find the $K$ nearest neighbors in terms of spatial locations as indicated by the observed spatial information. An example of the constructed graph is shown in \cref{fig:figure_landmark_patch1}.
For the vertex $x$, we form a complete graph or clique with its $K$ nearest neighbors found. Let $\mathcal{G}^{x}$ denote this neighborhood graph. 

\begin{figure*}[!htb]
\begin{center}
\fbox{\includegraphics[width=0.98\linewidth, height=0.45\textheight]{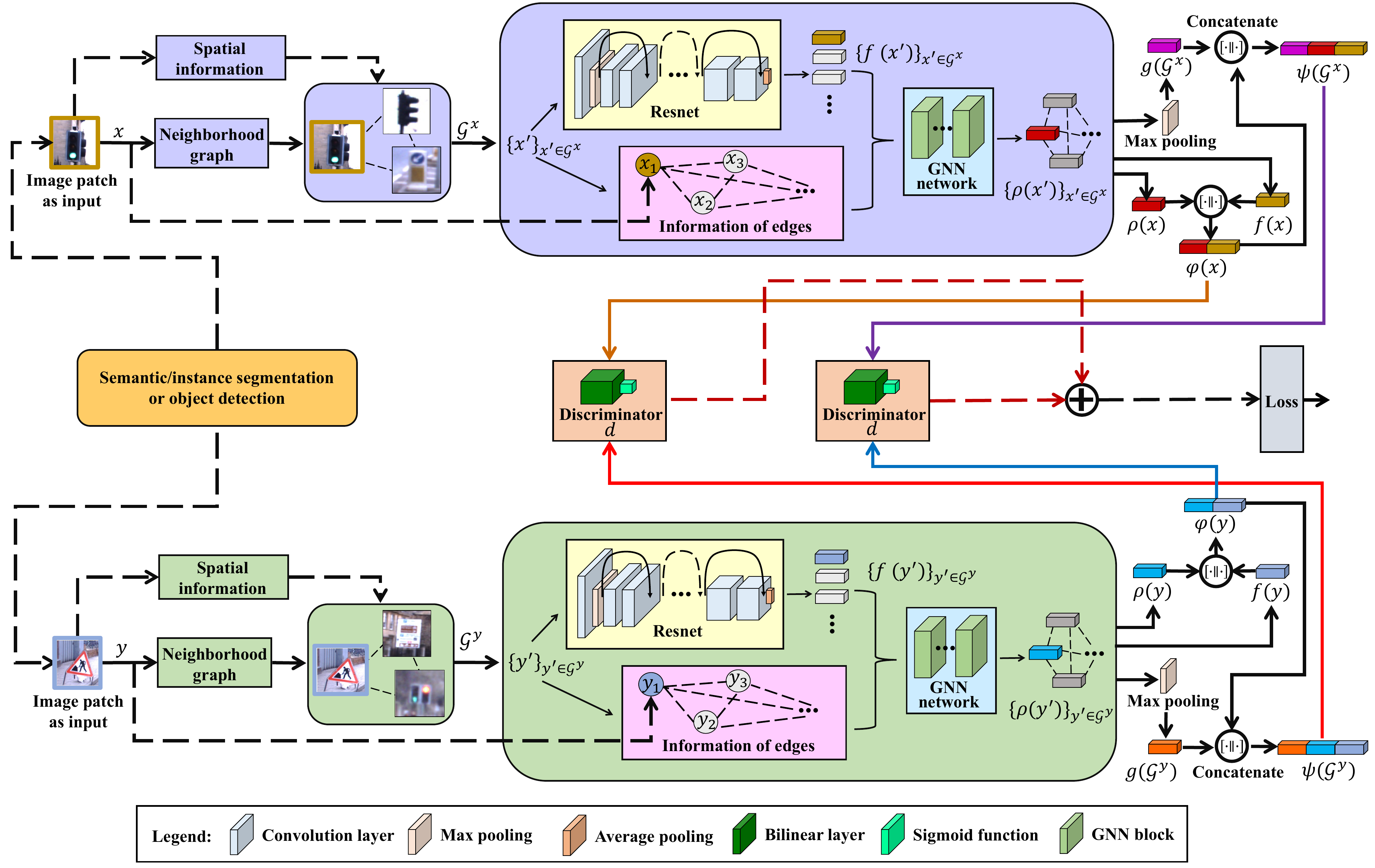}}
\end{center}
\vspace{-0.2cm}
\caption{VGIDM: landmark patch-matching with the graph-based learning. The Resnet $f$ shown in the framework is a shared network serving as the feature descriptor function $f$ to extract high-dimensional features from patches. 
Likewise, the discriminator $d$ is also shared to make a decision for the vertex-to-graph correspondence.
The model takes as input a pair of image patches that correspond to street scene landmarks.
}\label{fig:model}
\vspace{-0.3cm}
\end{figure*}

Our image patch-matching framework is illustrated in \cref{fig:model}.
In this framework, the inputs for the model are image patches obtained by semantic or instance segmentation methods, e.g., Mask R-CNN \cite{he2017mask}, or object detection methods, e.g., Faster R-CNN \cite{ren2016faster}. These two kinds of methods can extract objects of interest such as traffic lights and traffic signs from image frames. 
Two main modules, Resnet \cite{Resnet} and \gls{GNN}, are respectively used for image feature extraction and neighborhood graph embedding, where the GNN can be the graph attention network (GAT) \cite{velivckovic2017graph}, graph convolutional network (GCN) \cite{kipf2017semi}, GraphSAGE \cite{hamilton2017inductive} or any other GNN architecture. 
Given the vertex and graph embedding features from our model, we maximize the empirical information distance between the cases where patches are matched and unmatched.
We call our image patch-matching approach \textit{Vertex-Graph-learning-and-Information-Distance-Maximization (VGIDM)}.
The details are given as follows.

\subsection{Model Details}\label{sect:model_details}

Our objective is to determine if two landmark patches from different frames are matched with each other. In VGIDM, the feature extraction module $f$ first learns embeddings for the input landmark patch as well as the patches in its neighborhood graph. 
The model then makes use of a learnable graph embedding module $g$ to represent the neighborhood graph-level and vertex-level feature readout features. Finally, it uses a decision-making module to compute the matching classification.

\textbf{Feature extraction for patches.}
We use the Resnet $f$ to extract high-dimensional features for each landmark patch $x$. The Resnet output is denoted by $f(x)\in \mathbb{R}^n$. Recall that for a patch $x$, we form a neighborhood graph $\calG^x$. For the graph $\calG^x$, applying $f$ to each node in $\calG^x$, we have $\{f(x')\}_{x' \in \calG^x}$. 

\textbf{Embedding representation for the neighborhood graph and its vertices.}\label{section.regularization}
The graph $\mathcal{G}^{x}$ is input to a GNN network $g$ to obtain a graph-level embedding representation $g(\mathcal{G}^{x})$. 
Specifically, the vertex features $(f(x'))_{x' \in \calG^x} \in \mathbb{R}^{|\mathcal{G}^{x}| \times n}$ are updated via the GNN, which consists of several layers of neighborhood aggregation and node update \cite{velivckovic2017graph,wu2020comprehensive}, followed by some activation functions and a final pooling layer. The vertex embeddings $(\rho(x'))_{x' \in \calG^x} \in\mathbb{R}^{|\mathcal{G}^{x}| \times n}$ are obtained from the last graph convolutional/attentional layer of the GNN, while the graph-level embedding representation $g(\mathcal{G}^{x}) \in \mathbb{R}^n$ is obtained as the output of the last pooling layer.  

Compared to $f(x) \in \mathbb{R}^n$ which extracts features directly from the patch $x$, $\rho(x) \in \mathbb{R}^n$ learns a feature embedding with additional information from its neighborhood, while $g(\mathcal{G}^{x}) \in \mathbb{R}^n$ learns an embedding for the surrounding environment itself.

\textbf{Correspondence comparison.}
Suppose that $x$ and $y$ are landmark patches from two different frames, respectively. If $x$ and $y$ are patches for the same real-world object, we say that they are \emph{matched} and denote this event as $x\leftrightarrow y$. Otherwise, they are \emph{unmatched} and denoted as $x\nleftrightarrow y$.
For any patch pair $(x,y)$, we denote the matching ground truth label as $\indicator{x\leftrightarrow y}$, where $\indicator{\cdot}$ is the indicator function. 
In order to compare the correspondence between the patch pair $(x,y)$, 
we design a decision-making mechanism based on the patch features. 
For the two patches $x$ and $y$, we respectively obtain $f(x)$ and $f(y)$ as the features from the Resnet, $\rho(x)$ and $\rho(y)$ as the vertex-level embedding features, and  $g(\mathcal{G}^{x})$ and $g(\mathcal{G}^{y})$ as the graph-level embedding features from the GNN network. 

Let the ensemble vertex embedding for a patch $x$ be 
\begin{align}
\varphi(x)=\rho(x) \| f(x)
\end{align}
and the neighborhood graph embedding for $\calG^{x}$ be 
\begin{align}
\psi(\calG^{x})=g(\mathcal{G}^{x}) \| \varphi(x) = g(\mathcal{G}^{x}) \| \rho(x) \| f(x),
\end{align}
 where $\|$ is the concatenation operation. 

The ensemble vertex feature for $x$ and the graph embedding for $\calG^y$ are input to a discriminator $d$ consisting of a \emph{bilinear} layer of the form:
\begin{align}
d(a,b)= \sigma(a^{\top} \times M \times b), \label{eq:ddd}
\end{align}
where $M\in \Real^{n\times m}$ is a trainable matrix and $\sigma(\cdot)$ denotes the sigmoid function. 
In particular, the matrix $M$ is designed as 
\begin{align}
M = 
\begin{bmatrix}
\bf{0} & M_{12} & \bf{0} \\
M_{21} & M_{22} & M_{23} 
\end{bmatrix}, \label{eq:M_matrix}
\end{align}
where $M_{12}$, $M_{21}$, $M_{22}$ and $M_{23}$ serve as the  matrix blocks with learnable parameters and $\bf{0}$ denotes the zero matrix. The specifically designed block matrix \cref{eq:M_matrix} is to restrict the comparison between the features.
Inputting $(\varphi(x), \psi(\calG^{y}))$ to the discriminator $d$, we have 
\begin{align}
& d(\varphi(x), \psi(\calG^{y})) \nn
& = \sigma \left(
\begin{bmatrix}
\rho(x)^{\top} & f(x)^{\top}
\end{bmatrix}
\times M \times
\begin{bmatrix}
g(\mathcal{G}^{y}) \\
\rho(y) \\
f(y) \\
\end{bmatrix}
\right) \\
& = \sigma \bigg(\rho(x)^{\top}  M_{12} \rho(y) + f(x)^{\top} M_{21} g(\mathcal{G}^{y}) \nn
& \qquad + f(x)^{\top} M_{22} \rho(y) + f(x)^{\top} M_{23} f(y) \bigg). \label{eq.d_M}
\end{align}
The first term $\rho(x)^{\top}  M_{12} \rho(y)$ in \cref{eq.d_M} is used to compare the vertex embeddings of $x$ and $y$ obtained from the GNN. This emphasizes the domain part of the embedding. The second term $f(x)^{\top} M_{21} g(\mathcal{G}^{y})$ and third term $f(x)^{\top} M_{22} \rho(y)$ are used for the comparison of the vertex $x$ and the neighborhood graph of $y$. This helps to constrain GNN learning. The last term $f(x)^{\top} M_{23} f(y)$ is used to compare the Resnet features for the two vertices, which updates the Resnet training.
The same procedure is performed analogously for $\varphi(y)=\rho(y) \| f(y)$ and $\psi(\calG^{x})= g(\mathcal{G}^{x}) \| \varphi(x)$. 

The learnable discriminator $d(\varphi(x), \psi(\calG^{y}))$ from \cref{eq.d_M} utilizes the ensemble vertex embedding $\varphi(x)$ and the neighborhood graph embedding $\psi(\calG^{y})$. The vertex-level embedding $\rho(x)$ and graph-level embedding $g(\calG^x)$ contain information from the vertex feature $f(x)$ (output of Resnet) due to the incorporation of neighborhood information from the GNN. 
In the case of a large number of frames, the neighborhood graphs can be quite different as they typically consist of vertices from different frames. 
As a result, the embeddings $\rho(x), g(\calG^x)$ and $\rho(y), g(\calG^y)$ can have different features to some degree even if $x \leftrightarrow y$. 
Therefore, it may be appropriate to use the original vertex feature $f(x)$ to constrain the graph learning for the vertex-level embedding $\rho(y)$ and the graph-level embedding $g(\calG^y)$. 
The comparisons between $f(x)$ and $\rho(y)$ or $g(\calG^y)$ can emphasize the principal component for the learned graph features. 
When vertices $x$ and $y$ are matched, $\rho(y)$ and $g(\calG^y)$ essentially contain the information of $f(x)$.
Therefore, comparing $f(x)$ with $\rho(y)$ and $g(\calG^y)$ can introduce more information with neighborhood characteristics for the matching process.

\begin{figure*}[!tb]
\hrulefill
\begin{align}
    L_{\mathrm{empID}} 
    & = {\frac{1}{|\calM|}} \sum_{(x,y)\in \calM} 
    \bigg\{ \indicator{x \leftrightarrow y}
    %\frac{\log [d(\varphi(x),\psi(\mathcal{G}^{y}))] +  \log [d(\varphi(y),\psi(\mathcal{G}^{x}))]}{2}
    \frac{1}{2}\Big(\log [d(\varphi(x),\psi(\mathcal{G}^{y}))] +  \log [d(\varphi(y),\psi(\mathcal{G}^{x}))]\Big) \nn
    & \qquad \quad
    + \indicator{x \nleftrightarrow y} 
    %\frac{\log [1-d(\varphi(x),\psi(\mathcal{G}^{y}))] + \log [1-d(\varphi(y),\psi(\mathcal{G}^{x}))]}{2} \bigg\} \nn
    \frac{1}{2}\Big(\log [1-d(\varphi(x),\psi(\mathcal{G}^{y}))] + \log [1-d(\varphi(y),\psi(\mathcal{G}^{x}))] \Big)  \bigg\} \\
    & = \frac{1}{2} \underbrace{ \frac{1}{|\calM|} \sum_{(x,y)\in \calM} 
     \Big\{ \indicator{x \leftrightarrow y} 
     \log [d(\varphi(x),\psi(\mathcal{G}^{y}))] 
     + \indicator{x \nleftrightarrow {y}}
     \log [1-d(\varphi(x),\psi(\mathcal{G}^{y}))] \Big\} }_{L_{\mathrm{empID-1}}} \nn
    & \qquad + \frac{1}{2} \underbrace{\frac{1}{|\calM|} \sum_{(x,y)\in \calM} 
     \Big\{ \indicator{y \leftrightarrow {x}} 
     \log [d(\varphi(y),\psi(\mathcal{G}^{x}))] 
     + \indicator{y \nleftrightarrow {x}}
     \log [1-d(\varphi(y),\psi(\mathcal{G}^{x}))] \Big\} }_{L_{\mathrm{empID-2}}} \label{eq.Lce_ID} 
\end{align}
\hrulefill
\end{figure*}
\textbf{Loss function and matching score.}
Let $\calM$ be a training set consisting of patch pairs $(x,y)$.
Define the graph-based learning objective function as $L_{\mathrm{empID}}$ given in \cref{eq.Lce_ID}, which depends on the discriminator $d$ in \cref{eq.d_M}. We show that $L_{\mathrm{empID}}$ is the empirical version of an information distance between the distributions conditioned by matched and unmatched pairs in \cref{prop.KL_divergence_Max}. 
We set our overall loss as 
\begin{align}
\min_{\varphi, \psi, d} \{- L_{\mathrm{empID}}\}, \label{eq.L_loss}
\end{align}
to maximize the information distance.

In the testing phase, the final matching score is given by 
\begin{align}
    S_{\mathrm{match}}(x,y) & = \frac{d(\varphi(x),\psi(\calG^y))+d(\varphi(y),\psi(\calG^x))}{2},\label{eq:ssss}
\end{align}
and the prediction function for whether there is a match is given by 
\begin{align}
A^{\mathrm{test}}_{S}(x,y) =\left\{
\begin{array}{ll}
1, &\ \mathrm{if}\ S_{\mathrm{match}}(x,y) > \Gamma,\\
0, &\ \mathrm{otherwise}, 
\end{array}
\right. \label{eq.decision}
\end{align}
where $\Gamma$ is a predefined threshold. A decision ``$1$'' indicates that $x$ and $y$ are matched and ``$0$'' otherwise.

\subsection{Theoretical Basis}\label{section.theoretical_analysis}
In this subsection, we discuss the theoretical basis for the graph-based learning objective function $L_{\mathrm{empID}}$ defined in \cref{eq.Lce_ID}. To make the analysis tractable, we assume that patch pairs $(x,y)$ are randomly generated from a distribution $\P$. Let $\E$ be the expectation operator. We start with a simplifying assumption as follows. 

\begin{Assumption}\label{ass:ass_1}
$\varphi(x)$ and $\psi(\mathcal{G}^{y})$ are continuous random variables induced from $\P$.
\end{Assumption}
In practice, due to the chosen activation functions used in Resnet $f$ and the GNN network $g$, their outputs typically satisfy the continuity requirement of \cref{ass:ass_1}.

In our analysis, the discriminator $d$ is assumed to be a general function without necessarily having the form \cref{eq:ddd}. 

Let $\mathbb{A}$ be the set of all possible $(\varphi(x),\psi(\mathcal{G}^{y}))$ where $\int_{\mathbb{A}} p(\varphi(x),\psi(\mathcal{G}^{y})) {\rm d}(\varphi(x),\allowbreak \psi(\mathcal{G}^{y}))=1$, $p:\bbA \mapsto \bbR_{+}$ is a probability density whose set of discontinuities has Lebesgue measure zero. 

For any given landmark patches $x$ and $y$, we assume that $\P(x\leftrightarrow y)>0$ and $\P(x\nleftrightarrow y)>0$.
The probability densities of $(\varphi(x), \psi(\mathcal{G}^{y}))$ conditioned on $x \leftrightarrow y$ and $x \nleftrightarrow y$ are denoted by $p(\varphi(x),\psi(\mathcal{G}^{y})\mid{x \leftrightarrow y})$ and $p(\varphi(x),\psi(\mathcal{G}^{y})\mid{x \nleftrightarrow y})$, respectively.\footnote{
Here we abuse notations $p(\varphi(x),\psi(\mathcal{G}^{y})|{x \leftrightarrow y})$ to denote the conditional probability density of $(\varphi(x),\psi(\mathcal{G}^{y}))$ given that $x$ and $y$ are matched. This is to avoid the cluttered notation $p_{(\varphi(x),\psi(\mathcal{G}^{y}))\mid {x \leftrightarrow y}}(\cdot,\cdot)$.
} 

We discuss only $L_{\mathrm{empID-1}}$ in \cref{eq.Lce_ID} since $L_{\mathrm{empID-2}}$ is symmetrical to it. The expectation form of $L_{\mathrm{empID-1}}$ is given by
\begin{align}
    L_{\mathrm{ID}} 
    &= L_{\mathrm{ID}}(\varphi,\psi,d) \nn
    &= \E[\indicator{x \leftrightarrow y}\log d(\varphi(x),\psi(\mathcal{G}^y))] \nn
    & \quad + \E[\indicator{x\nleftrightarrow y}\log(1-d(\varphi(x),\psi(\mathcal{G}^y)))]. \label{eq.L_ID_expect}
\end{align}
In minimizing the loss in \cref{eq.L_loss}, in the asymptotic regime $|\calM|\to\infty$, we aim at  $\max\limits_{\varphi,\psi,d} L_{\mathrm{ID}}$. 
Let $\KLD{}{}$ denote the Kullback-Leibler (KL) divergence.

\begin{Proposition}[Relationship with KL divergence]
\label{prop.KL_divergence_Max}
Suppose \cref{ass:ass_1} holds. 
For a vertex embedding $\varphi$ and a neighborhood graph embedding $\psi$, 
let $L_{\mathrm{\rm ID}}^{d^*}(\varphi,\psi)= \max_d L_{\mathrm{ID}}(\varphi,\psi,d)$, where $d^*$ is the corresponding optimal discriminator. Then 
\begin{align}
& \KLD*{p(\varphi(x),\psi(\mathcal{G}^{y})\mid{x \leftrightarrow y})}{p(\varphi(x),\psi(\mathcal{G}^{y})\mid{x \nleftrightarrow y})} \\
& 
\geq \ofrac{\P(x\leftrightarrow y)} \parens*{L_{\mathrm{ID}}^{d^*}(\varphi,\psi) + H_b(\P(x\leftrightarrow y))}.
\end{align}
where $H_b(p)=-p\log p - (1-p)\log(1-p)$ is the binary entropy function.
\end{Proposition}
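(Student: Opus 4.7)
The plan is to carry out a GAN-style pointwise optimization of the discriminator and then reduce the resulting bound to a nonnegative KL divergence via a mixture-telescoping identity. For compactness, write $\pi=\P(x\leftrightarrow y)$, $p_m(a,b)=p(\varphi(x),\psi(\mathcal{G}^y)\mid x\leftrightarrow y)$, $p_u(a,b)=p(\varphi(x),\psi(\mathcal{G}^y)\mid x\nleftrightarrow y)$, and let $q=\pi p_m+(1-\pi)p_u$; under \cref{ass:ass_1} these are honest densities on $\bbA$. Conditioning \cref{eq.L_ID_expect} on the matching event gives
\begin{align}
L_{\mathrm{ID}}(\varphi,\psi,d)=\pi\!\int_{\bbA}\!p_m\log d\,\ud(a,b)+(1-\pi)\!\int_{\bbA}\!p_u\log(1-d)\,\ud(a,b).
\end{align}

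First I would maximize pointwise over $d(a,b)\in(0,1)$. Differentiating the integrand $\pi p_m\log d+(1-\pi)p_u\log(1-d)$ in $d$ and setting to zero yields $d^*(a,b)=\pi p_m(a,b)/q(a,b)$ and $1-d^*(a,b)=(1-\pi)p_u(a,b)/q(a,b)$. Substituting $d^*$ back and peeling off the constants $\log\pi$ and $\log(1-\pi)$ gives the identity
\begin{align}
L_{\mathrm{ID}}^{d^*}(\varphi,\psi)+H_b(\pi)=\pi\,\KLD{p_m}{q}+(1-\pi)\,\KLD{p_u}{q},
\end{align}
so after dividing by $\pi>0$ the proposition is equivalent to the single inequality $\pi\KLD{p_m}{p_u}\geq\pi\KLD{p_m}{q}+(1-\pi)\KLD{p_u}{q}$.

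To close the argument, I would use the chain identity $\KLD{p_m}{p_u}=\KLD{p_m}{q}+\int p_m\log(q/p_u)\,\ud(a,b)$ to subtract $\pi\KLD{p_m}{q}$ from both sides, reducing the claim to $\pi\!\int p_m\log(q/p_u)\,\ud(a,b)\geq(1-\pi)\!\int p_u\log(p_u/q)\,\ud(a,b)$. Flipping the sign on the right via $\log(p_u/q)=-\log(q/p_u)$ and moving it across, the claim becomes $\pi\!\int p_m\log(q/p_u)\,\ud(a,b)+(1-\pi)\!\int p_u\log(q/p_u)\,\ud(a,b)\geq 0$. The crucial observation is that both integrands carry the same factor $\log(q/p_u)$, so the mixture identity $\pi p_m+(1-\pi)p_u=q$ collapses the sum to $\int q\log(q/p_u)\,\ud(a,b)=\KLD{q}{p_u}\geq 0$ by Gibbs' inequality. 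The degenerate case where $p_u$ vanishes on the support of $p_m$ is trivial since then $\KLD{p_m}{p_u}=\infty$. The main obstacle, once $d^*$ is in hand, is recognizing that one should add rather than subtract the two $\log(q/p_u)$ integrals so that the mixture identity collapses them cleanly; the remainder is routine substitution.
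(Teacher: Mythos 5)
Your proposal is correct and follows essentially the same route as the paper: both perform the pointwise concave maximization to obtain $d^{*}=\P(x\leftrightarrow y)\,p_m/q$ with $q$ the marginal mixture, substitute back to extract the binary entropy term, and reduce the inequality to the nonnegativity of the KL divergence between $q$ and $p_u$. The only difference is organizational — you pass through the symmetric decomposition $\pi\KLD{p_m}{q}+(1-\pi)\KLD{p_u}{q}$ and then telescope, whereas the paper peels off $\pi\KLD{p_m}{p_u}$ directly and bounds the residual $\E_q[\log(p_u/q)]\le 0$ by Jensen's inequality.
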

\begin{proof}
See Appendix~\ref{app.proposition_1}.
\end{proof}

\begin{Remark}
{\cref{prop.KL_divergence_Max} suggests that maximizing $L_{\text{ID}}$ over $(\varphi,\psi,d)$ helps to distinguish between the matched and unmatched patch pairs since their conditional distributions are forced to be very different in terms of the KL divergence.}
\end{Remark}

We next consider how the graph-based learning objective function $L_\mathrm{ID}$ in \cref{eq.L_ID_expect} is influenced by perturbations in the discriminator $d$.

\begin{Proposition}[Effect of discriminator perturbation]
\label{prop.disturbance_discriminator}
{Suppose \cref{ass:ass_1} holds. Let $\varepsilon$ be a sufficiently small perturbation to the discriminator $d$.} Then, $|L_{\mathrm{ID}}(\varphi,\psi, d+\varepsilon)- L_{\mathrm{ID}}(\varphi,\psi,d)| =  O(\varepsilon)$.
Furthermore, we have $|\max_{d} L_{\mathrm{ID}}(\varphi,\psi, d+\varepsilon) -  \max_d \allowbreak L_{\mathrm{ID}}(\varphi,\psi,d)| =  O(\varepsilon^2).$
\end{Proposition}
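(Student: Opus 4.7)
The plan is to verify both claims by a first-order Taylor expansion of the logarithms appearing in $L_{\mathrm{ID}}$, combined with first-order optimality at the maximizer for the tighter second bound.

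For the first claim, I would start from the identity
\begin{align}
L_{\mathrm{ID}}(\varphi,\psi, d+\varepsilon) - L_{\mathrm{ID}}(\varphi,\psi, d)
&= \E\left[\indicator{x \leftrightarrow y}\left(\log(d+\varepsilon) - \log d\right)\right] \nonumber\\
&\quad + \E\left[\indicator{x \nleftrightarrow y}\left(\log(1-d-\varepsilon) - \log(1-d)\right)\right],
\end{align}
and then substitute the pointwise expansions $\log(d+\varepsilon) - \log d = \varepsilon/d + O(\varepsilon^2)$ and $\log(1-d-\varepsilon) - \log(1-d) = -\varepsilon/(1-d) + O(\varepsilon^2)$. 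Provided $d$ stays bounded away from $0$ and $1$, which is implicit from the sigmoid structure in \cref{eq:ddd}, taking expectations leaves only a leading linear term in $\varepsilon$, giving the $O(\varepsilon)$ bound.

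For the second claim, I would use the optimal discriminator $d^*$ derived in the proof of \cref{prop.KL_divergence_Max}, namely $d^*(\varphi(x),\psi(\mathcal{G}^y)) = \P(x\leftrightarrow y\mid \varphi(x),\psi(\mathcal{G}^y))$. Interpreting $\max_d L_{\mathrm{ID}}(\varphi,\psi,d+\varepsilon)$ as the value $L_{\mathrm{ID}}(\varphi,\psi,d^*+\varepsilon)$ obtained by perturbing the optimizer, the Gateaux derivative at $d^*$ in direction $\varepsilon$ equals
\begin{align}
\E\left[\varepsilon\cdot\E\left[\frac{\indicator{x\leftrightarrow y}}{d^*} - \frac{\indicator{x\nleftrightarrow y}}{1-d^*}\,\Big|\,\varphi(x),\psi(\mathcal{G}^y)\right]\right],
\end{align}
which vanishes since $\E[\indicator{x\leftrightarrow y}\mid\varphi(x),\psi(\mathcal{G}^y)] = d^*$ and $\E[\indicator{x\nleftrightarrow y}\mid\varphi(x),\psi(\mathcal{G}^y)] = 1-d^*$. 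Consequently the linear term in the Taylor expansion of $L_{\mathrm{ID}}(\varphi,\psi,d^*+\varepsilon)$ around $d^*$ cancels, leaving only contributions from the second- and higher-order derivatives, which are $O(\varepsilon^2)$.

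The main obstacle I anticipate is making precise the meaning of the function-valued perturbation $\varepsilon$ and ensuring the Taylor expansions hold \emph{uniformly} in $(\varphi(x),\psi(\mathcal{G}^y))$. The terms $1/d$ and $1/(1-d)$ must be integrable against the relevant conditional densities, which requires $d$ (and $d^*$) to be uniformly bounded in $(\delta,1-\delta)$ for some $\delta>0$; under \cref{ass:ass_1} and the bounded ranges of the Resnet and GNN embeddings passed through the sigmoid, this is natural but should be stated explicitly. The remaining steps are routine estimates on the residuals and an exchange of expectation and Taylor remainder via dominated convergence.
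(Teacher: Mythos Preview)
Your proposal is correct and follows essentially the same route as the paper: a Taylor expansion of the logarithms yields the $O(\varepsilon)$ bound, and substituting the optimal discriminator $d^*$ from \cref{prop.KL_divergence_Max} makes the first-order term vanish, leaving $O(\varepsilon^2)$. The paper carries out exactly this computation, writing the linear and quadratic terms explicitly and then plugging in the closed form of $d^*$ to exhibit the cancellation; your use of the Gateaux derivative and the identity $\E[\indicator{x\leftrightarrow y}\mid\varphi(x),\psi(\mathcal{G}^y)]=d^*$ is the same calculation phrased slightly differently. Your remarks on the need for $d$ to be bounded away from $0$ and $1$ and on uniformity of the expansion are well taken and in fact more careful than what the paper states.
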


\begin{proof}
See Appendix~\ref{app.proposition_disturbance}.
\end{proof}

In the following, we consider how the GNN embedding of the neighborhood graph $\calG^x$ of a vertex $x$ affects the matching effectiveness under further assumptions.

For two landmark patches $x$ and $y$, if their neighborhood graphs $\calG^x$ and $\calG^y$ have vertices corresponding to the same set of objects, i.e., the patch and spatial information procedure identifies the same objects as the neighbors of $x$ and $y$, we write $\calG^x \leftrightarrow \calG^y$.

\begin{Assumption}\label{ass.case1}
The ranges of $\varphi(\cdot)$ and $\psi(\cdot)$ are finite sets.
The embedding $\varphi(x)=\varphi(y)$ for landmark patches $x$ and $y$ are the same if $x\leftrightarrow y$. 
If furthermore $\calG^x \leftrightarrow \calG^y$, then $\psi(\calG^x)=\psi(\calG^y)$.
\end{Assumption}

While the Resnet $f$ and GNN block $g$ are in general continuous functions of their inputs, \cref{ass.case1} can be satisfied by restricting to a finite number of objects of interest in the environment, assuming that frames are captured from approximately the same perspectives (e.g., from an on-vehicle camera of a vehicle traveling along a fixed road) so that landmark patches of the same object are within a certain similarity distance of each other. Finally, the outputs of $f$ and $g$ can be quantized into discrete ranges, which implies $\varphi$ and $\psi$ have finite sets of ranges. 
For the same object $o$ in the environment but under two different frames $\calF_1$ and $\calF_2$, 
\cref{ass.case1} says that the outputs from the embedding $\varphi$ are the same for the two frames. This implicitly assumes that $\varphi$ is robust to perturbation in its input. 
Furthermore, the outputs of the embedding $\psi$ are also the same if the patch and spatial information are noiseless.  

\begin{Proposition}\label{prop.proposition_3}
Suppose \cref{ass.case1} holds, and $x$ and $y$ are landmark patches of frames $\calF_1$ and $\calF_2$ (based on the same environment), respectively. 
Let $m(x\leftrightarrow y) = \P(\calG^x \leftrightarrow \calG^y \given x \leftrightarrow y)$ and $m(x \nleftrightarrow y) = \P(\calG^x \leftrightarrow \calG^y \given x \nleftrightarrow y)$. 
Then we have
\begin{align}
&\norm{p(\varphi(x), \psi(\mathcal{G}^{y}) \mid x \leftrightarrow y) - p(\varphi(x), \psi(\mathcal{G}^{y}) \mid x \nleftrightarrow y)}_{\mathrm{TV}} \nn
& \geq \min_{x\leftrightarrow y}m(x\leftrightarrow y) \sum_{\mathclap{(\varphi(x), \psi(\calG^{y})) \in \bbB}}\Big\{\pfgGm{\Gm} \nn
& \qquad\qquad\qquad\qquad  - \pfgGu{\Gu} \Big\} \nn
&\qquad + \min_{x\leftrightarrow y}m(x\leftrightarrow y) - \max_{x \nleftrightarrow y} m(x\nleftrightarrow y) - 1, \label{ineq:prop3}
\end{align}
where $\bbB = \big\{(\varphi(x), \psi(\calG^{y})) : \pfgm \ge \pfgu \big\}$.
Here, $\norm{\cdot}_{\mathrm{TV}}$ denotes the total variation distance, and $\min_{x\leftrightarrow y}$ and $\max_{x \nleftrightarrow y}$ denotes minimization over all matched patch pairs $(x,y)$ and maximization over all unmatched patch pairs, respectively.
\end{Proposition}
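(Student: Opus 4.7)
The plan is to lower-bound the total variation using its standard identity for distributions on a finite set (which applies by \cref{ass.case1}), then decompose the two conditional probabilities through the law of total probability with respect to the event $\{\calG^x \leftrightarrow \calG^y\}$, and finally extract the claimed bound by dropping nonnegative terms and by using the elementary fact that any subset sum of a probability mass function is at most one.

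Concretely, I would first note that for PMFs $P,Q$ on the finite set $\bbA$,
\begin{equation*}
\|P-Q\|_{\mathrm{TV}} = \sum_{a:\, P(a) \ge Q(a)} (P(a)-Q(a)),
\end{equation*}
so taking $P(\cdot) = \pfgm$ and $Q(\cdot) = \pfgu$ identifies the summation set with $\bbB$. Next, by conditioning on whether the neighborhood graphs match,
\begin{align*}
\pfgm &= m(x\leftrightarrow y)\,\pfgGm{\Gm} + (1-m(x\leftrightarrow y))\,\pfgGm{\Gu},\\
\pfgu &= m(x\nleftrightarrow y)\,\pfgGu{\Gm} + (1-m(x\nleftrightarrow y))\,\pfgGu{\Gu}.
\end{align*}
Dropping the nonnegative term $(1-m(x\leftrightarrow y))\,\pfgGm{\Gu}$ from the first identity produces a pointwise lower bound on $\pfgm - \pfgu$.

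Summing this pointwise inequality over $a \in \bbB$ and invoking the elementary bounds $\sum_{\bbB}\pfgGu{\Gm} \le 1$ and $(1-m(x\nleftrightarrow y))\sum_{\bbB}\pfgGu{\Gu} \le \sum_{\bbB}\pfgGu{\Gu}$, I arrive at a lower bound of the form $m(x\leftrightarrow y)\sum_{\bbB}\pfgGm{\Gm} - \sum_{\bbB}\pfgGu{\Gu} - m(x\nleftrightarrow y)$. A short algebraic verification---essentially the identity $(1-p_*)(1-\sum_{\bbB}\pfgGu{\Gu}) \ge 0$ with $p_* = \min_{x\leftrightarrow y} m(x\leftrightarrow y)$---shows that this quantity is at least the right-hand side of \cref{ineq:prop3} once one replaces $m(x\leftrightarrow y)$ by $\min_{x\leftrightarrow y} m(x\leftrightarrow y)$ and $m(x\nleftrightarrow y)$ by $\max_{x\nleftrightarrow y} m(x\nleftrightarrow y)$.

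The main obstacle I anticipate is the final bookkeeping step. The natural decomposition produces an asymmetric coefficient $(1-m(x\nleftrightarrow y))$ in front of $\sum_{\bbB}\pfgGu{\Gu}$, whereas \cref{ineq:prop3} displays the cleaner factorization in which $\min_{x\leftrightarrow y} m(x\leftrightarrow y)$ multiplies $\sum_{\bbB}[\pfgGm{\Gm}-\pfgGu{\Gu}]$ as a single block. Showing that the asymmetric intermediate bound is uniformly stronger---so that the inequality can be safely weakened into the stated symmetric form---is the only nontrivial algebraic check; the remaining ingredients (the finite-set TV identity, law of total probability, and bounds on subset sums of PMFs) are standard.
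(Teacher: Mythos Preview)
Your proposal is correct and follows essentially the same approach as the paper's proof: the same TV identity over $\bbB$, the same law-of-total-probability decomposition in $\{\Gm\}$, dropping the nonnegative $(1-m(x\leftrightarrow y))\,\pfgGm{\Gu}$ term, and bounding the remaining pieces via $\sum_{\bbB}(\cdot)\le 1$. The only cosmetic difference is ordering---the paper replaces $m(x\leftrightarrow y)$ and $m(x\nleftrightarrow y)$ by their min and max \emph{before} collapsing $\sum_{\bbB}\pfgGu{\Gm}\le 1$, so its final algebraic check amounts to $q_*(1-\sum_{\bbB}\pfgGu{\Gm})+(1-p_*)(1-\sum_{\bbB}\pfgGu{\Gu})\ge 0$, which contains your identity $(1-p_*)(1-\sum_{\bbB}\pfgGu{\Gu})\ge 0$ as one of its two nonnegative summands.
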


\begin{proof}
See Appendix~\ref{app.proposition_3}.
\end{proof}

In the ideal case where the patch and spatial information are noiseless, we have $\min_{x \leftrightarrow y} m(x \leftrightarrow y)=1$ and $\max_{x \nleftrightarrow y} m(x \nleftrightarrow y)=0$. Then the right-hand side of \cref{ineq:prop3} simplifies to 
\begin{align}
&\sum_{(\varphi(x), \psi(\calG^{y})) \in \bbB}\Big\{\pfgGm{\Gm} \nn
&\qquad - \pfgGu{\Gu} \Big\}. \label{TV2}
\end{align}
In this case, we also have $\pfgm = \pfgGm{\Gm}$ and $\pfgu = \pfgGu{\Gu}$. Furthermore, from \cref{ass.case1}, any $(\varphi(x),\psi(\calG^y))$ such that $\pfgGm{\Gm} >0$ implies that $\pfgGu{\Gu}=0$. These probability measures are thus mutually singular and have a total variation distance of $1$.
Therefore, in the ideal case, the model perfectly distinguishes between $p(\varphi(x),\psi(\mathcal{G}^{y}) \mid {x \leftrightarrow y})$ and $p(\varphi(x),\psi(\mathcal{G}^{y}) \mid {x \nleftrightarrow y})$.

\section{Experiments}\label{sect:exper}
% \label{headings}

\subsection{Datasets}\label{sect:Datasets}

As there are no existing standard datasets for street-scene landmark patch-matching, we introduce in this paper two new datasets: the Landmark KITTI dataset and the Landmark Oxford dataset,\footnote{https://github.com/AI-IT-AVs/Landmark\_patch\_datasets} which are derived from the street-scene KITTI dataset \cite{geiger2012we} and the Oxford Radar RobotCar dataset \cite{barnes2020oxford}, respectively. 

Both datasets contain image frames and LiDAR scans captured from onboard cameras and Velodyne LiDAR sensors. The landmark patches are extracted from the full-sized image frames using the object detection neural network Faster R-CNN \cite{ren2016faster}. To facilitate detection efficacy, we manually label several street-scene compact landmark objects including traffic lights, traffic signs, poles, and facade windows for the sampled frames. The labels are used to train Faster R-CNN, which is used to produce landmark object detection for the image frames. The detected landmarks in bounding boxes are then used to obtain the landmark patches for our matching experiments with some intentionally included background, shown in \cref{fig_image_patches_pair_dataset} for example.

\begin{figure}[!htb]
\centering
\begin{center}
\includegraphics[width=0.85\linewidth]{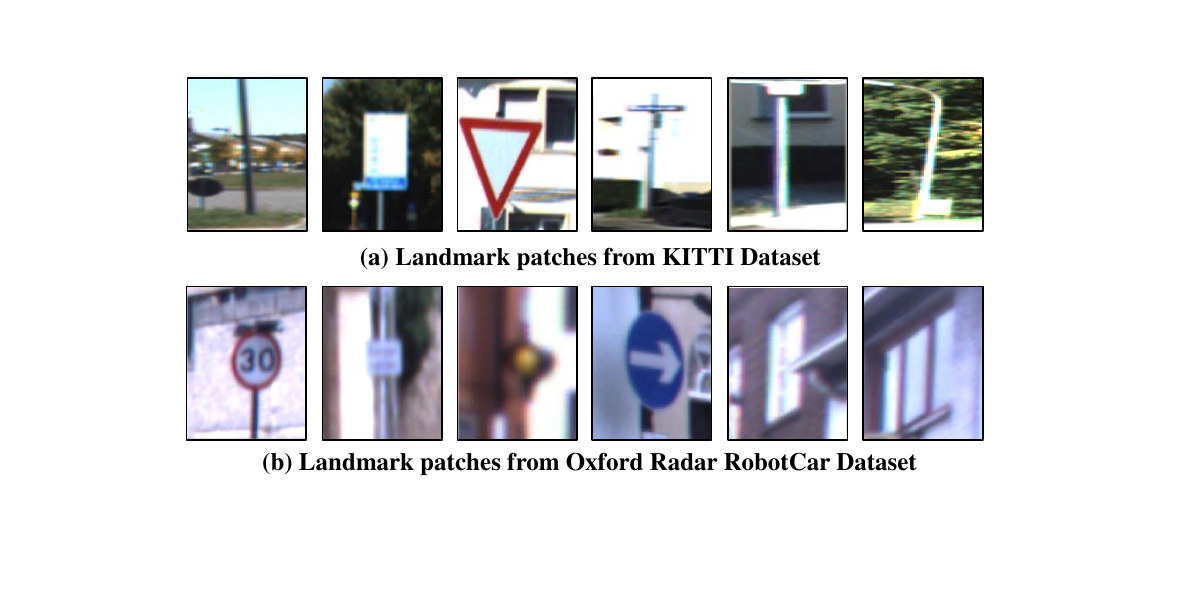}
\end{center}
\vspace{-0.5cm}
\caption{(a) and (b) are landmark patch samples (displayed with intentionally included background) from the KITTI dataset and Oxford Radar RobotCar dataset respectively.}
\label{fig_image_patches_pair_dataset}
\vspace{-0.1cm}
\end{figure}

To establish the patch-matching ground truth, we use the vehicle locations and collected LiDAR scans to build the $3$D LiDAR reference map similar to the operations in \cite{uy2018pointnetvlad}. The $3$D reference map is used to determine the landmark locations by projecting the $3$D LiDAR points to the image frames. The LiDAR points reflected from the landmark patch are read out to get the global locations of the corresponding landmark objects. We then compute the $\mathcal{L}_2$ distance of each landmark patch pair from two frames to determine the patch-matching ground truth. Some details of the two landmark patch-matching datasets are introduced as follows. More dataset preparation details are given in the supplementary material. %supplementary material.

\textbf{Landmark KITTI Dataset.}
The KITTI dataset\footnote{http://www.cvlibs.net/datasets/kitti} contains street-scene image frames and their corresponding LiDAR point clouds collected in Karlsruhe, Germany. We use the object labels provided by \cite{Alhaija2018IJCV} to detect landmark patches for all frames including traffic lights, traffic signs and poles. An example is shown in \cref{fig:label_kitti_dataset}. We do not include windows as landmarks in this dataset due to the lack of labels. Furthermore, to avoid ``trivial matchings'' between consecutive images, a minimum difference of $2 \mathrm{m}$ between the image frames is also set. 
The aforementioned operations are performed to obtain the landmark patch-matching ground truth by projecting the $3$D LiDAR scans to the image frames. Finally, $1500$ frames are selected for landmark patch-matching experiments. The dataset is randomly split into training and testing sets, with a ratio around $2:1$. In both training and testing, we select frame pairs that are captured at locations with relative distances not more than $25 \mathrm{m}$  to ensure the presence of common landmarks.

\begin{figure}[!htb]
\centering
\includegraphics[width=\linewidth, height=0.08\textheight]{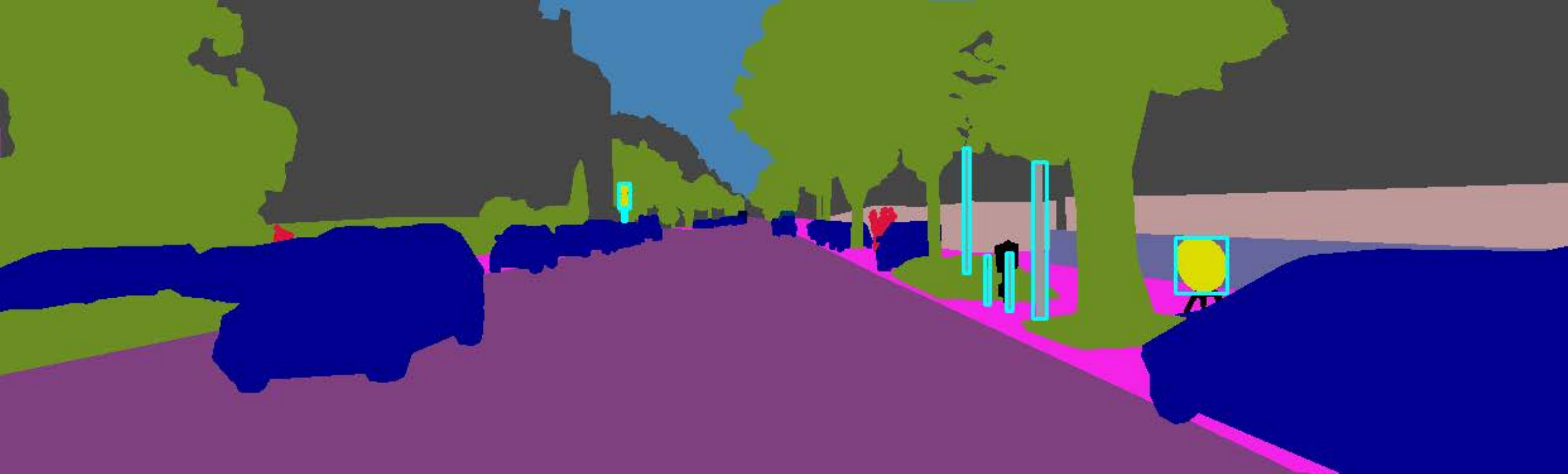}\hfill
\includegraphics[width=\linewidth, height=0.08\textheight]{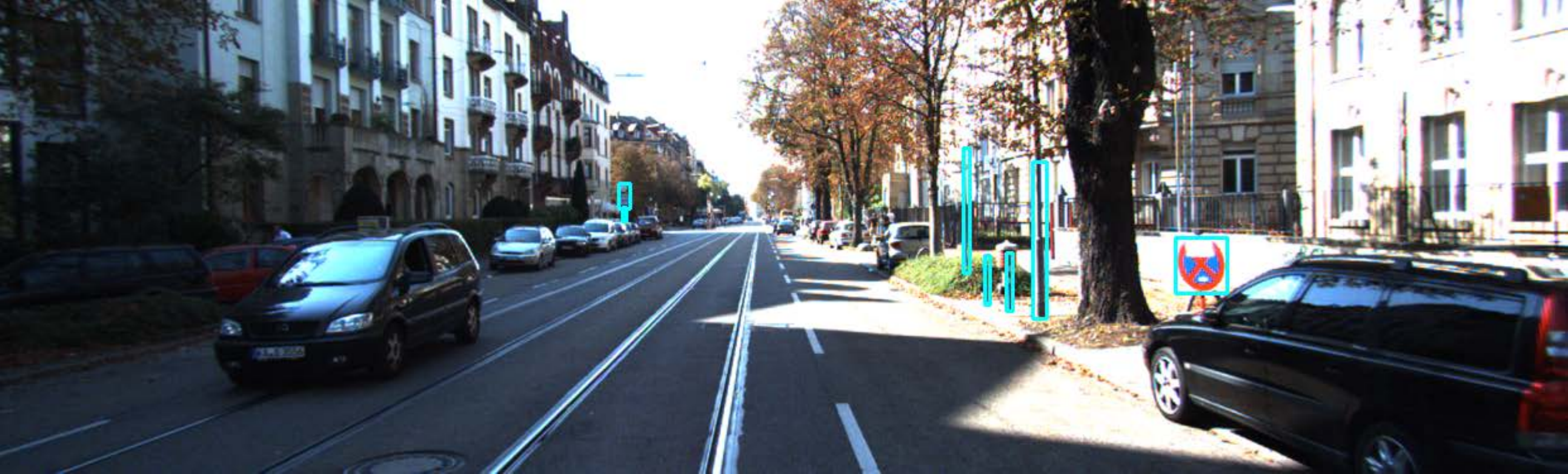}\hfill
\vspace{-0.2cm}
\caption{A semantic segmentation image and its corresponding real image, both with bounding box labels, from the KITTI dataset.}
\label{fig:label_kitti_dataset}
\vspace{-0.1cm}
\end{figure}

\textbf{Landmark Oxford Dataset.}
The Oxford Radar RobotCar dataset\footnote{http://ori.ox.ac.uk/datasets/radar-robotcar-dataset} contains image frames and LiDAR scans captured on the streets in Oxford, UK. We manually label landmarks including traffic lights, traffic signs, poles, and facade windows for $500$ sampled frames. An example is shown in \cref{fig:label_oxford_dataset}. We then train Faster R-CNN to obtain the landmarks for all $29,687$ frames. To avoid ``trivial matchings'' between consecutive images, a minimum difference of $2 \mathrm{m}$ between the image frames is also set. Finally, $3000$ frames are selected for landmark patch-matching experiments. The remaining steps are similar to that for the Landmark KITTI dataset.

\begin{figure}[!htb]
\centering
\includegraphics[width=0.5\linewidth, height=0.12\textheight]{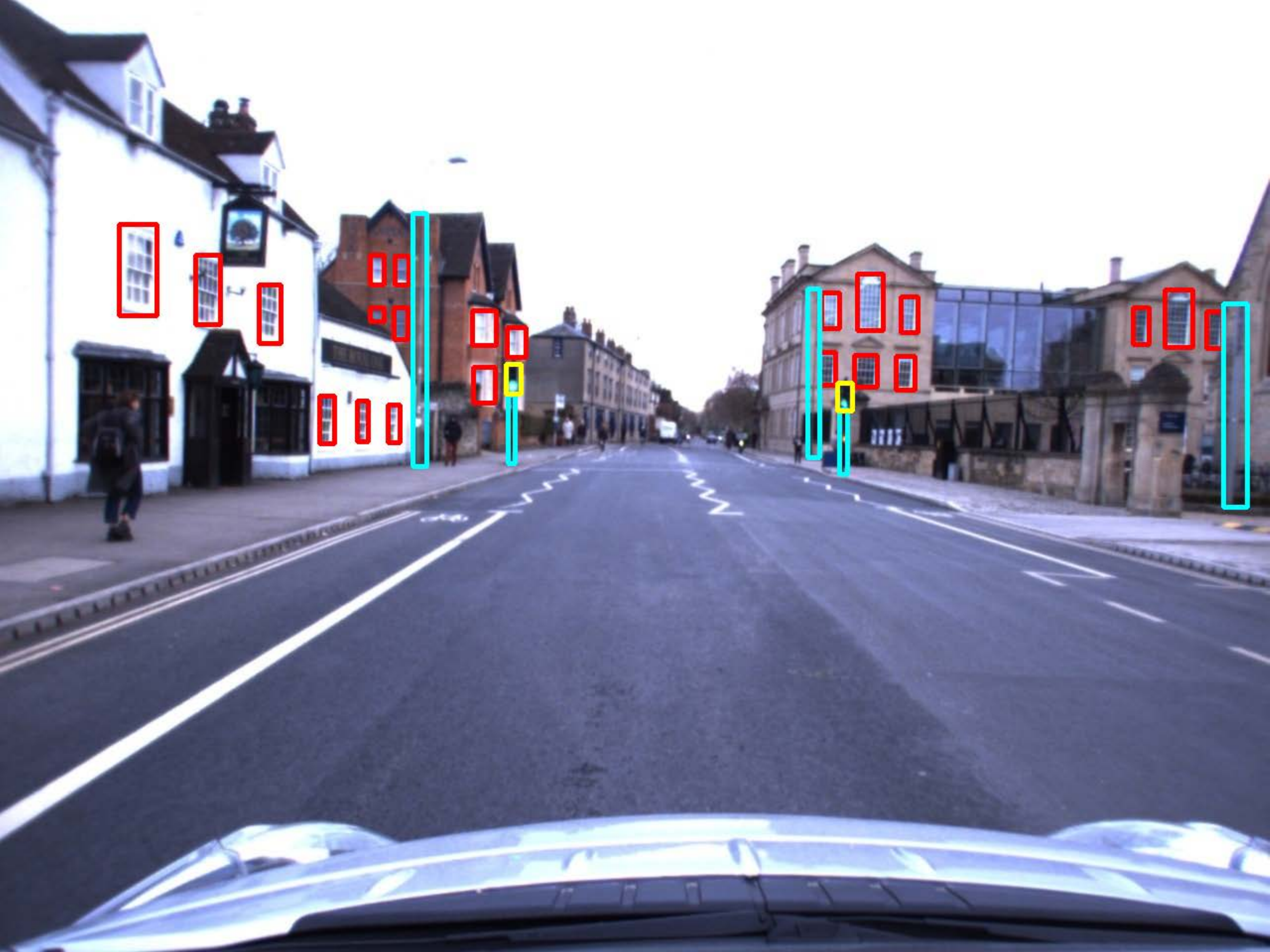}\hfill
\includegraphics[width=0.5\linewidth, height=0.12\textheight]{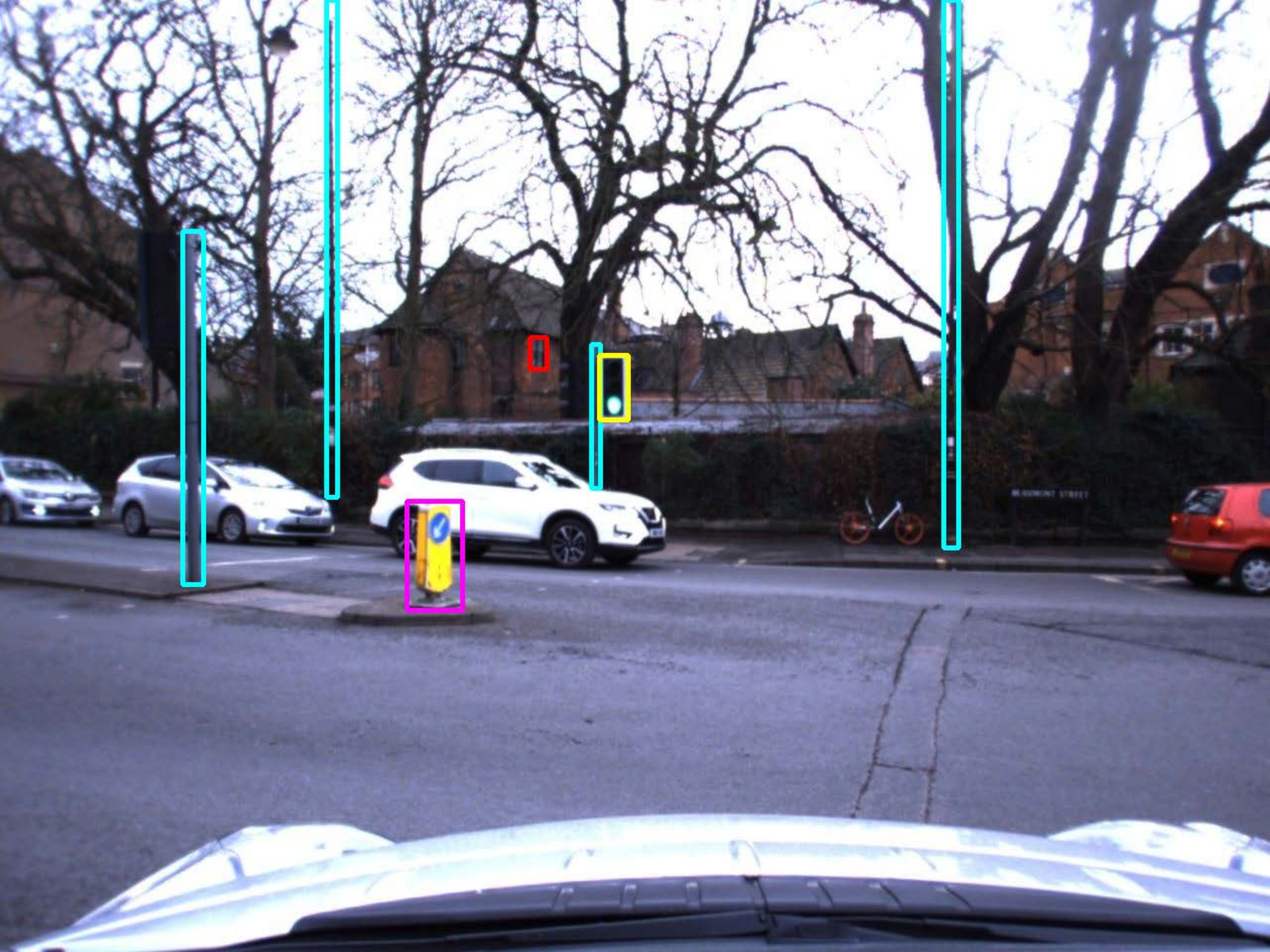}\hfill
\vspace{-0.2cm}
\caption{Examples of the ground truth landmark bounding box labels for the Oxford Radar RobotCar dataset.}
\label{fig:label_oxford_dataset}
\vspace{-0.3cm}
\end{figure}

\subsection{Experimental Details}\label{sect:experiment} 
\textbf{Baseline Methods.}
We compare VGIDM with several baseline methods, including MatchNet \cite{han2015matchnet}, SiameseNet \cite{melekhov2016siamese}, HardNet \cite{mishchuk2017working}, SOSNet \cite{tian2019sosnet}, D2-Net \cite{dusmanu2019d2}, ASLFeat \cite{luo2020aslfeat}, SuperGlue \cite{sarlin2020superglue} and LoFTR \cite{sun2021loftr}.
The MatchNet and SiameseNet are regarded as joint feature and metric learning methods, combining deep CNNs and an FC layer to learn features and their metrics. The decision-making process for the matching task is based on the output of the FC layer. 
HardNet and SOSNet focus on similarity measures to distinguish the learned high-dimensional features, where the feature descriptors are almost all based on deep CNNs consisting of several convolution layers with batch normalization (BN) or rectified linear units (ReLUs).
In testing, the Euclidean distance between the output patch features is used for the decision-making. 
D2-Net, ASLFeat, SuperGlue and LoFTR are based on keypoint correspondence and perform the matching task according to the ratio of the matched keypoints among the whole set of keypoints. In this regard, a patch pair with a large enough ratio of matched keypoints is regarded as a match.

\begin{table*}[!htb]
\caption{Matching performance on the Landmark KITTI dataset. The best and the second-best result for each criterion are highlighted in \red{red} and \blue{blue} respectively.} 
\label{Kitti-table}
\vspace{-0.3cm}
\begin{center}
\newcommand{\tabincell}[2]{\begin{tabular}{@{}#1@{}}#2\end{tabular}}
\begin{tabular}{c|c|c|c|c}
%\toprule\toprule
\hline\hline
{\bf Methods}  & {\bf Precision}
& {\bf Recall} & {\bf $F_1$-Score}
& {\bf AUC} \\
\hline 
MatchNet \cite{han2015matchnet}               & 0.9039 \scriptsize{$\pm$ 0.0027}  & 0.9483 \scriptsize{$\pm$ 0.0105} & 0.9255 \scriptsize{$\pm$ 0.0050} & 0.8229 \scriptsize{$\pm$ 0.0055} \\ 
SiameseNet \cite{melekhov2016siamese}         & 0.7953 \scriptsize{$\pm$ 0.0124}  & 0.8960 \scriptsize{$\pm$ 0.0208} & 0.8426 \scriptsize{$\pm$ 0.0159} & 0.8328 \scriptsize{$\pm$ 0.0162} \\ 
HardNet \cite{mishchuk2017working}            & 0.9041 \scriptsize{$\pm$ 0.0016}  & 0.9562 \scriptsize{$\pm$ 0.0177} & 0.9294 \scriptsize{$\pm$ 0.0093} & 0.8261 \scriptsize{$\pm$ 0.0088} \\ 
SOSNet  \cite{tian2019sosnet}                 & 0.9042 \scriptsize{$\pm$ 0.0015}  & 0.9563 \scriptsize{$\pm$ 0.0160} & 0.9294 \scriptsize{$\pm$ 0.0083} & 0.8261 \scriptsize{$\pm$ 0.0080} 
\\
D2-Net \cite{dusmanu2019d2}                   & 0.9115 \scriptsize{$\pm$ 0.0031}  & 0.8789 \scriptsize{$\pm$ 0.0131} & 0.8949 \scriptsize{$\pm$ 0.0076} & 0.8115 \scriptsize{$\pm$ 0.0082} \\ 
ASLFeat \cite{luo2020aslfeat}                 & 0.9189 \scriptsize{$\pm$ 0.0022}  & 0.9008 \scriptsize{$\pm$ 0.0082} & 0.9098 \scriptsize{$\pm$ 0.0048} & 0.8312 \scriptsize{$\pm$ 0.0057} \\ 
SuperGlue \cite{sarlin2020superglue}          & 0.9067 \scriptsize{$\pm$ 0.0039}  & 0.9125 \scriptsize{$\pm$ 0.0123} & 0.9096 \scriptsize{$\pm$ 0.0072} & 0.8155 \scriptsize{$\pm$ 0.0093} \\ 
LoFTR \cite{sun2021loftr}                     & 0.9069 \scriptsize{$\pm$ 0.0025}  & 0.9243 \scriptsize{$\pm$ 0.0110} & 0.9154 \scriptsize{$\pm$ 0.0059} & 0.8197 \scriptsize{$\pm$ 0.0064} \\ 
\hline 
\tabincell{c}{VGIDM (GAT) [ours] \\}
                   & \blue{0.9425} \scriptsize{$\pm$ 0.0020}   & \blue{0.9733} \scriptsize{$\pm$ 0.0050}        & \red{0.9577} \scriptsize{$\pm$ 0.0026}       & \blue{0.8977} \scriptsize{$\pm$ 0.0038} \\ 
\tabincell{c}{VGIDM (GCN) [ours]\\}
                   & 0.9340 \scriptsize{$\pm$ 0.0027}  & \red{0.9753} \scriptsize{$\pm$ 0.0076} & 0.9543 \scriptsize{$\pm$ 0.0044} & 0.8847 \scriptsize{$\pm$ 0.0063} \\ 
\tabincell{c}{VGIDM (GraphSAGE) [ours]\\}
                   & \red{0.9464} \scriptsize{$\pm$ 0.0042}         & 0.9653 \scriptsize{$\pm$ 0.0129}  & \blue{0.9557} \scriptsize{$\pm$ 0.0073} & \red{0.9007} \scriptsize{$\pm$ 0.0098} \\
\hline\hline
\end{tabular}
\end{center}
\vspace{-0.3cm}
\end{table*}

\begin{figure*}[!htb]
\begin{center}
\includegraphics[width=\linewidth]{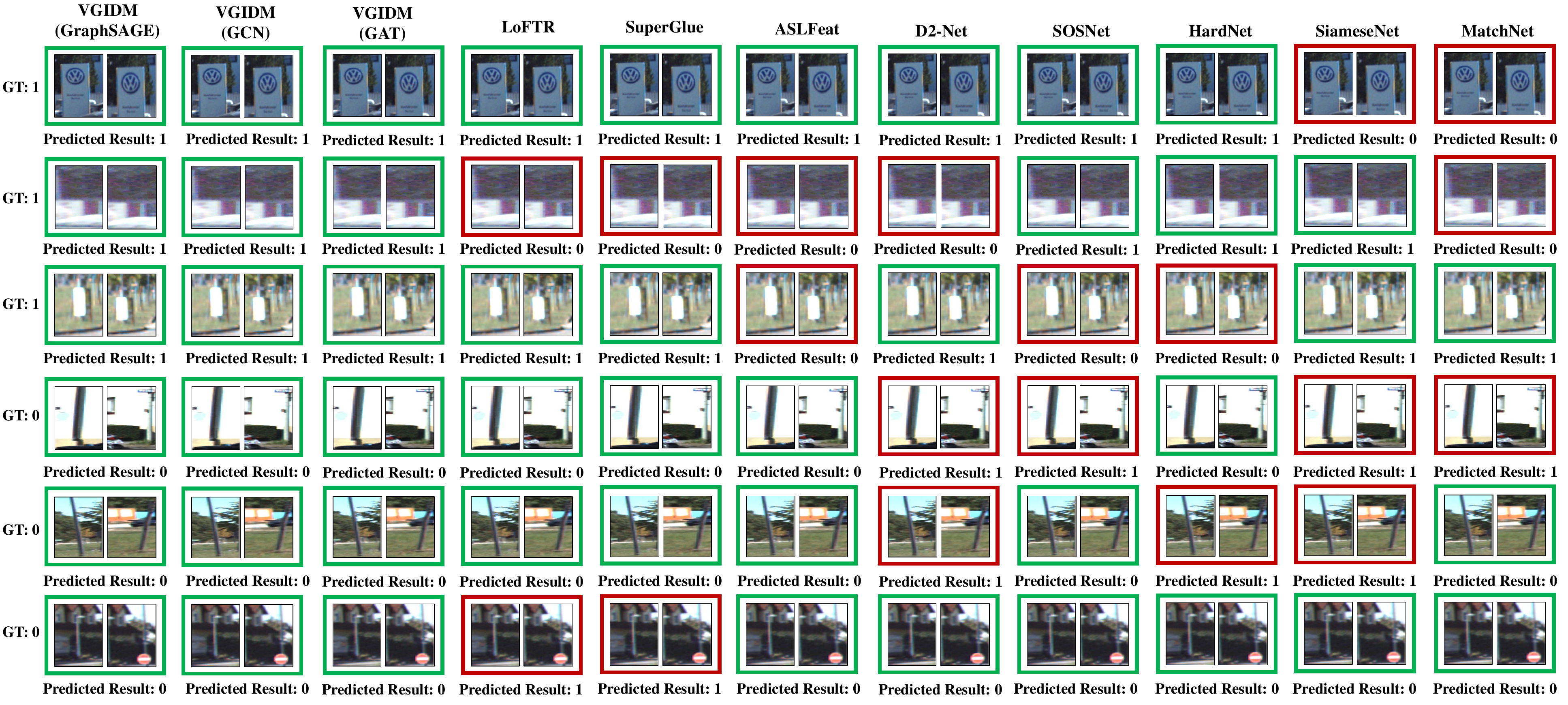}
\end{center}
\vspace{-0.3cm}
\caption{Examples of matched and mismatched pairs from the Landmark KITTI dataset. A green or red box indicates a correct or incorrect prediction result, respectively. ``GT'' stands for ground truth.}
\label{fig_kitti_results}
\vspace{-0.3cm}
\end{figure*}

\textbf{Model Setting.}
We use Resnet$18$ in \cite{Resnet} for the feature descriptor {$f$}, with output feature dimension $512$ after $17$ convolution layers.
In VGIDM, we choose several \gls{GNN}s for the neighborhood graph embedding, including GAT \cite{velivckovic2017graph}, GCN \cite{kipf2017semi} and GraphSAGE \cite{hamilton2017inductive}. 
When using GAT, the network contains $2$ GAT blocks with the exponential linear unit (ELU).
For each GAT block, we use $4$ attention heads, which compute $512$ hidden features in total. 
As for GCN and GraphSAGE, they both contain $2$ corresponding blocks with ReLU, where there are $512$ hidden features in each block.
Further details of our model architecture are provided in the supplementary material.
The Adam optimizer is selected with a learning rate of $0.0001$ to train the model by minimizing its corresponding loss in \cref{eq.L_loss}.
The number of training epochs is $150$ for all datasets.

\textbf{VGIDM with Image Depth Estimation.} 
To test VGIDM in the case where precise depth information like that provided by LiDAR is unavailable, we construct neighborhood graphs using estimated image depth and with different \gls{GNN}s in the backbone.
Specifically, we include an image depth estimation method called Monocular Depth Prediction Module proposed in \cite{yin2021learning}. 
Based on the image depth estimation, we can obtain the rough relative locations of the landmarks in the street scenes and use them to construct a neighborhood graph for each landmark in the test procedure.  
The depth estimation performance is provided in the supplementary material. 
The estimated image pixel depths are transformed to $3$D locations \gls{wrt} the camera using its intrinsic matrix. We then use the estimated locations to test VGIDM. In this depth estimation method, the pre-trained ResNeXt$101$ model from \cite{yin2021learning} is utilized in our experiments, and the images are from the two landmark datasets. 
We extract the predicted depth points from the static roadside landmarks, including traffic lights, traffic signs, and poles, to compute the locations of the objects. 
Therefore, we can construct the neighborhood graphs and test the VGIDM. 

\textbf{Implementation.} 
For a given sequence of street scene frames captured by a vehicular camera, we perform the following training steps: 
i) We use object detection methods like faster R-CNN \cite{ren2016faster} to extract landmark patches for each frame. The landmarks include traffic lights, traffic signs, poles, and windows. 
ii) We manually label matching landmark patches. To determine the global locations of these landmarks, we combine vehicular Global Positioning System (GPS) information with data from LiDAR or stereo cameras. With this information, we are able to establish the ground truth for the matching landmark patches between two frames captured at the same location. 
iii) We take the global locations of landmarks to construct the neighborhood graph for each landmark patch based on $K$-NN. 
iv) We train the VGIDM using landmark patch pairs with ground-truth labels. 
The details of VGIDM with training loss and test score are given in \cref{sect:model_details}. 

During testing, we perform steps i and iii as above but in step iii, we create neighborhood graphs by estimating the relative locations of landmarks using a stereo camera or a depth estimation method, which replaces the need for GPS and LiDAR information. The ground truth for computing the testing performance is found based on GPS and LiDAR information. The remaining steps are the same as those used during training.

\subsection{Performance Evaluation}\label{sec.Performance_Evaluation}\ \par

\textbf{Performance on Landmark KITTI Dataset.} 
\cref{Kitti-table} summarizes the test performance of models trained with $150$ training epochs on the Landmark KITTI dataset.
The evaluation uses statistics including mean value and standard deviation from $5$ experiments.
From \cref{Kitti-table}, we observe that VGIDM (with GAT, GCN or GraphSAGE) outperforms the other baseline methods under all four criteria, with a slight performance difference among these VGIDM models.
This implies that graph-based learning makes a positive difference in matching efficiency.
Moreover, we observe that VGIDM with GAT has a more stable performance than the other methods.
Several examples of the matching prediction are shown in \cref{fig_kitti_results}.

\textbf{Performance on Landmark Oxford Dataset.} 
From \cref{Oxford-table}, we observe that the VGIDM variants with different GNNs outperform the other benchmark methods on almost all measures.
Since the Oxford Radar RobotCar and KITTI datasets have different image qualities and are collected in different street scenes, the performances on both datasets are different.
From \cref{Kitti-table,Oxford-table}, we also observe that nearly all the methods have better performance on the Landmark KITTI dataset compared with the Landmark Oxford dataset.
This may be caused by more similarity among the window patches in the Landmark Oxford dataset, which makes distinguishing them more difficult.
A few matching prediction examples from the Landmark Oxford dataset are shown in \cref{fig_oxford_results}.

\begin{table*}[!htb]
\caption{Matching performance on the Landmark Oxford dataset. The best and the second-best result for each criterion are highlighted in \red{red} and \blue{blue} respectively. }
\label{Oxford-table}
\vspace{-0.3cm}
\begin{center}
\newcommand{\tabincell}[2]{\begin{tabular}{@{}#1@{}}#2\end{tabular}}
\begin{tabular}{c|c|c|c|c}
\hline\hline  
{\bf Methods}  & {\bf Precision}
& {\bf Recall} & {\bf $F_1$-Score}
& {\bf AUC} \\  
\hline 
MatchNet \cite{han2015matchnet}              & 0.8742 \scriptsize{$\pm$ 0.0068}  & 0.9589 \scriptsize{$\pm$ 0.0047} & 0.9146 \scriptsize{$\pm$ 0.0025} & 0.7723 \scriptsize{$\pm$ 0.0119} \\ 
SiameseNet \cite{melekhov2016siamese}        & 0.7210 \scriptsize{$\pm$ 0.0086}  & 0.8968 \scriptsize{$\pm$ 0.0109} & 0.7992 \scriptsize{$\pm$ 0.0076} & 0.7748 \scriptsize{$\pm$ 0.0088} \\ 
HardNet \cite{mishchuk2017working}           & 0.8762 \scriptsize{$\pm$ 0.0011}  & 0.9533 \scriptsize{$\pm$ 0.0094} & 0.9131 \scriptsize{$\pm$ 0.0048} & 0.7747 \scriptsize{$\pm$ 0.0047} \\ 
SOSNet \cite{tian2019sosnet}                 & 0.8763 \scriptsize{$\pm$ 0.0010}  & 0.9544 \scriptsize{$\pm$ 0.0086} & 0.9136 \scriptsize{$\pm$ 0.0044} & 0.7752 \scriptsize{$\pm$ 0.0043} \\ 
D2-Net \cite{dusmanu2019d2}                  & 0.8032 \scriptsize{$\pm$ 0.0033}  & 0.9005 \scriptsize{$\pm$ 0.0084} & 0.8491 \scriptsize{$\pm$ 0.0052} & 0.6194 \scriptsize{$\pm$ 0.0078} \\ 
ASLFeat \cite{luo2020aslfeat}                & 0.8729 \scriptsize{$\pm$ 0.0036}  & 0.9048 \scriptsize{$\pm$ 0.0073} & 0.8886 \scriptsize{$\pm$ 0.0035} & 0.7548 \scriptsize{$\pm$ 0.0062} \\ 
SuperGlue \cite{sarlin2020superglue}         & 0.8639 \scriptsize{$\pm$ 0.0054}  & 0.8747 \scriptsize{$\pm$ 0.0077} & 0.8692 \scriptsize{$\pm$ 0.0049} & 0.7305 \scriptsize{$\pm$ 0.0099} \\ 
LoFTR  \cite{sun2021loftr}                   & 0.8515 \scriptsize{$\pm$ 0.0020}  & \red{0.9837} \scriptsize{$\pm$ 0.0060} & 0.9129 \scriptsize{$\pm$ 0.0029} & 0.7346 \scriptsize{$\pm$ 0.0045} \\ 
\hline 
\tabincell{c}{VGIDM (GAT) [ours] \\}
                   & \red{0.9052} \scriptsize{$\pm$ 0.0047}  & 0.9517 \scriptsize{$\pm$ 0.0044} & \blue{0.9278} \scriptsize{$\pm$ 0.0040} & \red{0.8263} \scriptsize{$\pm$ 0.0092}\\ 
\tabincell{c}{VGIDM (GCN) [ours]\\}
                   & 0.8918 \scriptsize{$\pm$ 0.0051}  & 0.9515 \scriptsize{$\pm$ 0.0077} & 0.9206 \scriptsize{$\pm$ 0.0037} & 0.8025 \scriptsize{$\pm$ 0.0087} \\ 
\tabincell{c}{VGIDM (GraphSAGE) [ours]\\}
                   & \blue{0.8938} \scriptsize{$\pm$ 0.0046} & \blue{0.9648} \scriptsize{$\pm$ 0.0052} & \red{0.9279} \scriptsize{$\pm$ 0.0045} & \blue{0.8104} \scriptsize{$\pm$ 0.0096} \\
\hline\hline 
\end{tabular}%}
\end{center}
\vspace{-0.3cm}
\end{table*}

\begin{figure*}[!htb]
\begin{center}
\includegraphics[width=\linewidth]{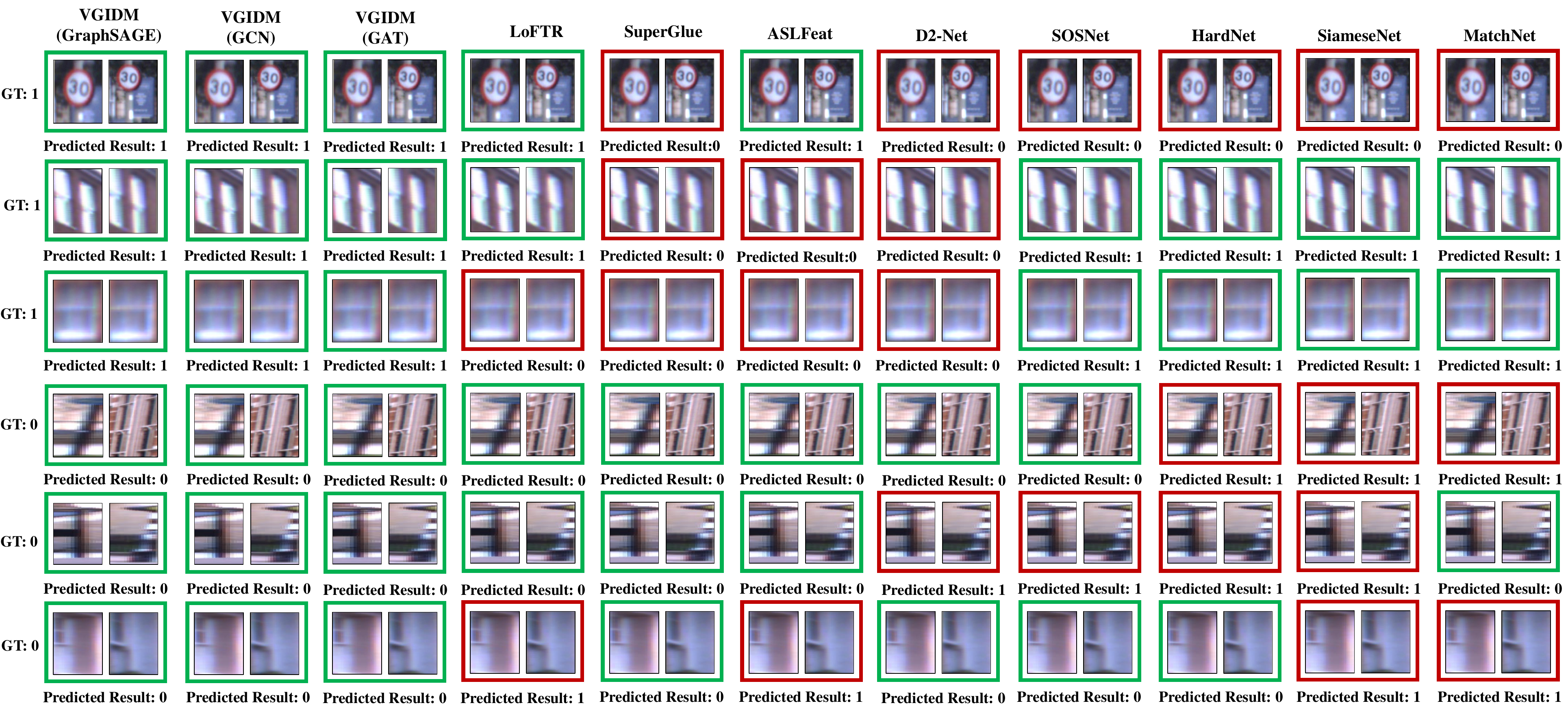}
\end{center}
\vspace{-0.3cm}
\caption{Examples of matched and mismatched pairs from the Landmark Oxford dataset. A green or red box indicates a correct or incorrect prediction result, respectively. ``GT'' stands for ground truth.}
\label{fig_oxford_results}
\vspace{-0.3cm}
\end{figure*}

\textbf{Performance Analysis.}
The VGIDM variants (with GAT, GCN or GraphSAGE) not only make use of landmark patch information but also the neighborhood information in the decision-making process for the matching task. 
Other feature descriptor learning as well as joint feature and metric learning methods such as MatchNet, SiameseNet, HardNet and SOSNet, depend only on the individual image patch rather than the neighborhood relationships. An erroneous match can happen between patches from two similar but distinct objects.
VGIDM mitigates this error by using the neighborhood information. However, VGIDM requires more computing resources for neighborhood graph processing. 

On the other hand, keypoint-based learning methods such as D2-Net, ASLFeat, SuperGlue and LoFTR, suffer from low pixel resolution of the image patches. As a landmark can be far away from the camera on the vehicle, its corresponding image patch can be small. As a result, it is more likely for these models to make mistakes in the matching decision. 

\textbf{Cross-Validation.}
To evaluate the generalization capability of VGIDM, we train VGIDM on the Landmark Oxford dataset and test it on the Landmark KITTI dataset. 
From \cref{cross-Kitti-Oxford-table} and \cref{Kitti-table}, we observe that the VGIDM variants still outperform the other baselines in all metrics.
From \cref{cross-Kitti-Oxford-table} and \cref{Oxford-table}, when we train on the Landmark KITTI dataset and test on the Landmark Oxford dataset, VGIDM outperforms the other baselines in precision and AUC.
Since the Landmark Oxford dataset contains windows as landmarks while not the Landmark KITTI dataset, the test performance on the Landmark Oxford dataset deteriorates more significantly. 
In \cref{Kitti-table,Oxford-table}, the baselines D2-Net, ASLFeat, SuperGlue and LoFTR do not perform training and test on the same dataset. Since there is no point-level ground-truth for our landmark patches, we adopt pre-trained models for these baselines from the literature \cite{dusmanu2019d2,luo2020aslfeat,sarlin2020superglue,sun2021loftr}. The other baselines are trained and tested on the same datasets. 
\begin{table*}[!htb]
\caption{
Cross-Validation on the Landmark KITTI dataset or Landmark Oxford dataset using the trained model based on the Landmark Oxford dataset or Landmark KITTI dataset, respectively. The ``Best in \cref{Kitti-table}'' or ``Best in \cref{Oxford-table}'' method refers to the best-performing baseline out of D2-Net, ASLFeat, SuperGlue and LoFTR from \cref{Kitti-table} or \cref{Oxford-table}.
}
\label{cross-Kitti-Oxford-table}
\vspace{-0.3cm}
\begin{center}
\newcommand{\tabincell}[2]{\begin{tabular}{@{}#1@{}}#2\end{tabular}}
\begin{tabular}{c|c|c|c|c|c}
\hline\hline  
{\bf Cross-Validation} & {\bf Methods}  & {\bf Precision}
& {\bf Recall} & {\bf $F_1$-Score}
& {\bf AUC} \\ 
\hline 
\multirow{3}{*}{\tabincell{c}{Oxford dataset (Train) \\ \& KITTI dataset (Test) \\}} 
& \tabincell{c}{VGIDM (GAT) \\}
                   & 0.9238 \scriptsize{$\pm$ 0.0020}  & 0.9047 \scriptsize{$\pm$ 0.0039} & 0.9141 \scriptsize{$\pm$ 0.0027} & 0.8403 \scriptsize{$\pm$ 0.0041}\\ 
& \tabincell{c}{VGIDM (GCN) \\}
                   & 0.9084 \scriptsize{$\pm$ 0.0019}  & \textbf{0.9579} \scriptsize{$\pm$ 0.0074} & 0.9325  \scriptsize{$\pm$ 0.0041} & 0.8341 \scriptsize{$\pm$ 0.0050} \\ 
& \tabincell{c}{VGIDM (GraphSAGE)  \\}
                   & \textbf{0.9255} \scriptsize{$\pm$ 0.0029} & 0.9483 \scriptsize{$\pm$ 0.0102} & \textbf{0.9368} \scriptsize{$\pm$ 0.0061} & \textbf{0.8597} \scriptsize{$\pm$ 0.0080} \\
\hline
\tabincell{c}{Baselines tested on KITTI dataset\\}
& \tabincell{c}{Best in \cref{Kitti-table} \\}
                   & 0.9189 \scriptsize{$\pm$ 0.0022} & 0.9243 \scriptsize{$\pm$ 0.0110}  & 0.9154 \scriptsize{$\pm$ 0.0059} & 0.8312 \scriptsize{$\pm$ 0.0057} \\
\hline\hline 
\multirow{3}{*}{\tabincell{c}{KITTI dataset (Train) \\ \& Oxford dataset (Test) \\}} 
& \tabincell{c}{VGIDM (GAT) \\}
                   & \textbf{0.9022} \scriptsize{$\pm$ 0.0031}  & 0.8930 \scriptsize{$\pm$ 0.0040} & 0.8976 \scriptsize{$\pm$ 0.0024} & \textbf{0.8013} \scriptsize{$\pm$  0.0051}\\ 
& \tabincell{c}{VGIDM (GCN) \\}
                   & 0.8909 \scriptsize{$\pm$ 0.0043}  & 0.8845 \scriptsize{$\pm$ 0.0124} & 0.8877  \scriptsize{$\pm$ 0.0074} & 0.7799 \scriptsize{$\pm$ 0.0096} \\ 
& \tabincell{c}{VGIDM (GraphSAGE)  \\}
                   & 0.8918 \scriptsize{$\pm$ 0.0055} & 0.9098 \scriptsize{$\pm$ 0.0064} & 0.9007 \scriptsize{$\pm$ 0.0043} & 0.7893 \scriptsize{$\pm$ 0.0099} \\
\hline
\tabincell{c}{Baselines tested on Oxford dataset\\}
& \tabincell{c}{Best in \cref{Oxford-table} \\}
                   & 0.8729 \scriptsize{$\pm$ 0.0036} & \textbf{0.9837} \scriptsize{$\pm$ 0.0060}  & \textbf{0.9129} \scriptsize{$\pm$ 0.0029} & 0.7548  \scriptsize{$\pm$ 0.0062} \\
\hline\hline 
\end{tabular}
\end{center}
\vspace{-0.3cm}
\end{table*}

\subsection{Ablation Study}\label{sect:ablation}

\textbf{Feature Pair Discrimination.}
We perform ablation studies on different feature pairs for the matching task shown in \cref{feature-compare-table}. The feature pair comparison include $d\parens{f(x), f(y)}$ for Resnet features, $d\parens{\rho(x), \rho(y)}$ for vertex embeddings, $d\parens{\varphi(x),\varphi(y)}$ for ensemble vertex embeddings, as well as $d\parens{\psi(\calG^x),\psi(\calG^y)}$ for neighborhood graph embeddings, where the learnable layer $d$ given by \cref{eq:ddd} as the metric. For each feature comparison, we train the corresponding models for $150$ epochs and select the optimal test result for the matching task based on the Landmark Oxford dataset. 
From \cref{feature-compare-table}, we observe that our proposed vertex-to-graph comparison outperforms the other feature pairs in most metrics. 
The feature pairs containing graph information, like $\psi(\calG^x)$ and $\psi(\calG^y)$, have an advantage over those based only on vertex information, like $f(x)$ and $f(y)$.
This demonstrates the benefit of utilizing graph information. 
\begin{table}[!htb]
\caption{
Ablation study for feature pair discrimination.
}
\label{feature-compare-table}
\vspace{-0.3cm}
\begin{center}
\newcommand{\tabincell}[2]{\begin{tabular}{@{}#1@{}}#2\end{tabular}}
\begin{tabular}{c|c|c|c|c}
\hline\hline  
\tabincell{c}{{\bf Feature} \\ {\bf Comparison}\\}  & {\bf Precision} & {\bf Recall} & {\bf $F_1$-Score} & {\bf AUC} \\ 
\hline 
\tabincell{c}{$d\parens{f(x),f(y)}$ \\}
% \tabincell{c}{$f(x)$ \& $f(y)$ \\}
                   & 0.8817          & 0.9146       & 0.8979        & 0.7733 \\ \hline 
% \tabincell{c}{$\rho(x)$ \& $\rho(y)$ \\}
\tabincell{c}{$d\parens{\rho(x), \rho(y)}$ \\}
                   & 0.7772          & \bf{0.9720}  & 0.8637        & 0.5680 \\ \hline 
% \tabincell{c}{$\varphi(x)$ \& $\varphi(y)$  \\}
\tabincell{c}{$d\parens{\varphi(x), \varphi(y)}$  \\}
                   & 0.8819          & 0.9560       & 0.9175        & 0.7860 \\ \hline 
% \tabincell{c}{$\psi(\calG^x)$ \& $\psi(\calG^y)$  \\}
\tabincell{c}{$d\parens{\psi(\calG^x),\psi(\calG^y)}$  \\}
                   & 0.9029          & 0.9427       & 0.9224        & 0.8193 \\
\hline
% \tabincell{c}{$\varphi(x)$ \& $\psi(\calG^y)$  \\}
\tabincell{c}{$d\parens{\varphi(x),\psi(\calG^y)}$  \\}
                   & \bf{0.9097}     & 0.9533       & \bf{0.9310}   & \bf{0.8347} \\
\hline\hline 
\end{tabular}
\end{center}
\vspace{-0.3cm}
\end{table}

\textbf{Discriminator Function.} 
We evaluate the effectiveness of the learnable discriminator $d$ by comparing it with other discriminator functions. Specifically, we replace the learnable discriminator $d$ with either cosine similarity or $\mathcal{L}_2$ distance in \cref{eq.d_M}. We evaluate the patch-matching task on the Landmark Oxford dataset. From \cref{metrics-table}, we observe that the proposed learnable discriminator $d$ outperforms the other discriminators, which is likely due to the adaptability of neural networks to different feature dimensions. 
\begin{table}[!htb]
\caption{
Ablation study for different discriminator functions.
}
\label{metrics-table}
\vspace{-0.3cm}
\begin{center}
\newcommand{\tabincell}[2]{\begin{tabular}{@{}#1@{}}#2\end{tabular}}
\begin{tabular}{c|c|c|c|c}
\hline\hline  
\tabincell{c}{{\bf Feature} \\ {\bf Discriminator}\\} 
& {\bf Precision} & {\bf Recall} & {\bf $F_1$-Score} & {\bf AUC} \\ 
\hline 
% \multirow{2}{*}{\tabincell{c}{ Cosine \\ similarity \\}} 
{\tabincell{c}{ Cosine similarity \\}} 
& 0.9007            & 0.8707          & 0.8854           & 0.7913 \\
\hline 
{\tabincell{c}{$\mathcal{L}_2$ distance \\}} 
& 0.7277            & 0.8800          & 0.7966           & 0.5540 \\
\hline 
{\tabincell{c}{Learnable $d$ \\}} 
& {\bf 0.9097}      & {\bf 0.9533}    & {\bf 0.9310}     & {\bf 0.8347} \\
\hline\hline 
\end{tabular}
\end{center}
\vspace{-0.5cm}
\end{table}

%%%-------------------------
\begin{table}[!htb]
\caption{
Matching performance of different methods based on spatial neighborhood information.
}
\label{neighborhood-comparison-table}
\vspace{-0.3cm}
\begin{center}
\resizebox{0.48\textwidth}{!}{\setlength{\tabcolsep}{1.5pt} 
\newcommand{\tabincell}[2]{\begin{tabular}{@{}#1@{}}#2\end{tabular}}
\begin{tabular}{c|c|c|c|c}
\hline\hline  
{\bf Methods}  & {\bf Precision} & {\bf Recall} & {\bf $F_1$-Score} & {\bf AUC} \\ 
\hline 
\tabincell{c}{SIFT \cite{lowe2004object}+RANSAC \cite{fischler1981random} (+neighbors)\\} 
                   & 0.8965              & 0.9467             & 0.9209             & 0.8093 \\ \hline 
\tabincell{c}{MatchNet \cite{han2015matchnet} (+neighbors)\\} 
                   & 0.9023              & 0.9600             & 0.9302             & 0.8240 \\ \hline 
\tabincell{c}{SuperGlue \cite{sarlin2020superglue} (+neighbors)\\} 
                   & 0.9225              & 0.9200             & 0.9212             & 0.8440 \\ \hline 
\tabincell{c}{LoFTR \cite{sun2021loftr} (+neighbors)\\} 
                   & 0.9192              & 0.9413             & 0.9302             & 0.8467 \\ \hline
\tabincell{c}{VGIDM [ours] \\} 
                   & \textbf{0.9280}     & \textbf{0.9627}    & \textbf{0.9450}    & \textbf{0.8693}  \\
\hline\hline 
\end{tabular}}
\end{center}
\vspace{-0.3cm}
\end{table}

\textbf{Spatial Neighborhood Information.}
We investigate whether the performance improvement of VGIDM is mainly due to the spatial neighborhood information. To do this, we introduce the neighborhood graphs used in VGIDM to other baseline methods. Specifically, for a given vertex, we sort its neighbors according to increasing distances from it. We then use each baseline method to compare not only the vertex pair but also the pairs of their corresponding neighbors with the same sort order. Then, we calculate the average of the predicted scores for the vertex pair and its neighbor pairs. Finally, we decide whether there is a match based on a threshold, which is a hyperparameter tuned separately to achieve the best performance for each baseline. 
\Cref{neighborhood-comparison-table} shows results on the Landmark KITTI dataset, where the best test performance for each baseline model is selected.
We compare with VGIDM (GraphSAGE), which is trained on the Landmark Oxford dataset.  
We observe that including neighborhood information generally improves the performance of every baseline, but VGIDM still outperforms them. This indicates that the neighborhood graph feature representation in VGDIM has advantages in the patch-matching task. 
As shown in \cref{time-table-neighborhood-comparison}, which presents the inference runtime for one pair of frames (with around twenty patch pairs for comparison), the computational complexity of VGIDM is lower than most baselines, except MatchNet.

\begin{table}[!htb]
\caption{
Inference runtime comparison for different methods on the Landmark KITTI dataset.
}
\label{time-table-neighborhood-comparison}
\vspace{-0.3cm}
\begin{center}
\resizebox{0.48\textwidth}{!}{\setlength{\tabcolsep}{1pt} 
\newcommand{\tabincell}[2]{\begin{tabular}{@{}#1@{}}#2\end{tabular}}
\begin{tabular}{c|c|c|c|c|c}
\hline\hline 
{\bf Methods}  & \tabincell{c}{SIFT+\\ RANSAC \\ (+neighbors)\\} 
& \tabincell{c}{MatchNet \\ (+neighbors)\\}  &  \tabincell{c}{SuperGlue \\ (+neighbors)\\}  
& \tabincell{c}{LoFTR \\ (+neighbors)\\} & \tabincell{c}{VGIDM \\ (GraphSAGE) \\}  \\
\hline
\tabincell{c}{{\bf Inference} \\ {\bf runtime} \\} &  0.7035s & 0.1591s & 3.8072s & 4.3740s & 0.2013s \\
\hline\hline 
\end{tabular}}
\end{center}
\vspace{-0.8cm}
\end{table}
%%------------------------------------------------------------------------------------------------------

\subsection{Computational Complexity}\label{sect:compuational_comlexity}
To evaluate the runtime performance, we test VGIDM on an NVIDIA RTX A5000 GPU. 
\cref{time-table} shows the inference runtime (mean time for one pair of frames in the testing phase) for the VGIDM variants with different GNNs. 
Specifically, given a pair of frames (i.e., full-size images), an average of around twenty patch pairs are compared, which takes less than $0.25$ seconds. 
The time taken is acceptable for practical applications, such as place recognition and autonomous driving. 
Moreover, the amount of the parameters for these VGIDM networks with the GAT, GCN and GraphSAGE is $12.16 \mathrm{M}$, $12.16 \mathrm{M}$ and $12.66 \mathrm{M}$, respectively. 
\begin{table}[!htb]
\caption{Inference runtime of VGIDM on Landmark Oxford dataset.}
\label{time-table}
\vspace{-0.3cm}
\begin{center}
\newcommand{\tabincell}[2]{\begin{tabular}{@{}#1@{}}#2\end{tabular}}
\begin{tabular}{c|c|c|c}
\hline\hline 
{\bf Methods}                                   &   \tabincell{c}{VGIDM \\ (GAT) \\}  &\tabincell{c}{VGIDM \\ (GCN) \\} &  \tabincell{c}{VGIDM \\ (GraphSAGE) \\}\\
\hline
\tabincell{c}{{\bf Inference} {\bf Runtime} \\} & 0.2330s & 0.1953s & 0.2092s \\
\hline\hline 
\end{tabular}
\end{center}
\vspace{-0.5cm}
\end{table}
%%-------------------------------------------------------

\subsection{Further Possible Applications}\label{app.appli}

\subsubsection{Application of VGIDM in Visual Place Recognition}\label{section.place_recognition}

\begin{figure}[!htb]
\begin{center}
\includegraphics[width=\linewidth, height=0.16\textheight]{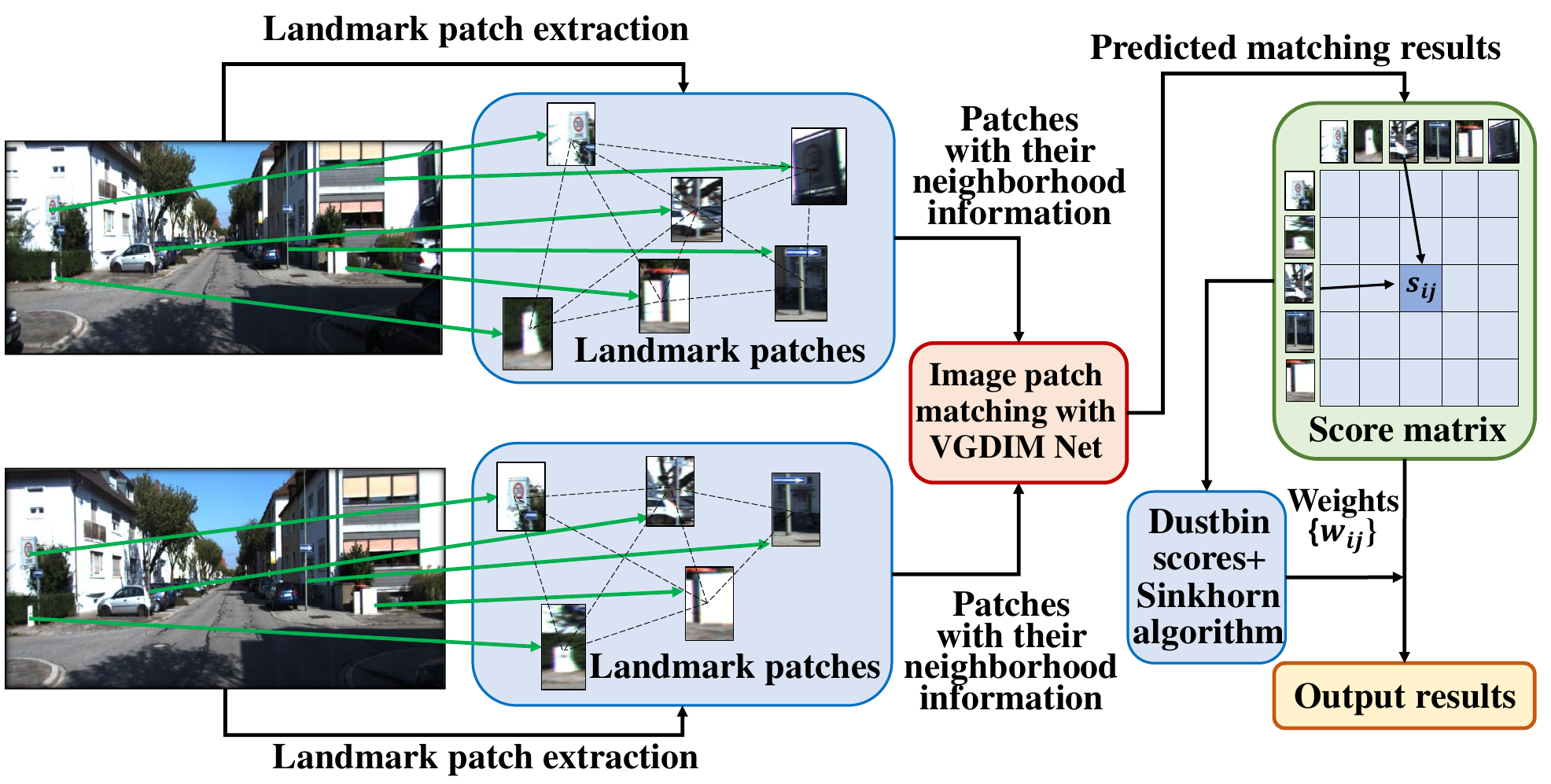}
\end{center}
\vspace{-0.3cm}
\caption{The diagram of visual place recognition with VGIDM.}
\label{fig_place_recognition}
\vspace{-0.3cm}
\end{figure}

A possible application of VGIDM is visual place recognition. We apply our local patch-matching to obtain global frame matching to determine if two frames show the same place. In visual place recognition, we construct a bipartite graph with edges being the scores output by our network for all landmark patch pairs, which is used to construct a matching score matrix. Then, similar to the Optimal Matching Layer in \cite{sarlin2020superglue}, by appending learnable dustbin scores for the score matrix, the Sinkhorn algorithm is used to output the partial assignment. Finally, we obtain frame-matching results by summing up the elements in the matching score matrix with the weights from the partial assignment.
The details are shown in \cref{fig_place_recognition}, where GAT is chosen for the GNN part in VGIDM.

\begin{table}[!htb]
\caption{
Visual place recognition performance in KITTI data.
}
\label{recognition-table}
\vspace{-0.6cm}
\begin{center}
\begin{tabular}{c|c|c|c|c|c}
\hline\hline 
{\bf Methods}     & MatchNet  & NetVLAD & SuperGlue & LoFTR   & VGIDM \\ 
\hline 
{\bf $F_1$-Score} & 0.9668    & 0.9702  & 0.9694    & 0.9711  & \textbf{0.9719} \\ 
\hline 
{\bf Accuracy}    & 0.9360    & 0.9424  & 0.9408    & 0.9440  & \textbf{0.9452} \\ 
\hline\hline 
\end{tabular}
\end{center}
\vspace{-0.6cm}
\end{table}
\begin{figure}[!htb]
\begin{center}
\includegraphics[width=\linewidth]{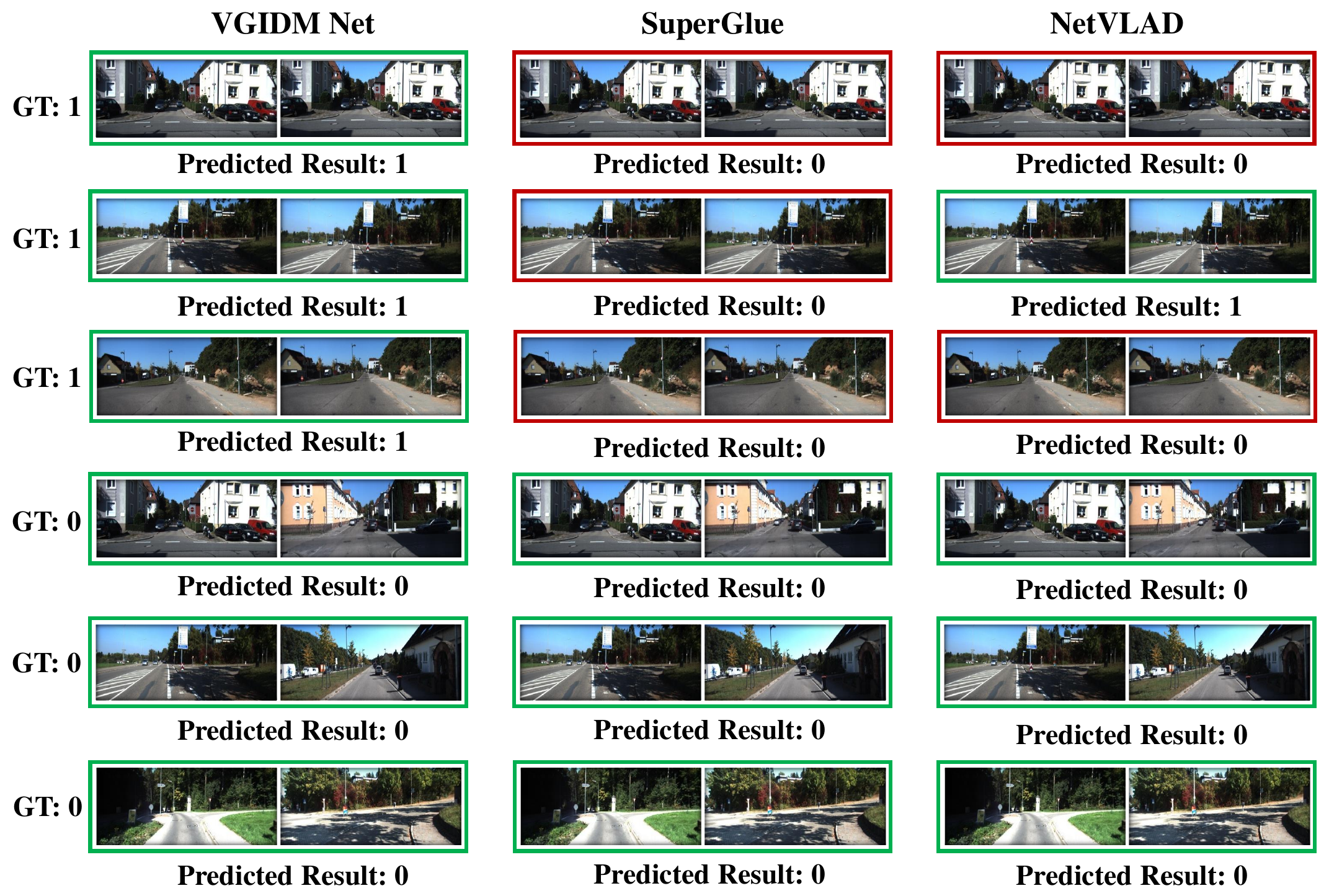}
\end{center}
\vspace{-0.3cm}
\caption{Several examples of place recognition on the KITTI dataset. The prediction ``1'' indicates the frames are from the same place, while ``0'' indicates they are from different places. A green box indicates a correct prediction result while a red box an incorrect one.}
\label{fig_place_recognition_ex}
\vspace{-0.3cm}
\end{figure}

\begin{table}[!htb]
\caption{
Cross-validation performance for visual place recognition on Oxford dataset.
}
\label{cross-recognition-table}
\vspace{-0.6cm}
\begin{center}
\begin{tabular}{c|c|c|c|c|c}
\hline\hline 
{\bf Methods}      & MatchNet & NetVLAD & SuperGlue & LoFTR   & VGIDM \\ 
\hline 
{\bf $F_1$-Score}  & 0.9010   & 0.9069  & 0.9190    & 0.9207  & \textbf{0.9266} \\ 
\hline 
{\bf Accuracy}     & 0.8273   & 0.8303  & 0.8563    & 0.8607  & \textbf{0.8680} \\ 
\hline\hline 
\end{tabular}
\end{center}
\vspace{-0.3cm}
\end{table}
\begin{figure}[!htb]
\begin{center}
\includegraphics[width=\linewidth]{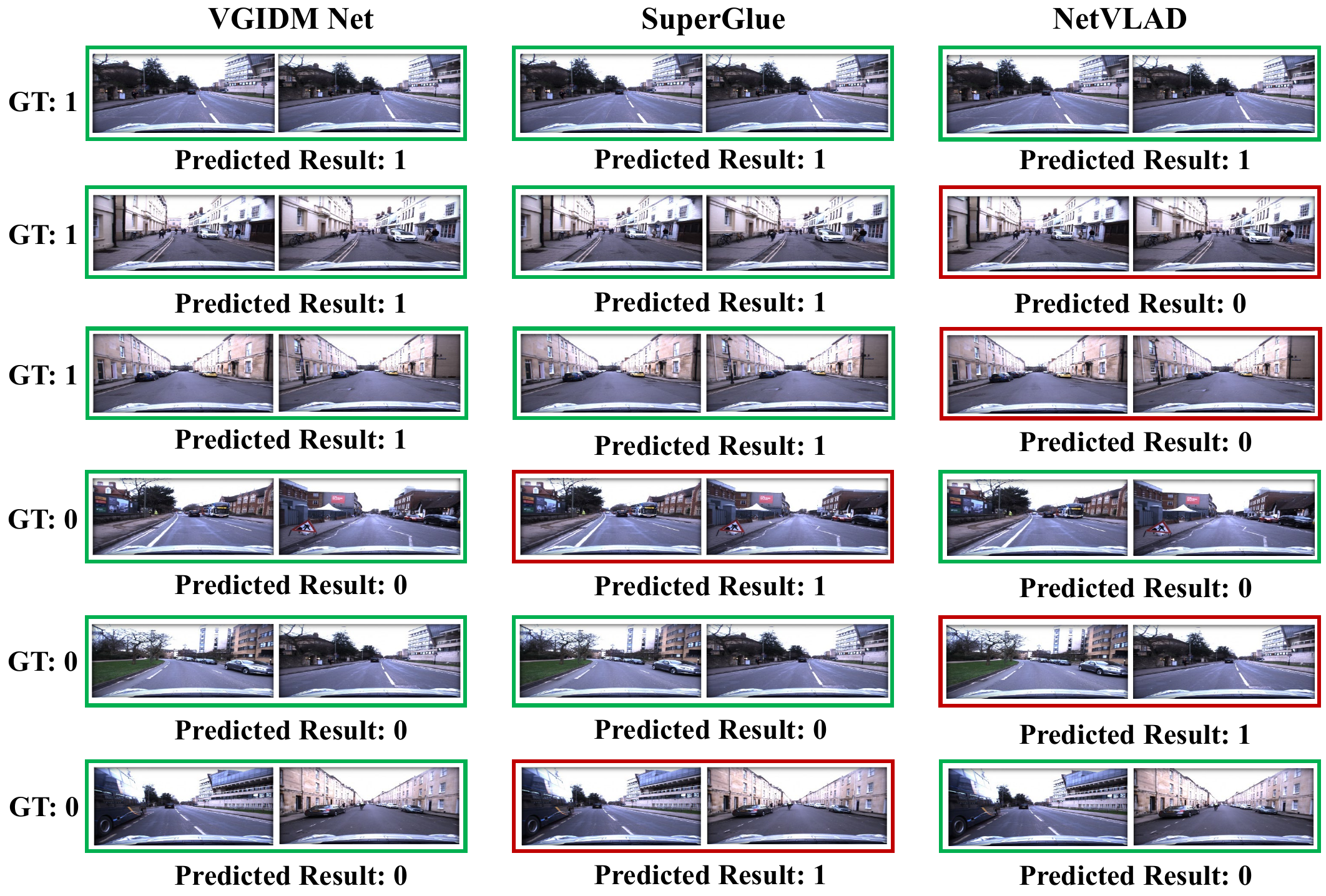}
\end{center}
\vspace{-0.2cm}
\caption{
Several examples of place recognition on the Oxford dataset where the model is trained on the KITTI dataset. The prediction ``1'' or ``0'' indicates the frames are from the same or different places. A green or red box indicates a correct or incorrect prediction result.
}
\label{fig_place_recognition_Oxford}
\vspace{-0.4cm}
\end{figure}

We compare VGIDM with MatchNet \cite{han2015matchnet}, NetVLAD \cite{arandjelovic2016netvlad}, SuperGlue \cite{sarlin2020superglue}, and LoFTR \cite{sun2021loftr} under the place recognition task with around $600$ pairs of place images from the KITTI dataset. 
The two places contained in each image frame pair are regarded as the same if the distance between the camera locations is less than $10$ meters.  
To recognize the same places more accurately, the thresholds of the matching results for place image pairs are set to higher recall levels.
The results and example outputs are shown in \cref{recognition-table} and \cref{fig_place_recognition_ex}, respectively. 
From \cref{recognition-table}, it is observed that all the methods perform well, with VGIDM having a slight advantage. The reason may be that there exist obvious differences among the image pairs that are not from the same place. However, unlike MatchNet, NetVLAD, SuperGlue, and LoFTR, which require the full image, VGIDM can perform place recognition by using only landmark patches and their spatial relationships. Moreover, the landmark patches based on static objects are more stable than the keypoints based on edges or corners and are not affected by noisy image pixels from non-persistent objects or surroundings.

We conduct cross-validation experiments for the visual place recognition task.
Specifically, we use VGIDM (GAT) trained on the KITTI dataset, and compare it with baselines for inference on the Oxford dataset. We use $3000$ frames in the experiments. 
From \cref{cross-recognition-table}, we observe that VGIDM is superior to the baselines. This suggests that VGIDM has good generalization ability. 
We include a few examples in \cref{fig_place_recognition_Oxford}.

\subsubsection{Application of VGIDM in Stereo Depth Estimation of Landmarks}\label{section.stereo_depth_estimation}

Another application is \emph{stereo} depth estimation of landmarks. Similar to the steps described in \cref{sect:Datasets}, we obtain the landmarks from the full-sized frames captured from both the \emph{left} and \emph{right} stereo cameras. 

Different from the experiment settings in \cref{sect:Datasets} where the matching is performed for landmark patches in image frames captured at different locations, here we use VGIDM to perform the matching between landmark patches captured at the \emph{same} location but from \emph{different} cameras. We split $3000$ pairs of stereo frames into training and testing sets with a ratio of around $2:1$. During testing, we set a high similarity threshold of $0.9$ to prevent false positive matching. Since the landmark objects we have chosen have regular shapes, the original narrow landmark object detection boxes (without the intentionally added background to form the landmark patches) are sufficiently accurate to locate the landmarks in the frames. 

For each of the matched landmark objects in the left and right frames, we compute the pixel disparity on the center line (average of left and right sides) of the original bounding box. The depth of the pixels on the landmark bounding box middle lines can be calculated using the camera focal length and distance. Following the diagram in \cref{fig:mono-stereodepth}, the coarse monocular depth is improved to a more accurate stereo depth as shown in \cref{tab.depth}. The vanilla Monocular Depth \cite{yin2021learning} only achieves $14.22 \mathrm{m}$ RMSE. After applying VGIDM, we can improve the depth estimation accuracy to $0.86 \mathrm{m}$ RMSE. In contrast, the current state-of-the-art DIFFNET \cite{zhou2021diffnet} only achieves $4.45 \mathrm{m}$ RMSE performance in stereo depth estimation. However, a direct application of VGIDM can only output the estimated depth for sparse pixels (only for the chosen landmarks). To improve general stereo depth estimation, it is possible to incorporate VGIDM into existing methods, e.g., using VGIDM's accurately estimated stereo landmark depths as calibration for other general stereo depth estimation algorithms like DIFFNet \cite{zhou2021diffnet}, HRDepth \cite{lyu2020hrdepth}, and CADepth \cite{yan2021cadepth}.

\begin{figure}[!htb]
\centering
\includegraphics[width=\linewidth]{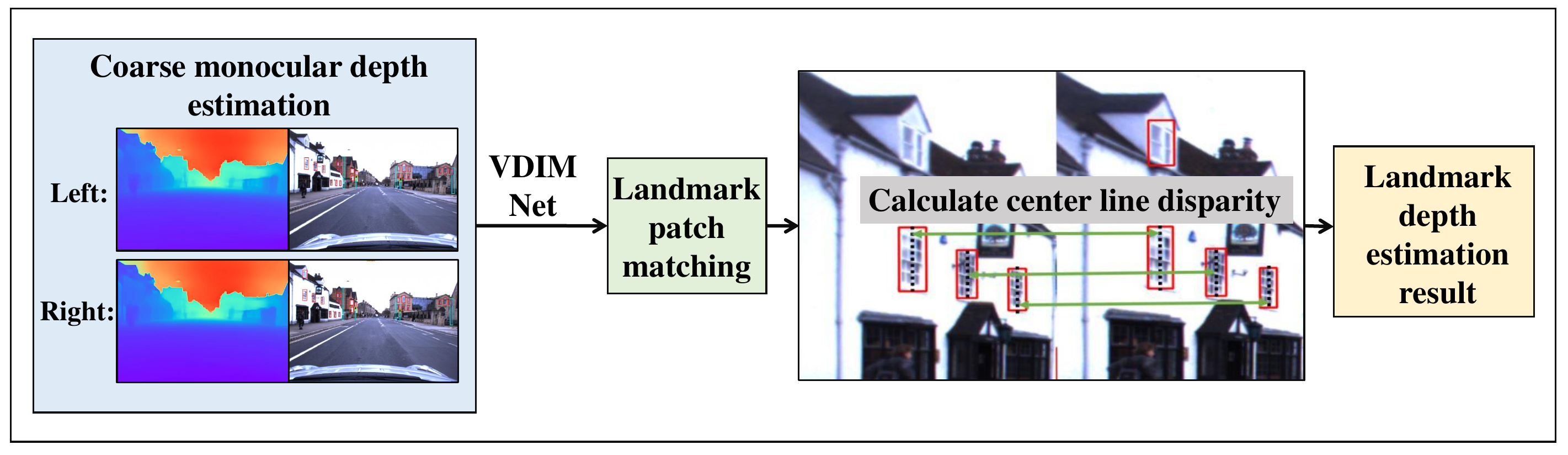}
\vspace{-0.6cm}
\caption{Depth estimation: from coarse monocular depth to fine stereo depth.}
\label{fig:mono-stereodepth}
\vspace{-0.1cm}
\end{figure}

\begin{table}
%\scriptsize
\caption{Accuracy of depth estimation.}
\label{tab.depth}
\begin{center}
\vspace{-0.3cm}
\newcommand{\tabincell}[2]{\begin{tabular}{@{}#1@{}}#2\end{tabular}}
\begin{tabular}{c|c|c|c}
\hline\hline 
\textbf{Method} & \tabincell{c}{Monocular \\ Depth \cite{yin2021learning} \\} & DIFFNet \cite{zhou2021diffnet} & VGIDM [ours] \\
\hline 
\textbf{RMSE} ($\mathrm{m}$)  & 14.22  & 4.45   & {\bf 0.86} \\
\hline\hline 
\end{tabular}
\end{center}
\vspace{-0.6cm}
\end{table}

VGIDM can serve as a module in various learning-based localization techniques, such as Detect-SLAM \cite{zhong2018detect}, EAO-SLAM \cite{wu2020eao}, and other semantic SLAM with object-level data association \cite{qian2021semantic,lin2023robust}. These techniques underscore the importance of landmark patch matching, which is central to VGIDM in real-world applications. Consequently, our approach has crucial implications for diverse applications, demonstrating its versatility and effectiveness. 
%LiDAR and image landmarks fusion localization \cite{Kang22itsc}, 

\section{Conclusion}\label{sect:conc}

We have developed an image patch-matching model VGIDM that incorporates spatial information of the landmark patches through graph-based learning. We provided a theoretical basis for our approach. Extensive experiments demonstrate that our approach outperforms the current state-of-the-art baselines, which do not take into account the spatial relationships between patches. Our framework indicates that such spatial information can be beneficial to landmark patch-matching in street scenes.

In future work, it is of interest to incorporate a greater variety of objects as landmarks to adapt VGIDM to more diverse street scenes and generalize to more datasets. To achieve this, we can train VGIDM on landmarks from a wider range of classes.
As our method is better suited for matching tasks in scenarios with sufficient landmarks or static objects, pixel-level matching methods can serve a complementary role in scenarios with fewer landmarks. Additionally, combining our patch-level matching method with point-level methods shows promise in achieving more accurate pixel-level or sub-pixel-level matching. To this end, we can use our method as a post-processing step to emphasize the keypoints with more attention or to filter out weak correspondences. 
We can further generalize VGIDM to multi-view camera relocalization \cite{xue2020learning} to estimate camera poses by determining the matched landmarks in multiple image frames. Furthermore, the matched landmarks can serve as anchor points or interest regions to aid other applications. In LiDAR relocalization \cite{zhang2015visual,engel2019deeplocalization} or LiDAR odometry estimation, by restricting the LiDAR points from matched landmarks using VGIDM, many outlier points can be removed, leading to better estimation accuracy. 

%\section*{Acknowledgments}
%This should be a simple paragraph before the References to thank those individuals and institutions who have supported your work on this article.

%{\appendix[Proof of the Zonklar Equations]
%Use $\backslash${\tt{appendix}} if you have a single appendix:
%Do not use $\backslash${\tt{section}} anymore after $\backslash${\tt{appendix}}, only $\backslash${\tt{section*}}.
%If you have multiple appendixes use $\backslash${\tt{appendices}} then use $\backslash${\tt{section}} to start each appendix.
%You must declare a $\backslash${\tt{section}} before using any $\backslash${\tt{subsection}} or using $\backslash${\tt{label}} ($\backslash${\tt{appendices}} by itself starts a section numbered zero.)}

\appendices

\section{Proof of \cref{prop.KL_divergence_Max}}\label{app.proposition_1}

From \cref{eq.L_ID_expect}, we have 
\begin{align}
    & L_{\mathrm{ID}} \nn
    & = \P(x\leftrightarrow y) \E[\log d(\varphi(x),\psi(\mathcal{G}^y)) \given x \leftrightarrow y] \nn
    & \quad + \P(x\nleftrightarrow y)\E[\log(1-d(\varphi(x),\psi(\mathcal{G}^y))) \given x \nleftrightarrow y] \label{eq.L_ID_original0}\\
    & = \int_{\mathbb{A}} \Big\{ \P(x\leftrightarrow y)
    p(\varphi(x),\psi(\mathcal{G}^{y})\mid{x \leftrightarrow y}) \log d(\varphi(x),\psi(\mathcal{G}^y))\nn
    & \quad + \P(x\nleftrightarrow y) p(\varphi(x),\psi(\mathcal{G}^{y})\mid{x \nleftrightarrow y}) \nn
    & \qquad \cdot \log(1-d(\varphi(x), \psi(\mathcal{G}^{y}))) \Big\}{\rm d}(\varphi(x),\psi(\mathcal{G}^y)),\label{eq.L_ID_original}
\end{align}
where $\P(x\leftrightarrow y)$ and $\P(x\nleftrightarrow y)$ are the matched and unmatched probabilities. From \cref{eq.L_ID_original}, it is clear that $L_{\mathrm{ID}}$ is concave in $d(\varphi(x), \psi(\mathcal{G}^{y}))$ for every $(x,y)$. Taking the first derivatives and setting them to zero, we obtain
%\cref{eq.D*}.
% \begin{figure*}
% \hrulefill
\begin{align}
    & d^{*}(\varphi(x), \psi(\mathcal{G}^{y})) %\nn
    % & = \frac{\P(x\leftrightarrow y) p(\varphi(x),\psi(\mathcal{G}^{y})\mid{x \leftrightarrow y})}
    % {\P(x\leftrightarrow y) p(\varphi(x),\psi(\mathcal{G}^{y})\mid{x \leftrightarrow y})
    % + \P(x\nleftrightarrow y) p(\varphi(x),\psi(\mathcal{G}^{y})\mid{x \nleftrightarrow y})}
    = \frac{\P(x\leftrightarrow y) p(\varphi(x),\psi(\mathcal{G}^{y})\mid{x \leftrightarrow y})}
    {p(\varphi(x),\psi(\mathcal{G}^{y}))}. \label{eq.D*} 
\end{align}
% \hrulefill
% \end{figure*}  

Substituting \cref{eq.D*} into \cref{eq.L_ID_original}, we have 
\begin{align}
    & L_{\mathrm{ID}}^{d^*} \nn
    % & = \P(x\leftrightarrow y)
    % \E[\log \frac{p(\varphi(x),\psi(\mathcal{G}^{y}) \mid x \leftrightarrow y) \P(x\leftrightarrow y)}{ p(\varphi(x),\psi(\mathcal{G}^{y}))} \given x \leftrightarrow y]\nn
    % & \quad + \P(x\nleftrightarrow y)
    % \E[\log \frac{p(\varphi(x),\psi(\mathcal{G}^{y}) \mid x \nleftrightarrow y) \P(x\nleftrightarrow y)}{p(\varphi(x),\psi(\mathcal{G}^{y}))} \given x \nleftrightarrow y] \nn
    %%%%%%%%%%%%%%%%%%%%%%%%%%%%%%%%%%%%%%%%%%%%%%%%%%%%%%%%%%%%%%%%%%%%%%%%%%%%%%%%%%%%%%%%%%%
    % & = \P(x\leftrightarrow y)
    % \E[\log \frac{p(\varphi(x),\psi(\mathcal{G}^{y}) \mid x \leftrightarrow y)}{ p(\varphi(x),\psi(\mathcal{G}^{y}))} \given x \leftrightarrow y]\nn
    % & \quad + \P(x\nleftrightarrow y)
    % \E[\log \frac{p(\varphi(x),\psi(\mathcal{G}^{y}) \mid x \nleftrightarrow y)}{p(\varphi(x),\psi(\mathcal{G}^{y}))} \given x \nleftrightarrow y] \nn
    % & \quad + \P(x\leftrightarrow y)\log\P(x\leftrightarrow y) + \P(x\nleftrightarrow y)\log\P(x\nleftrightarrow y)\nn
    %%%%%%%%%%%%%%%%%%%%%%%%%%%%%%%%%%%%%%%%%%%%%%%%%%%%%%%%%%%%%%%%%%%%%%%%%%%%%%%%%%%%%%%%%%%
    & = \P(x\leftrightarrow y)
    \E[\log \frac{p(\varphi(x),\psi(\mathcal{G}^y)\mid{x \leftrightarrow y})}{p(\varphi(x),\psi(\mathcal{G}^y)\mid{x \nleftrightarrow y})} \given x \leftrightarrow y]\nn
    & \quad + \E[\log \frac{p(\varphi(x),\psi(\mathcal{G}^{y}) \mid x \nleftrightarrow y)}{p(\varphi(x),\psi(\mathcal{G}^{y}))}] - H_b(\P(x\leftrightarrow y)). \label{Ld*}
\end{align}
Applying Jensen's inequality to the second term in the right-hand side of \cref{Ld*}, we have
\begin{align}
&\E[\log \frac{p(\varphi(x),\psi(\mathcal{G}^{y}) \mid x \nleftrightarrow y)}{p(\varphi(x),\psi(\mathcal{G}^{y}))}] \nn
&\leq \log \E[\frac{p(\varphi(x),\psi(\mathcal{G}^{y}) \mid x \nleftrightarrow y)}{p(\varphi(x),\psi(\mathcal{G}^{y}))}] \nn
% &= \log \int_{\bbA} p(\varphi(x),\psi(\mathcal{G}^{y})) \frac{p(\varphi(x),\psi(\mathcal{G}^{y}) \mid x \nleftrightarrow y)}{p(\varphi(x),\psi(\mathcal{G}^{y}))} {\rm d}(\varphi(x),\psi(\calG^y))\nn
&= \log \int_{\bbA} p(\varphi(x),\psi(\mathcal{G}^{y}) \mid x \nleftrightarrow y) {\rm d}(\varphi(x),\psi(\calG^y)) \nn
&= 0.
\end{align}
Therefore, from \cref{Ld*}, we have
\begin{align}
    & L_{\mathrm{ID}}^{d^*} 
    \leq \P(x\leftrightarrow y) \nn
    & \qquad \cdot \KLD{p(\varphi(x),\psi(\mathcal{G}^{y})\mid{x \leftrightarrow y})}{p(\varphi(x),\psi(\mathcal{G}^{y})\mid{x \nleftrightarrow y})} \nn
    & \qquad - H_b(\P(x\leftrightarrow y)).
\end{align}
Rearranging the inequality completes the proof.

\section{Proof of \cref{prop.disturbance_discriminator}}\label{app.proposition_disturbance}

{Let $\tilde{d}=d+\epsilon$.}
Similar to \cref{eq.L_ID_original0} in the proof of {\cref{prop.KL_divergence_Max}}, it is easy to see
\begin{align}
    & L_{\mathrm{ID}}(\varphi,\psi,\tilde{d}) \nn
    & = \P(x\leftrightarrow y)
    \E[ \log(d(\varphi(x),\psi(\mathcal{G}^y))+\varepsilon) \given x \leftrightarrow y ] \nn
    & \quad + \P(x\nleftrightarrow y) \E[ \log(1-d(\varphi(x),\psi(\mathcal{G}^y))-\varepsilon) \given x \nleftrightarrow y] \label{eq.L_ID_tilde_D},
\end{align}
where the notations are the same as those in \cref{eq.L_ID_original0}.

According to Taylor's series expansion theorem \cite{crowell1968calculus}, 
% we obtain
% \begin{align}
%     & \log(d(\varphi(x),\psi(\mathcal{G}^y))+\varepsilon) \nn
%     & = \log(d(\varphi(x),\psi(\mathcal{G}^y)))
%     + \frac{\varepsilon}{d(\varphi(x),\psi(\mathcal{G}^y))} \nn
%     & \quad  -\frac{\varepsilon^2}{2d(\varphi(x),\psi(\mathcal{G}^y))^2} + o(\varepsilon^2),
% \end{align}
% and
% \begin{align}
%     &\log(1-d(\varphi(x),\psi(\mathcal{G}^y))-\varepsilon) \nn
%     & = \log(1-d(\varphi(x),\psi(\mathcal{G}^y)))
%     - \frac{\varepsilon}{1-d(\varphi(x),\psi(\mathcal{G}^y))} \nn
%     & \quad -\frac{\varepsilon^2}{2(1-d(\varphi(x),\psi(\mathcal{G}^y)))^2} + o(\varepsilon^2).
% \end{align}

% Then, from \cref{eq.L_ID_original0} and \cref{eq.L_ID_tilde_D}, 
we have
\begin{align}
    & L_{\mathrm{ID}}(\varphi,\psi,\tilde{d}) - L_{\mathrm{ID}}(\varphi,\psi,d) \nn
    & = \varepsilon \Bigg\{
    \P(x\leftrightarrow y) \E[ \frac{1}{d(\varphi(x),\psi(\mathcal{G}^y))} \given x\leftrightarrow y] \nn
    & \qquad - \P(x\nleftrightarrow y) \E[\frac{1}{1-d(\varphi(x),\psi(\mathcal{G}^y))} \given x\nleftrightarrow y] \Bigg\} \nn
    & \quad - \frac{\varepsilon^2}{2} \Bigg\{
    \P(x\leftrightarrow y) \E[\frac{1}{(d(\varphi(x),\psi(\mathcal{G}^y)))^2} \given x\leftrightarrow y]\nn
    & \qquad \qquad 
    + \P(x\nleftrightarrow y) \E[\frac{1}{(1-d(\varphi(x),\psi(\mathcal{G}^y)))^2} \given x\nleftrightarrow y]\Bigg\} \nn
    & \quad + o(\varepsilon^2).  \label{eq.L_tilde_D_L_D}
\end{align}

Furthermore, by substituting $d(\varphi(x),\psi(\mathcal{G}^y))=d^{*}(\varphi(x),\psi(\mathcal{G}^y))$ given in \cref{eq.D*} into \cref{eq.L_tilde_D_L_D}, we have 
\cref{eq.difference_L}
\begin{figure*}
\hrulefill
\begin{align}
    & L_{\mathrm{ID}}(\varphi,\psi,\tilde{d}^*) - L_{\mathrm{ID}}(\varphi,\psi,d^*) \nn
    & = - \frac{\varepsilon^2}{2} \int_{\mathbb{A}} 
    \frac{
    (\P(x\leftrightarrow y) p(\varphi(x),\psi(\mathcal{G}^{y})\mid{x \leftrightarrow y})
    + \P(x\nleftrightarrow y) p(\varphi(x),\psi(\mathcal{G}^{y})\mid{x \nleftrightarrow y})  )^3 }
    {\P(x\leftrightarrow y) \P(x\nleftrightarrow y) p(\varphi(x),\psi(\mathcal{G}^{y})\mid{x \leftrightarrow y}) p(\varphi(x),\psi(\mathcal{G}^{y})\mid{x \nleftrightarrow y}) }
    {\rm d}(\varphi(x),\psi(\mathcal{G}^y))
    + o(\varepsilon^2) \label{eq.difference_L}
\end{align}
\hrulefill
\end{figure*}
and the proof is complete.

\section{Proof of \cref{prop.proposition_3}}\label{app.proposition_3}

We have
\begin{align}
&\norm{\pfgm - \pfgu}_{\mathrm{TV}} \nn
% & =  \sum_{(\varphi(x), \psi(\calG^{y})) \in \bbB} \Big\{\pfgm \nn
% & \qquad - \pfgu \Big\} \nn
& = \sum_{(\varphi(x), \psi(\calG^{y})) \in \bbB} \Big\{ m(x\leftrightarrow y) \pfgGm{\Gm} \nn
& \qquad + (1-m(x\leftrightarrow y))\pfgGm{\Gu} \nn
&\qquad - m(x\nleftrightarrow y)\pfgGu{\Gm} \nn
& \qquad - (1-m(x\nleftrightarrow y))\pfgGu{\Gu} \Big\} \nn
&\geq \sum_{(\varphi(x), \psi(\calG^{y})) \in \bbB} \Big\{ \min_{x\leftrightarrow y} m(x\leftrightarrow y) \nn
& \qquad \qquad \qquad \qquad \cdot \pfgGm{\Gm} \nn
& \qquad - \max_{x \nleftrightarrow y} m(x\nleftrightarrow y)\pfgGu{\Gm} \nn 
&\qquad - \pfgGu{\Gu} \Big\} \label{TV1}\\
% &\geq \sum_{(\varphi(x), \psi(\calG^{y})) \in \bbB} \Big\{ \min_{x\leftrightarrow y} m(x\leftrightarrow y) \nn
% & \qquad \qquad \qquad \qquad \cdot \pfgGm{\Gm} \nn
% & \qquad - \pfgGu{\Gu} \Big\} 
% - \max_{x \nleftrightarrow y} m(x\nleftrightarrow y) \nn
&\geq \min_{x\leftrightarrow y}m(x\leftrightarrow y) \nn
& \qquad \cdot \sum_{(\varphi(x), \psi(\calG^{y})) \in \bbB} \Big\{\pfgGm{\Gm} \nn
&\qquad - \pfgGu{\Gu} \Big\} + \min_{x\leftrightarrow y}m(x\leftrightarrow y)\nn
&\qquad - \max_{x \nleftrightarrow y} m(x\nleftrightarrow y) - 1,
\end{align}
where the inequality \cref{TV1} follows from $0 \le m(x \leftrightarrow y) \le 1$ and $0 \le m(x \nleftrightarrow y) \le 1$. The proof is now complete. 

\bibliographystyle{IEEEtran}
\bibliography{IEEEabrv,StringDefinitions,Journal_reference}

%\section{References Section}
%You can use a bibliography generated by BibTeX as a .bbl file.
% BibTeX documentation can be easily obtained at:
% http://mirror.ctan.org/biblio/bibtex/contrib/doc/
% The IEEEtran BibTeX style support page is:
% http://www.michaelshell.org/tex/ieeetran/bibtex/
 
% argument is your BibTeX string definitions and bibliography database(s)
%\bibliography{IEEEabrv,../bib/paper}
%
%\section{Simple References}
%You can manually copy in the resultant .bbl file and set the second argument of $\backslash${\tt{begin}} to the number of references
%(used to reserve space for the reference number labels box).

%\begin{thebibliography}{1}
%\bibliographystyle{IEEEtran}
%\end{thebibliography}

\newpage
\text{}
\newpage

\section*{Supplementary Material}

\subsection{Landmark Datasets for Image Patch Matching}\label{app_sect:Landmarkpre}

In this section, we present two landmark patch matching datasets,\footnote{https://github.com/AI-IT-AVs/Landmark\_patch\_datasets} named the \textit{Landmark KITTI Dataset} and the \textit{Landmark Oxford Dataset}, derived from the street scene KITTI dataset and the Oxford Radar RobotCar Dataset respectively.

We first briefly introduce the two original public datasets both of which contain image frames and LiDAR scans captured from onboard cameras and Velodyne LiDAR sensors. The KITTI dataset is a public dataset\footnote{http://www.cvlibs.net/datasets/kitti/} with multi-sensor data for autonomous driving. It contains street scene image frames and their corresponding LiDAR point clouds, which are captured in Karlsruhe, Germany, using the Point Grey Flea 2 (FL2-14S3C-C) Camera and Velodyne HDL-64E Laserscanner, respectively. The frame resolution is $1241 \times 376$  pixels.
The Oxford Radar RobotCar dataset\footnote{http://ori.ox.ac.uk/datasets/radar-robotcar-dataset} contains image frames and LiDAR scans captured on the streets in Oxford, UK, by the Point Grey Grasshopper2 (GS2-FW-14S5C-C) Camera and Velodyne HDL-32E Laserscanner, respectively. The resolution of each frame in this dataset is $1280 \times 960$ pixels. 

We extract the landmark object patches from the full-sized image frames of the two original street scene datasets using an object detection neural network. 
In the literature on landmark-based applications, Edge Boxes are used to detect a bounding box around a patch that contains a large number of internal contours compared to the number of contours exiting from the box, which indicates the presence of an object in the enclosed patch. DeepLabV$3$+ is used to extract significant landmark regions. However, all of the aforementioned patch extraction or landmark detection approaches are not stable when removing dynamic objects and many noisy regions are presented. By contrast, in our datasets, we use Faster R-CNN as the stable landmark object detector to locate the region of interest for static roadside objects including traffic lights, traffic signs, poles, and facade windows. To facilitate the detection efficacy, we manually labeled those objects using the frames from the street scene KITTI dataset and the Oxford Radar RobotCar dataset.
In Faster R-CNN, we choose Resnet$50$  with Feature Pyramid Network (FPN) as the backbone, which is already pretrained on the Imagenet dataset. During training, we use Adam optimizer with learning rate $0.0002$ and weight decay $0.0001$ to train the detector for $50$ epochs. The training batch size is set as $2$ and random horizontal flipping is used for data augmentation. 

We next introduce our landmark patch matching datasets. For both the Landmark KITTI dataset and the Landmark Oxford dataset, the full-sized image frames are captured by stereo cameras, and we \emph{only use the left frames} to extract landmark patches. The details like the landmark object bounding box labels and the patch matching ground truth are described separately for each dataset as follows. 

\begin{figure}[!htb]
\centering
\includegraphics[width=0.5\linewidth, height=0.06\textheight]{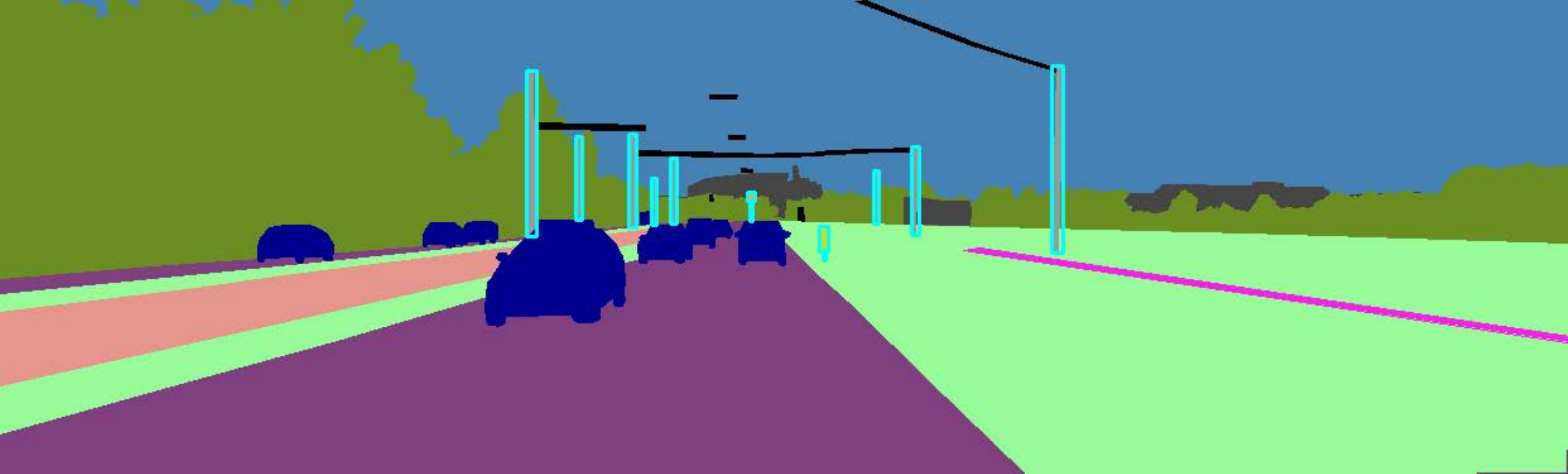}\hfill
\includegraphics[width=0.5\linewidth, height=0.06\textheight]{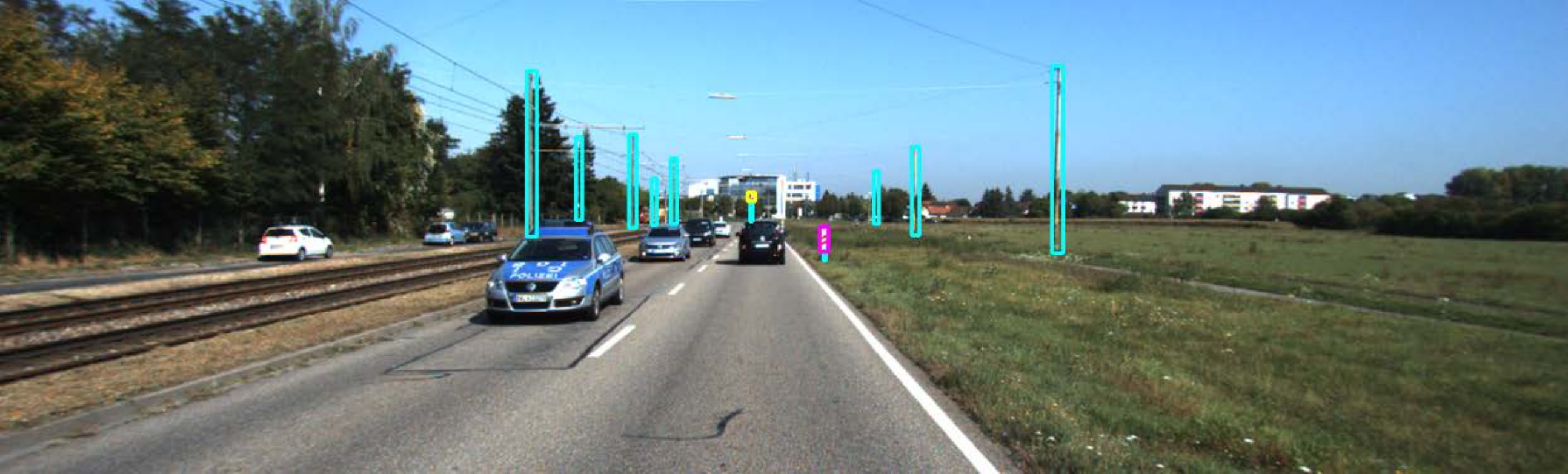}\hfill
\includegraphics[width=0.5\linewidth, height=0.06\textheight]{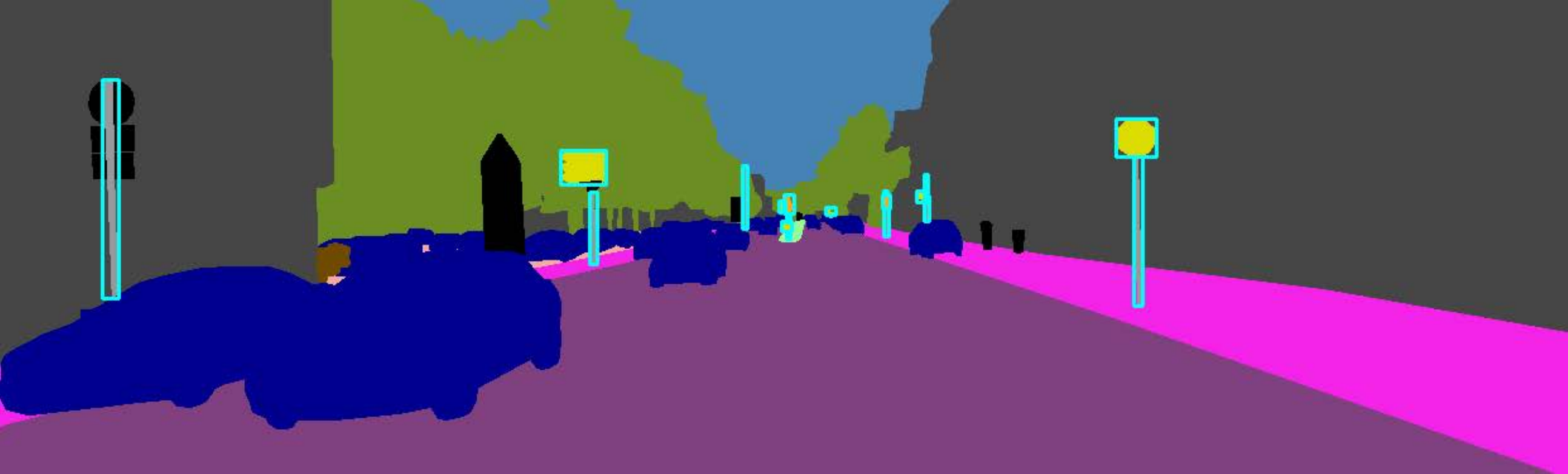}\hfill
\includegraphics[width=0.5\linewidth, height=0.06\textheight]{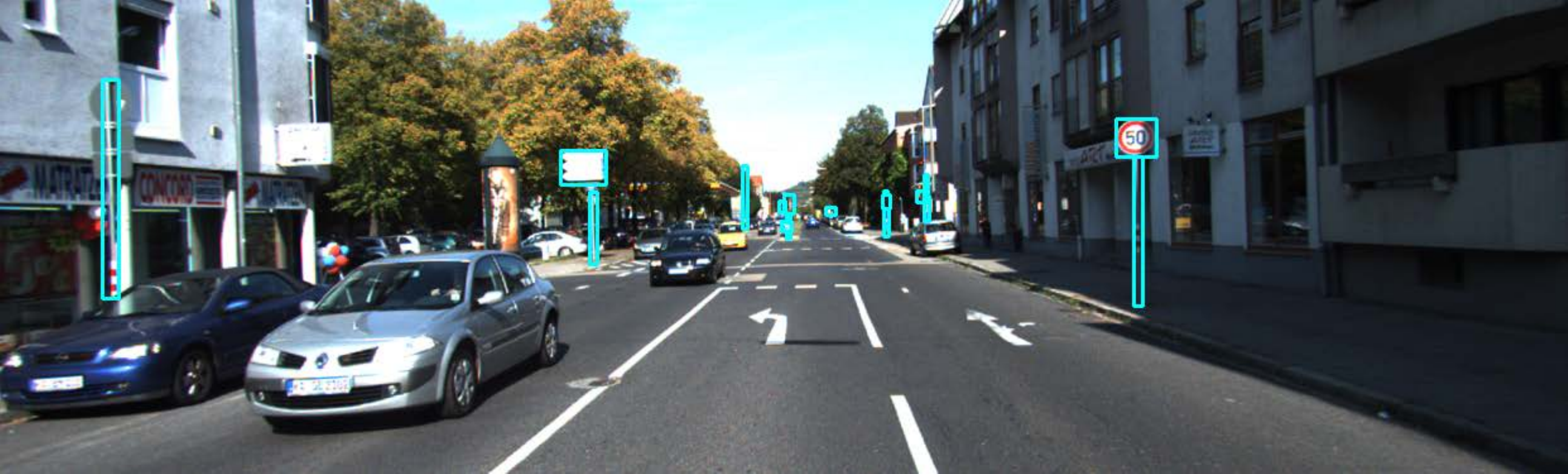}\hfill
\includegraphics[width=0.5\linewidth, height=0.06\textheight]{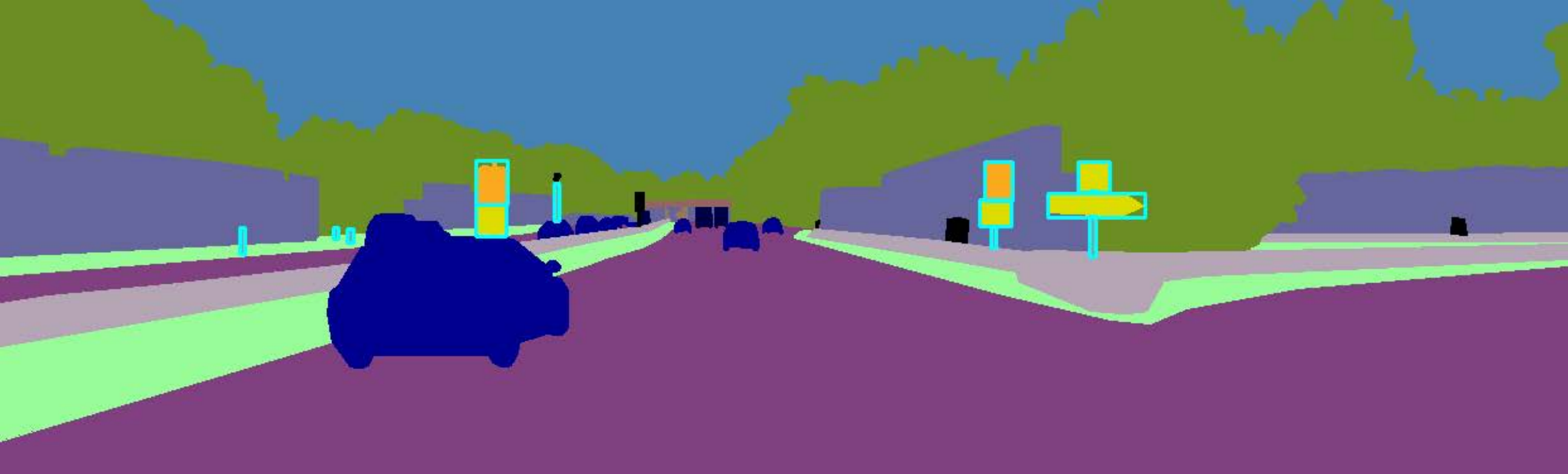}\hfill
\includegraphics[width=0.5\linewidth, height=0.06\textheight]{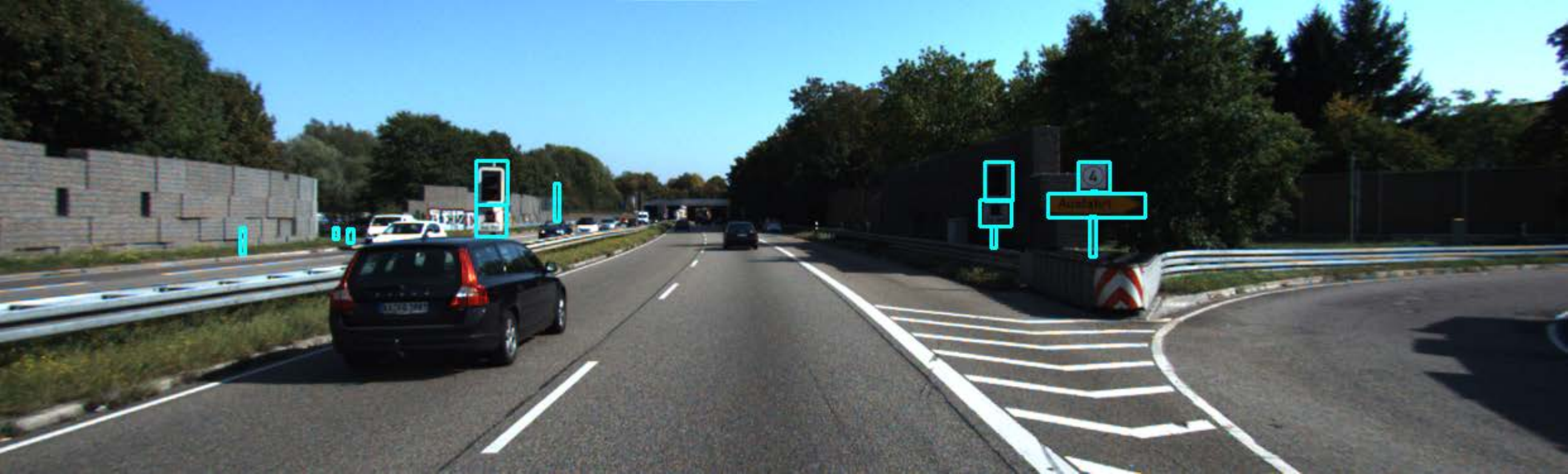}\hfill
\includegraphics[width=0.5\linewidth, height=0.06\textheight]{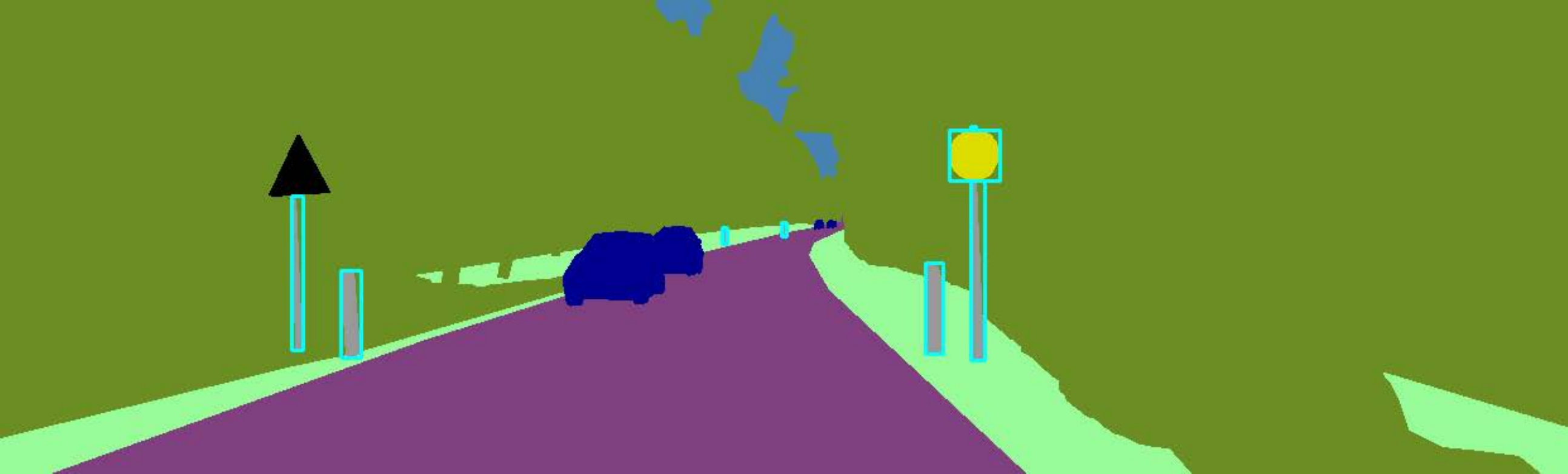}\hfill
\includegraphics[width=0.5\linewidth, height=0.06\textheight]{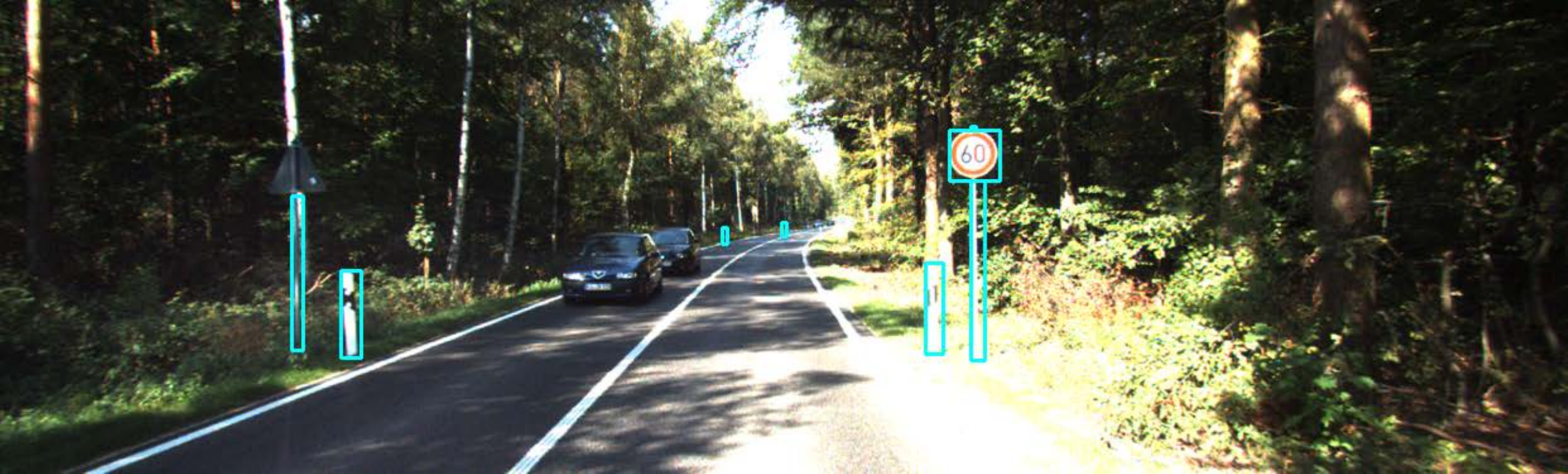}\hfill
\includegraphics[width=0.5\linewidth, height=0.06\textheight]{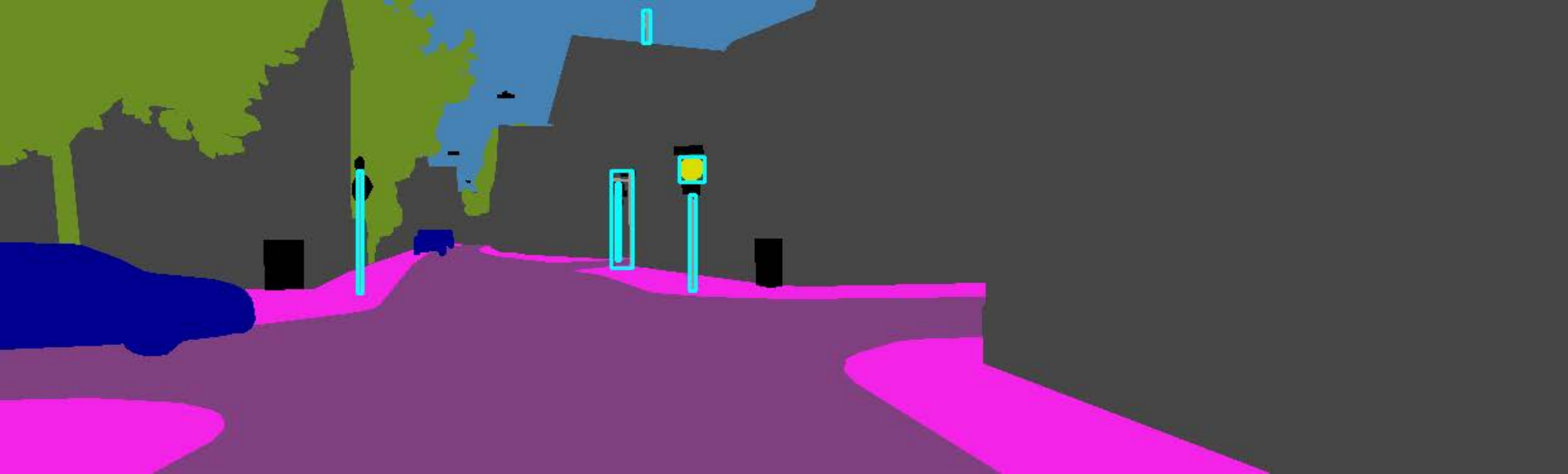}\hfill
\includegraphics[width=0.5\linewidth, height=0.06\textheight]{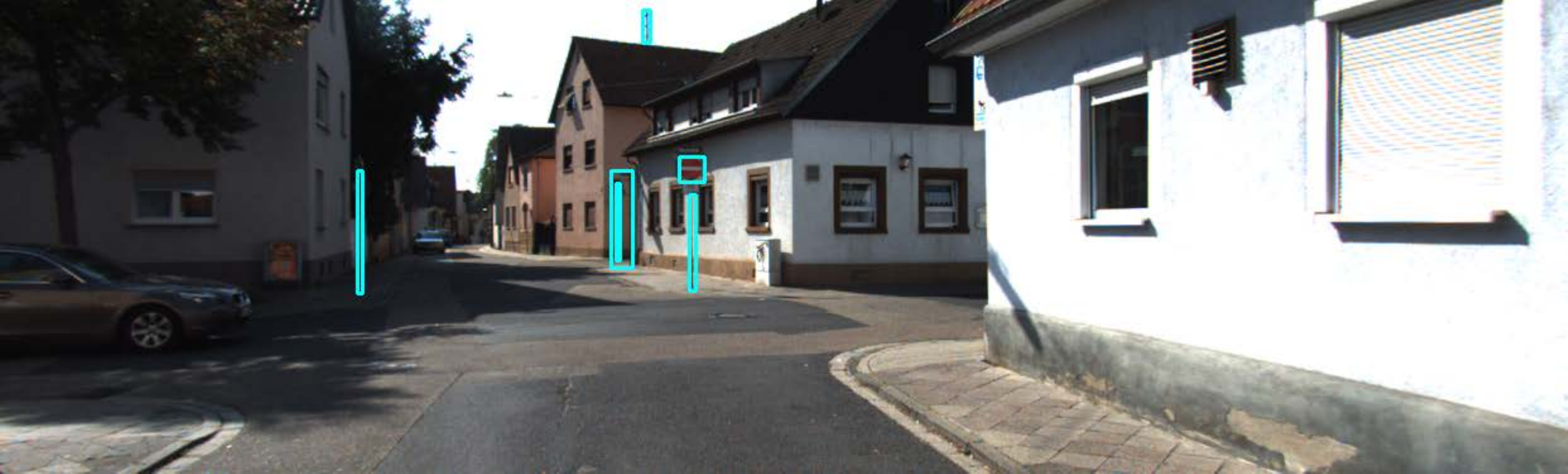}\hfill
\vspace{-0.3cm}
\caption{Several examples of ground truth landmark bounding box labels based on semantic segmentation masks in the KITTI dataset. 
Left:  semantic segmentation images with bounding box labels. Right: real images with bounding box labels.}
\label{fig:label_kitti}
%\vspace{-0.5cm}
\end{figure}

\begin{figure}[!htb]
\centering
\includegraphics[width=0.5\linewidth, height=0.12\textheight]{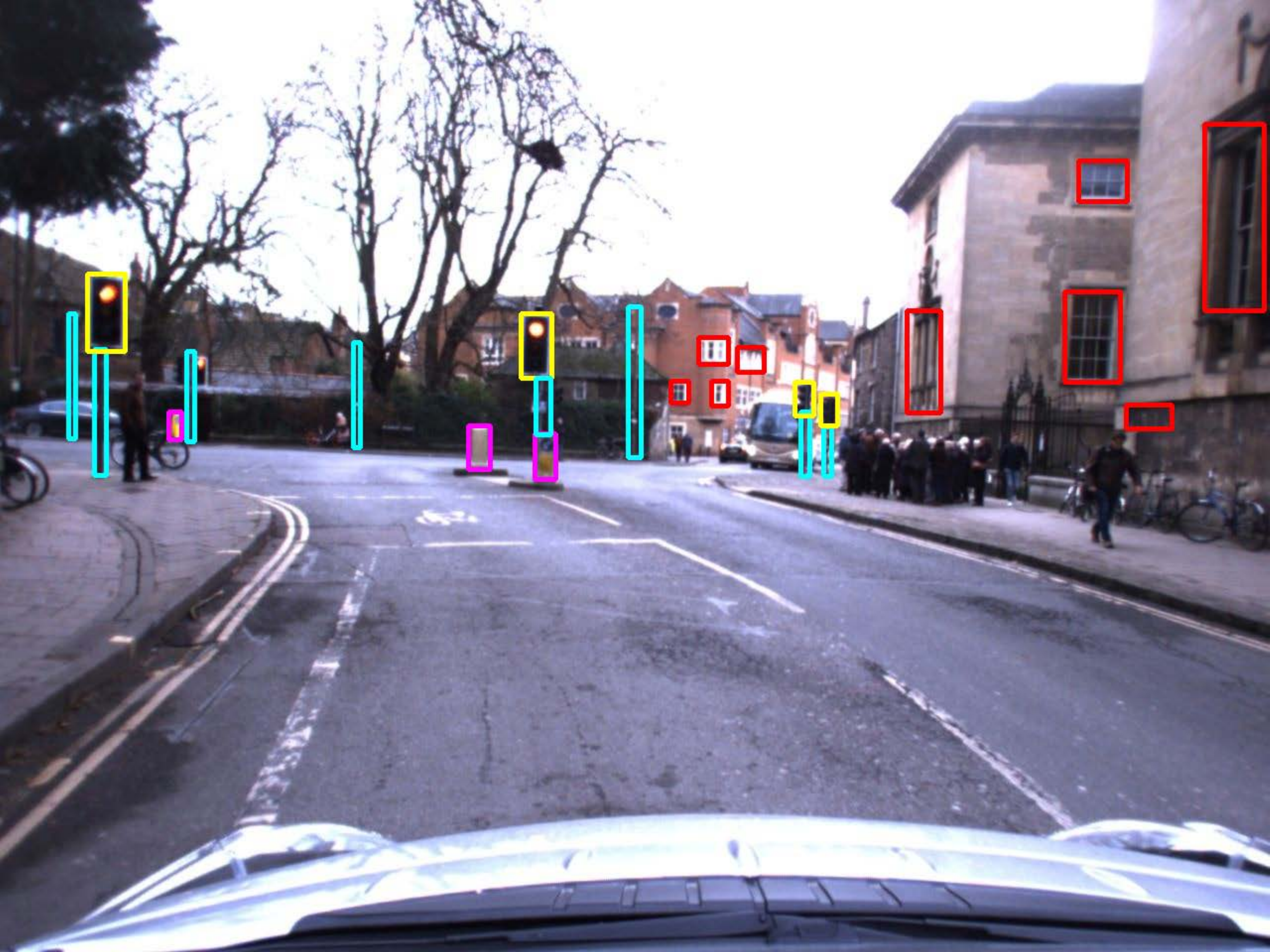}\hfill
\includegraphics[width=0.5\linewidth, height=0.12\textheight]{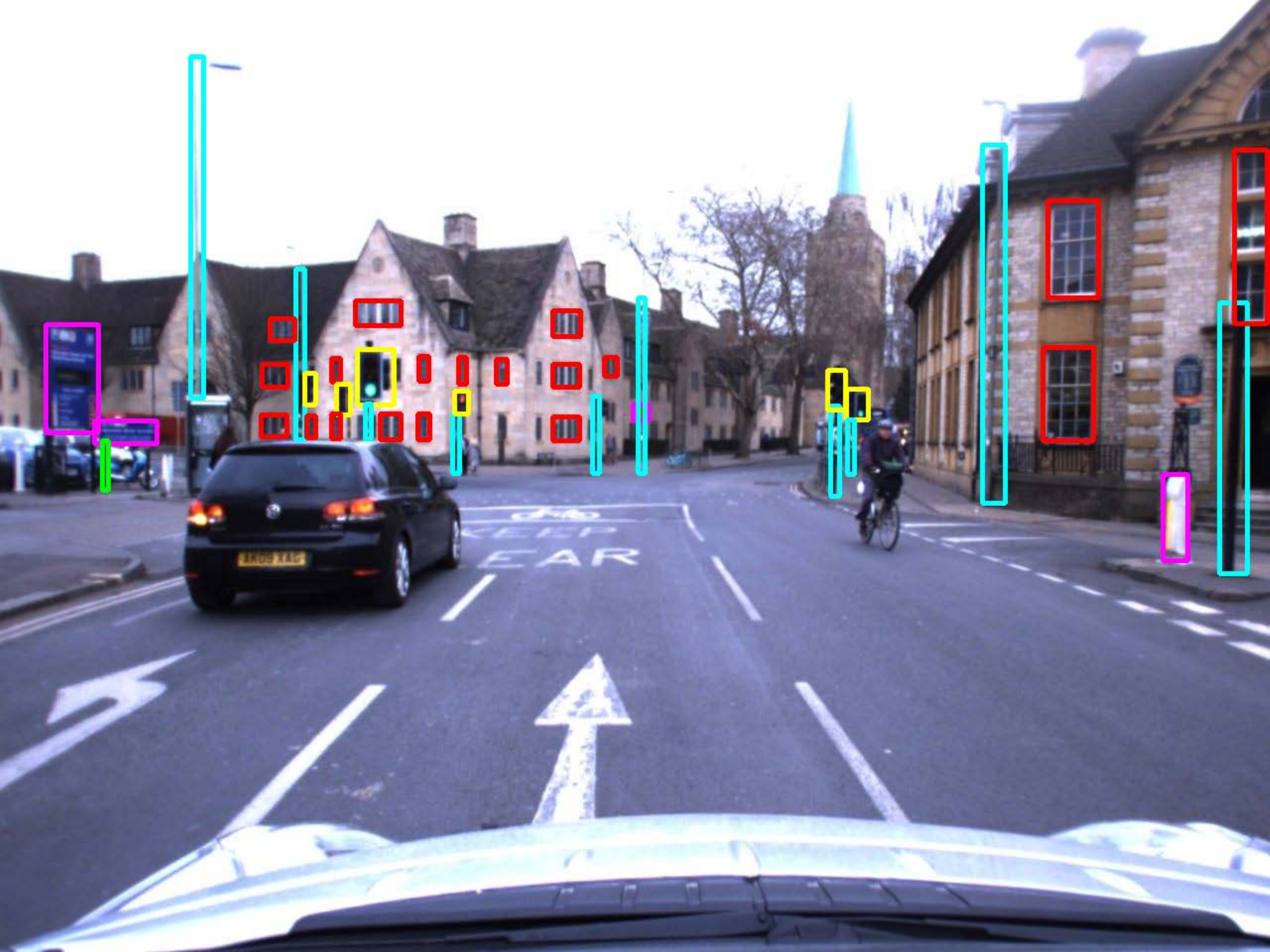}\hfill
\includegraphics[width=0.5\linewidth, height=0.12\textheight]{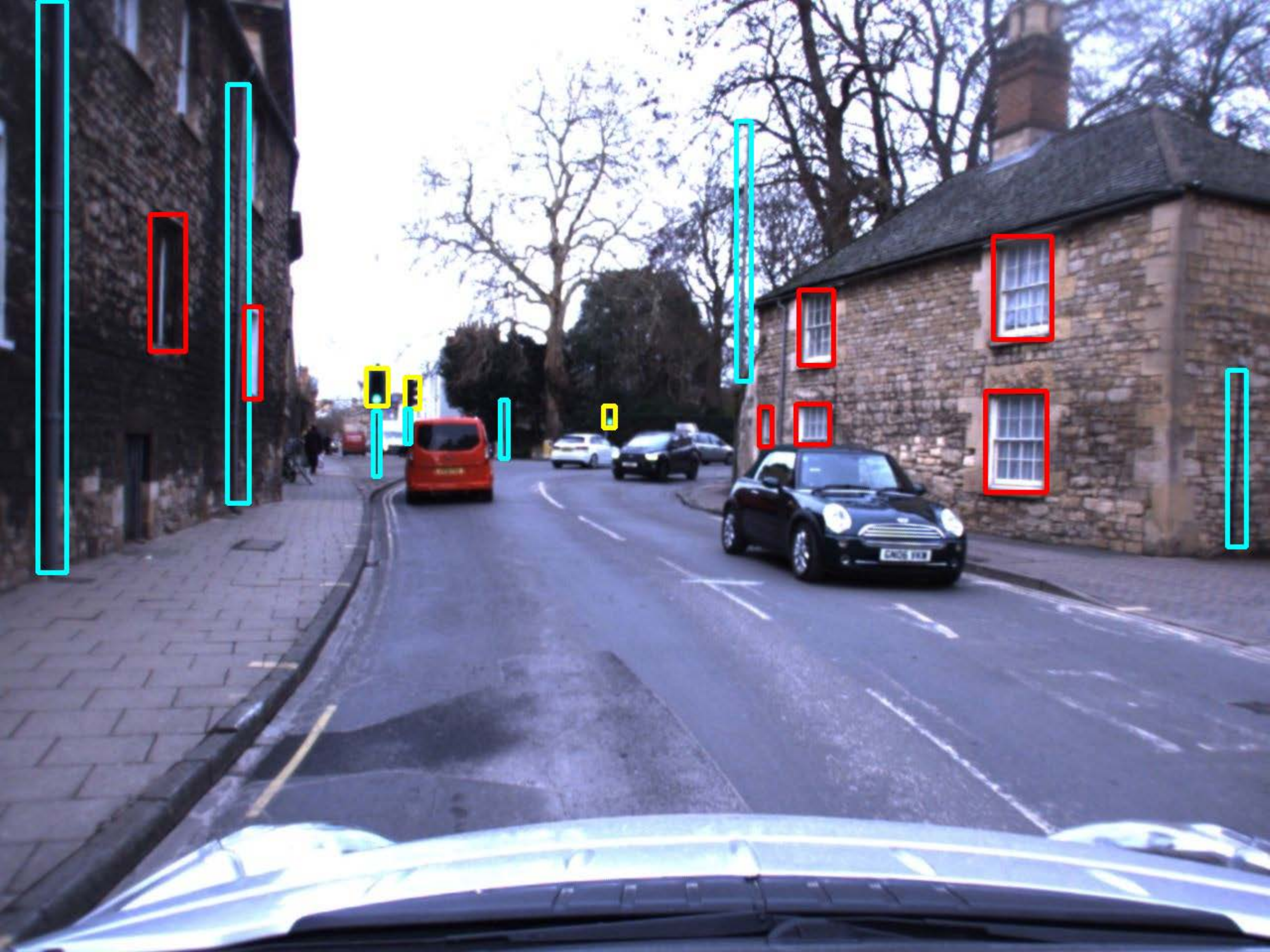}\hfill
\includegraphics[width=0.5\linewidth, height=0.12\textheight]{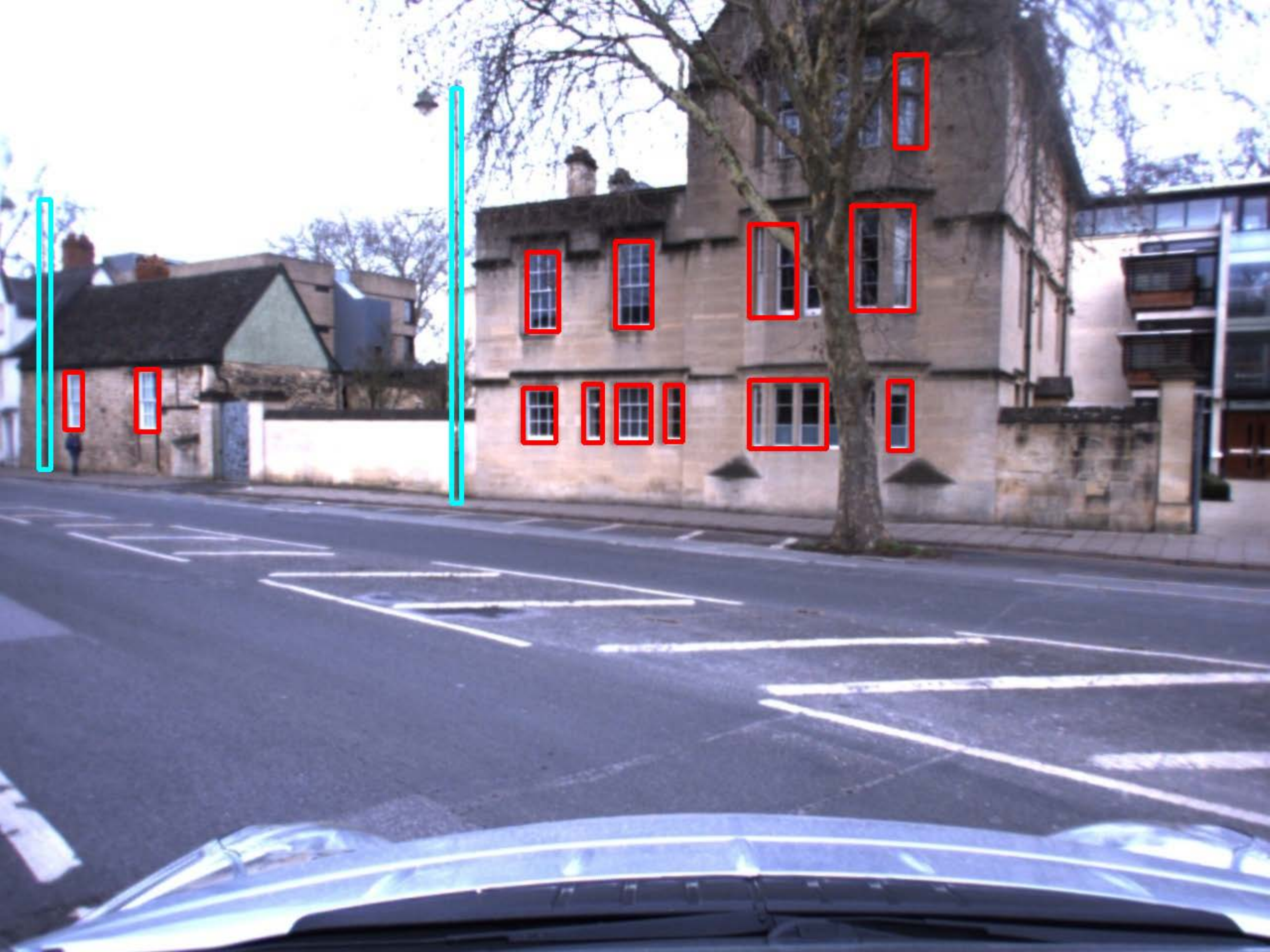}\hfill
\includegraphics[width=0.5\linewidth, height=0.12\textheight]{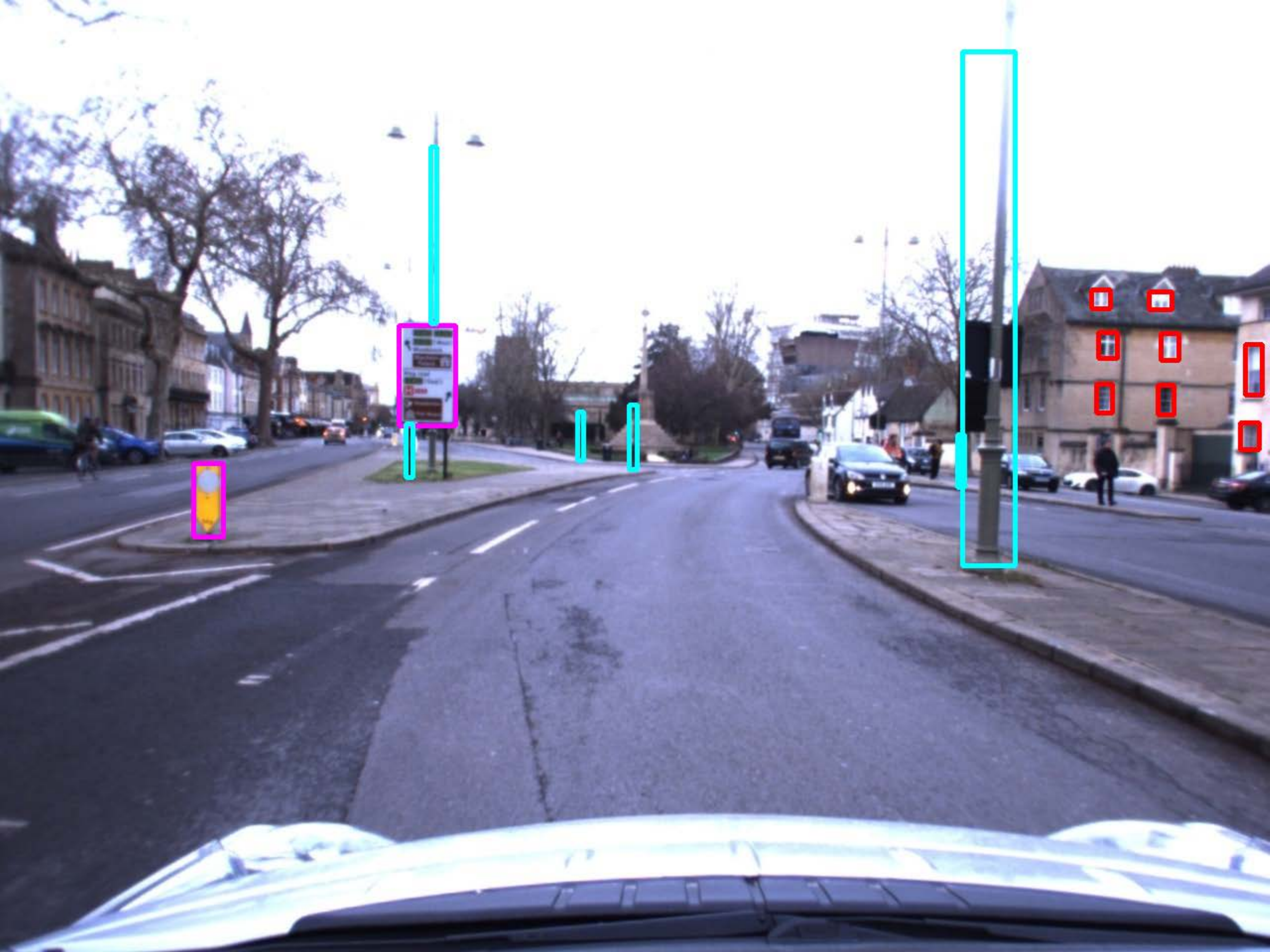}\hfill
\includegraphics[width=0.5\linewidth, height=0.12\textheight]{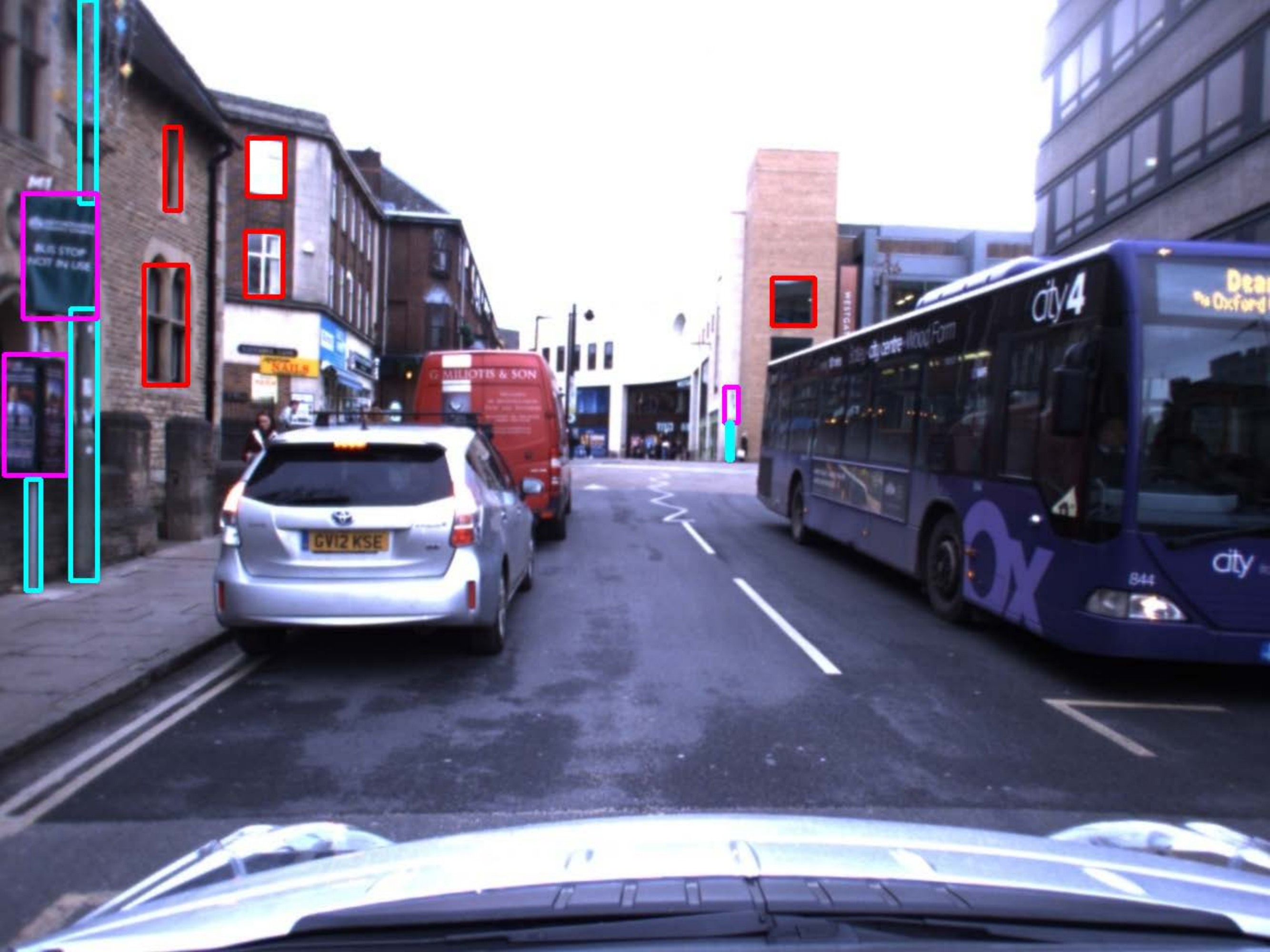}\hfill
\vspace{-0.3cm}
\caption{Several examples of the ground truth landmark bounding box labels for the Oxford Radar RobotCar dataset.}
\label{fig:label_oxford}
\end{figure}

\begin{figure}[!htb]
\centering
\begin{center}
\includegraphics[width=0.9\linewidth, height=0.4\textheight]{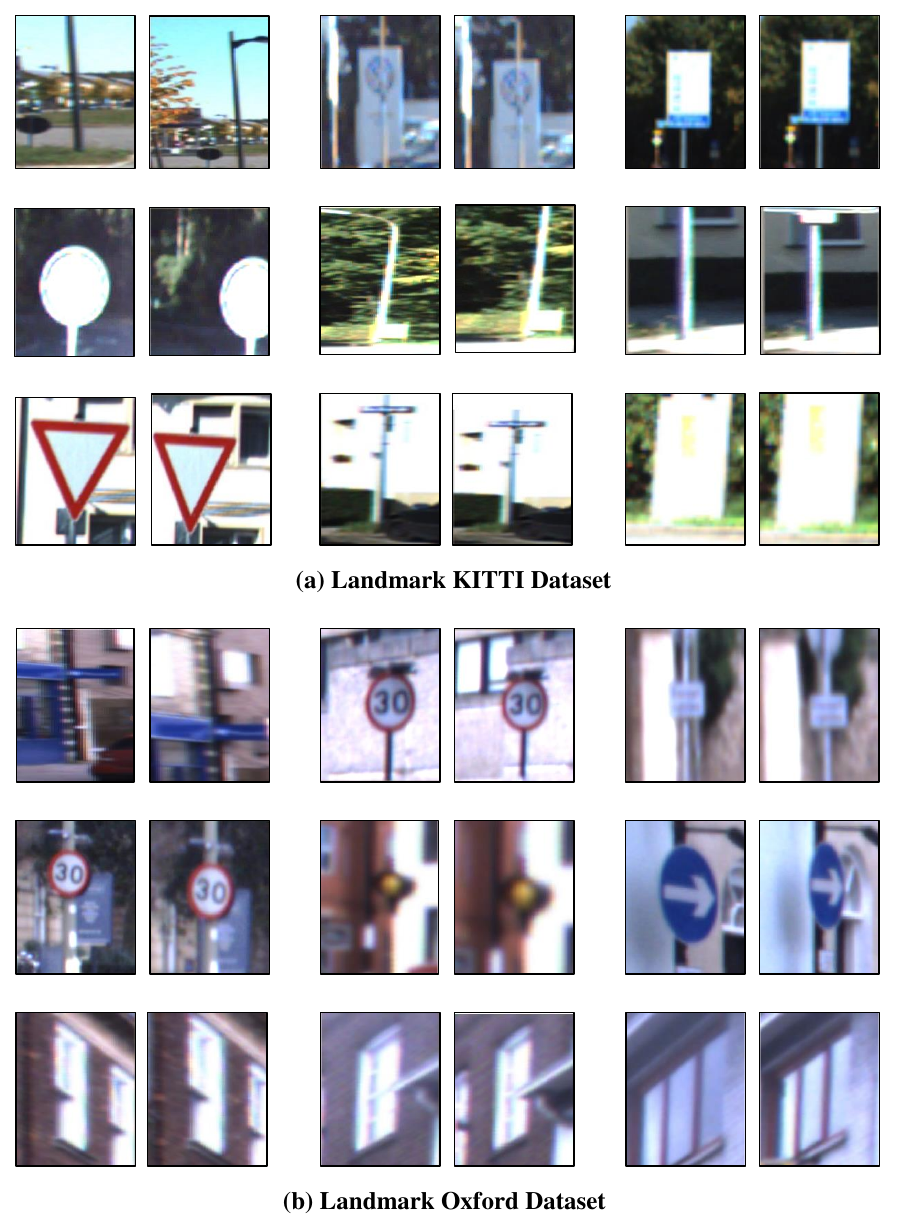}
\end{center}
\vspace{-0.5cm}
\caption{(a) and (b) are examples of landmark patch pairs from the Landmark KITTI dataset and the Landmark Oxford Dataset respectively.}
\label{fig_image_patches_pair}
\end{figure}

\begin{table*}[!htb]
%\scriptsize
\caption{Neural network models and the parameters in the image patch matching framework.}
\label{model-setting}
\begin{center}
\newcommand{\tabincell}[2]{\begin{tabular}{@{}#1@{}}#2\end{tabular}}
%\setlength{\tabcolsep}{10pt}
%\renewcommand{\arraystretch}{1.5}
%\resizebox{\textwidth}{18mm}{
\begin{tabular}{cclc}
%\toprule\toprule
\hline\hline
{\bf Mapping}  & {\bf Models} & {\bf Layers (model parameters) } 
& \tabincell{c}{{\bf Dimension} {\bf of Outputs} }
\\ \hline%\midrule
$f$ & Resnet            & \tabincell{l}{Resnet18 (without the last FC layer, 
                                        with 17 convolution layers)\\}              
                        & 512 \\ 
%\midrule
\hline
\multirow{1}{*}{$g$} 
    & \tabincell{c}{GAT/ \\
    GCN/ \\
    GraphSAGE \\}       & \tabincell{l}{GAT block 1 (4 attention heads, 
                                        4 $\times$ 128 hidden features \& ELU) \\
                                        GAT block 2 (4 attention heads, 
                                        4 $\times$ 128 hidden features \& ELU)/ \\
                                        GCN block 1 (512 hidden features \& ReLU) \\
                                        GCN block 2 (512 hidden features \& ReLU)/ \\
                                        GraphSAGE block 1 ([512, 512] hidden features,
                                        ReLU \& BatchNorm) \\
                                        GraphSAGE block 2 (512 hidden features) \\
                                        } 
                        & \tabincell{l}{512\\ 512\\ \ \\ \ \\ \ \\ \ \\}\\ 
%\midrule
\hline
%   & GCN                & \tabincell{l}{GCN block 1 (512 hidden features \& ReLU) \\
%                                        GCN block 2 (512 hidden features \& ReLU) \\} 
%                        & \tabincell{l}{512\\ 512\\}\\ %\midrule
%   & GraphSAGE          & \tabincell{l}{GraphSAGE block 1 (512 hidden features \& ReLU) \\
%                                        GraphSAGE block 2 (512 hidden features \& ReLU) \\} 
%                        & \tabincell{l}{512\\ 512\\}\\ \midrule
$d$ & Discrimiator      & \tabincell{l}{Bilinear layer (four 512 $\times$ 512 hidden partitioned matrices
                                        \& Sigmoid function)\\} 
                        & \tabincell{l}{1\\} \\
%\bottomrule\bottomrule
\hline\hline
\end{tabular}%}
\end{center}
\vspace{-0.5cm}
\end{table*}

\textbf{Landmark KITTI Dataset.}
The segmentation labels are semantic segmentation masks. To perform landmark object detection, we need to first convert the semantic segmentation labels to object bounding box labels. We use the skimage.measure.label to label connected regions for pixel classes including traffic lights, traffic signs and poles. See \cref{fig:label_kitti} for an example. In some rare cases, multiple poles may overlap and the connected region algorithm outputs an inaccurate bounding box. 
We manually exclude these overlapped objects in the generated bounding box labels. As mentioned above, Faster R-CNN trained using the labels is used to produce the object detection results for all the other unlabeled frames contained in the dataset.

We project the surrounding LiDAR points onto the image frame plane using the intrinsic camera matrix and extrinsic camera matrix. Here, we have used the sensors' information (i.e., vehicle global ground truth locations) to accumulate collected LiDAR scans to build the $3$D LiDAR reference map. Due to the limited LiDAR field of view, a single LiDAR scan may not have any LiDAR point corresponding to some landmarks. To avoid this, we build a unified $3$D LiDAR reference map similar to that in PointNetVLAD. 
Based on the $3$D reference map, the LiDAR points reflected from the landmark patch are read out to obtain the global locations of the corresponding landmark objects. We apply DBSCAN to filter out some outlier points and obtain compact landmark objects. We then compute the $\mathcal{L}_2$ distance of each landmark patch pair from two frames to determine the patch matching ground truth. We have also gone through all the frames manually to remove or correct a few noisy landmark objects.
Finally, for each detected landmark object, we intentionally expand its bounding box by $15$ pixels on each side to include some background information. See \cref{fig_image_patches_pair} for an example. 

\textbf{Landmark Oxford Dataset.} 
To build the Landmark Oxford dataset, we manually labeled landmarks including traffic lights, traffic signs, poles, and facade windows for $500$ frames. See \cref{fig:label_oxford} for examples. Compared with the Landmark KITTI Dataset, we additionally include the window class in this dataset. (Window labels are not available for the Landmark KITTI Dataset yet. We will enrich the Landmark KITTI Dataset with window labels in future work.)
We then train Faster R-CNN to obtain the landmarks for all $29,687$ frames.
Similar operations are performed to obtain the final landmark patches with matching ground truths. See \cref{fig_image_patches_pair} for some landmark patch examples.

\subsection{Detailed Model Parameters}\label{app.Model_setting}

The details of the model setting mentioned in \cref{sect:experiment} of the paper are provided in the following \cref{model-setting}.

\subsection{Monocular Depth Estimation for VGIDM}\label{app.depth}

In our work, we assume the spatial information of the segmentation is available to construct the neighborhood graphs in VGIDM.
In \cref{sect:exper} of the paper, we perform evaluations on the two landmark datasets where Monocular Depth Prediction Module is used to obtain the spacial relationships among landmark patches contained in full-size images. The reported AbsRel of this depth estimation method is around $14$ meters. \cref{fig_depth_estimation_kitti} shows a few examples of the predicted depth. We observe that many objects in the predicted depth visualization are well distinguished from their surroundings. 

\begin{figure}[!htb]
%\centering
\begin{center}
\includegraphics[width=1.02\linewidth]{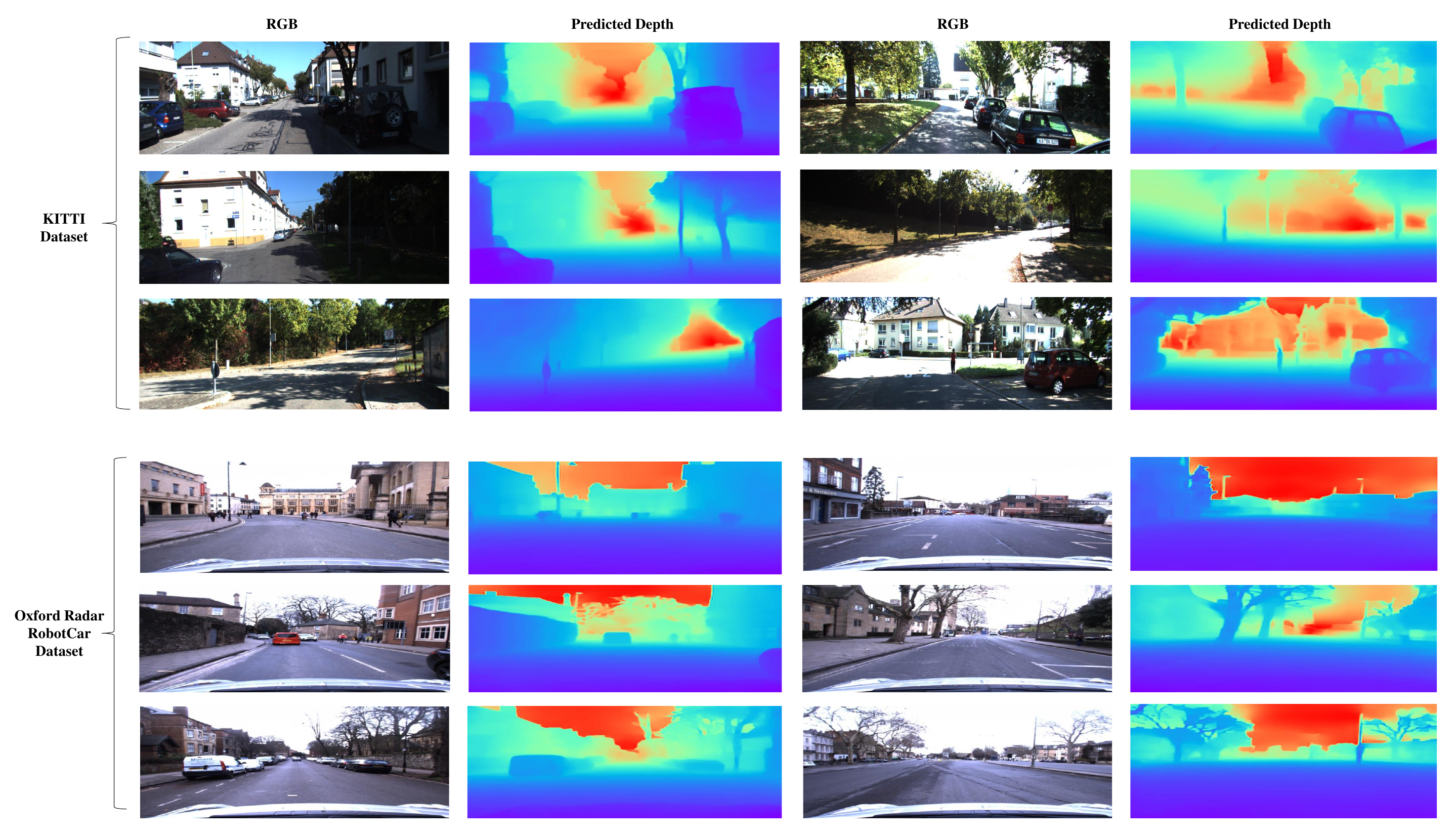}
\end{center}
\vspace{-0.5cm}
\caption{Image depth estimation results from the Monocular Depth Prediction Module for the KITTI dataset and the Oxford Radar RobotCar Dataset.}
\label{fig_depth_estimation_kitti}
\vspace{-0.3cm}
\end{figure}

\vfill

\end{document}